\def\vec{\mathop{\text{vec}}}
\def\svec{\mathop{\text{svec}}}
\def\smat{\mathop{\text{smat}}}
\newcommand{\minimize}{\mathop{\mathrm{minimize}}}
\newcommand{\tr}{\mathop{\mathrm{tr}}}
\def\real{{\mathbb{R}}}
\def\R{{\real}}
\newtheorem{assumption}{Assumption}
\newtheorem{theorem}{Theorem}
\newtheorem{lemma}{Lemma} 
\newtheorem{proposition}{Proposition}
\newtheorem{definition}{Definition}
\def\##1\#{\begin{align}#1\end{align}}
\def\$#1\${\begin{align*}#1\end{align*}}
\DeclareMathOperator{\ind}{\mathds{1}}  
\newcommand{\diag}{{\rm diag}}
\newcommandx{\unsure}[2][1=]{\todo[linecolor=red,backgroundcolor=red!25,bordercolor=red,#1]{#2}}
\newcommandx{\change}[2][1=]{\todo[linecolor=blue,backgroundcolor=blue!25,bordercolor=blue,#1]{#2}}
\newcommandx{\info}[2][1=]{\todo[linecolor=OliveGreen,backgroundcolor=OliveGreen!25,bordercolor=OliveGreen,#1]{#2}}
\newcommandx{\improvement}[2][1=]{\todo[linecolor=Plum,backgroundcolor=Plum!25,bordercolor=Plum,#1]{#2}}
\begin{document}

\title{Natural Actor-Critic Converges Globally for\\ Hierarchical Linear Quadratic Regulator}

\author{\name Yuwei Luo \email yuweiluo@stanford.edu \\
       \addr Graduate School of Business\\
        Stanford University\\
	   Stanford, CA 94305, USA
       \AND
       \name Zhuoran Yang \email zy6@princeton.edu \\
       \addr Department of Operations Research and Financial Engineering\\
       Princeton University\\
       Princeton, NJ 08544, USA
       \AND
       \name Zhaoran Wang \email zhaoran.wang@northwestern.edu \\
		\addr Department of Industrial Engineering and Management Sciences\\
		Northwestern University\\
		Evanston, IL 60208, USA
       \AND		
		\name Mladen Kolar \email mkolar@chicagobooth.edu \\
		\addr The University of Chicago Booth School of Business\\
		Chicago, IL 60637, USA  
}

\editor{}

\maketitle

\begin{abstract}
  Multi-agent reinforcement learning has been successfully applied to a number of challenging problems. Despite these empirical successes, theoretical understanding of different algorithms is lacking, primarily due to the curse of dimensionality caused by the exponential growth of the state-action space with the number of agents. We study a fundamental problem of multi-agent linear quadratic regulator (LQR) in a setting where the agents are partially exchangeable. In this setting, we develop a hierarchical actor-critic algorithm, whose computational complexity is independent of the total number of agents, and prove its global linear convergence to the optimal policy. As LQRs are often used to approximate general dynamic systems, this paper provides an important step towards a better understanding of general hierarchical mean-field multi-agent reinforcement learning.
\end{abstract}

\begin{keywords}
  Markov Decision Process, Linear Quadratic Regulator, Actor-Critic Algorithms, Multi-Agent Reinforcement Learning, Mean-Field Reinforcement Learning
\end{keywords}

\section{Introduction}

Multi-agent reinforcement learning (MARL) \citep{bu2008comprehensive} combined with deep neural networks has recently been applied successfully to problems ranging from self-driving cars \citep{shalev2016safe} and robotics \citep{yang2004multiagent} to E-sports \citep{alphastarblog,OpenAI_dota} and Go \citep{silver2016mastering,silver2017mastering}.  Despite promising empirical results in few specific domains, MARL remains challenging both in theory and practice as the state-action space grows exponentially with the number of agents \citep{menda2019deep}.  This curse of dimensionality makes developing computationally tractable and statistically consistent procedures difficult.  When the agents are homogeneous, the curse of dimensionality can be avoided by exploiting symmetries in the problem, which gives rise to mean-field multi-agent reinforcement learning \citep{huang2012social,carmona2013control,fornasier2014mean}. Mean-field algorithms rely on the assumption that the agents are exchangeable, in which case the optimal policy that maximizes the expected total reward symmetrically decomposes across agents.  As a consequence, the optimal policy can be found by solving a single-agent reinforcement learning problem while additionally accounting for the mass effect induced by all other agents, which is summarized by a mean field. Through this reduction to a single-agent reinforcement learning problem one can again obtain computationally tractable procedures for which the statistical error does not grow exponentially with the number of agents \citep{yang2018mean}.

The assumption that the agents are exchangeable is often violated in practical problems, such as real-time strategy gaming with different kinds of units \citep{OpenAI_dota, alphastarblog} and urban traffic control (UTC) with heterogeneous junctions \citep{el2012multi, chu2017traffic}, which makes practical application of the mean-field algorithms difficult. One approach to relaxing the exchangeability assumption is through the notion of \emph{partial exchangeability} \citep{arabneydi2016linear}, which allows for the exploitation of the symmetry among possibly heterogeneous agents. The key to partial exchangeability is the hierarchical structure of agents, which is often observed in practice.  Within a subpopulation of exchangeable agents, the symmetry is exploited as in mean-field multi-agent reinforcement learning through decoupling, while the heterogeneity across different subpopulations of agents is accounted for by tracking multiple mean fields. In particular, within each subpopulation of agents, it suffices to solve a single-agent reinforcement learning problem. Due to partial exchangeability, one can escape the curse of dimensionality, while allowing for the heterogeneity among agents.

Our contribution is three-fold. First, we motivate the hierarchical LQR model via decoupling the dynamics of agents in the system with the notion of partial exchangeability, thus decomposing the multi-agent LQR control problem into computationally tractable control problems on subpopulation systems. Second, we extend MARL approaches for the hierarchical LQR model by proposing a hierarchical actor-critic algorithm that is model-free, with computational complexity independent of the number of agents in each subpopulation, thus breaking the curse of dimensionality. Third, we establish non-asymptotic global rate of convergence of our algorithm for the multi-agent LQR control problem, which is fundamental in MARL and optimal control. 

\subsection{Related Work}
We contribute to several strands of the literature, including the development of actor-critic algorithms, LQRs,  distributed control, and mean-field multi-agent reinforcement learning.  

Our algorithm belongs to the family of actor-critic algorithms. \citet{konda1999actor} proposed the first actor-critic algorithm, which was later extended to the natural actor-critic algorithm \citep{peters2008natural} using the natural policy gradient \citep{kakade2002natural}.  Convergence analysis of actor-critic and natural actor-critic algorithms with linear function approximation was studied in \citet{kakade2002natural}, \citet{bhatnagar2009natural}, \citet{bhatnagar2008incremental}, \citet{castro2010convergent}, and \citet{bhatnagar2010actor}. Compared to the policy gradient algorithm \citep{williams1992simple}, the online (critic) update of the action-value function in an actor-critic algorithm reduces the variance of the policy gradient and leads to faster convergence, which was rigorously shown for LQR problem \citep{yang2019global}. Due to its favorable properties, in this paper, we develop a hierarchical natural actor-critic algorithm for the multi-agent LQR setting and establish linear global convergence to the optimal policy.

We establish our theoretical results in the setting of LQR, which is a fundamental problem in reinforcement learning and optimal control problems.   In LQR problems, the dynamics is approximated by a linear function and the cost is approximated by a quadratic loss.  LQR serves as a powerful model for optimal control problems and has achieved tremendous success in real-world problems such as Unmanned Aerial Vehicle (UAV) \citep{zhi2017flight,setyawan2019linear} and Power Grids \citep{minciardi2011optimal, vinifa2016linear}. In terms of theory, the optimal policy in the LQR setting takes a linear form \citep{zhou1996robust, 	anderson2007optimal, bertsekas2012dynamic} and a number of properties of reinforcement learning algorithms were established in this setting \citep{bradtke1993reinforcement, recht2018tour, tu2017least, tu2018gap, dean2018regret, dean2018safely, simchowitz2018learning, dean2017sample, hardt2018gradient}.  See \citet{recht2019tour} for a recent review. We contribute to this literature by studying the multi-agent LQR problem with partial exchangeability. In particular, our convergence analysis is inspired by the optimization landscape of LQR characterized by \citet{fazel2018global}, where they show the global convergence of policy gradient algorithm, and the global convergence analysis of the natural actor-critic algorithm established in the single agent LQR problem \citep{yang2019global}. Instead, we establish the global convergence of actor-critic algorithm for the multi-agent LQR control problem, while at the same time still being computationally tractable.

We further contribute to the literature on MARL in the framework of Markov games \citep{littman1994markov}.  A number of authors have tried to address the curse of dimensionality in MARL. \citet{wang2003reinforcement} and \citet{arslan2016decentralized} assumed that the rewards are identical among agents, and, as a result, no interaction needs to be considered. Linear function approximation methods were studied in \citet{lee2018primal} and \citet{zhang2018fully}, while function approximation with deep neural networks was explored in \citet{foerster2016learning}, \citet{gupta2017cooperative}, \citet{lowe2017multi}, \citet{omidshafiei2017deep}, and \citet{foerster2017stabilising}. These papers primarily focus on the empirical performance of the algorithms or establish asymptotic results, leaving the theoretical understanding and rigorous convergence analysis for MARL largely open.

Model-based approaches to the mean-field approximation require knowledge of model parameters \citep[see, for example,][]{elliott2013discrete, arabneydi2015team, 	arabneydi2016linear, li2017class}, while model-free methods \citep{yang2017learning,yang2018mean} only come with algorithms with asymptotic analysis.  In contrast, our method is model-free and comes with provable global non-asymptotic convergence analysis.  Mean-field MARL problem in a collaborative setting, known as team games \citep{tan1993multi, panait2005cooperative, wang2003reinforcement, claus1998dynamics}, can be regarded as a centralized mean-field control problem \citep{huang2012social, carmona2013control, fornasier2014mean} with infinitely many homogeneous agents. Our work extends this model by allowing potential heterogeneity among agents.

Our method is also related to distributed control problems.  To escape the curse of dimensionality in large-scale systems, a sequence of papers on distributed control assume homogeneous agents. Specifically,  \citet{borrelli2008distributed} studied identical and decoupled dynamics, \citet{massioni2009distributed, deshpande2012sub, alemzadeh2019distributed} focused on identical agents with coupling due to interconnection of the subsystems either through dynamics or common goals. These works differ from our setting in that they do not allow heterogeneity among agents, while our partially heterogeneous system includes a pure homogeneous system as a special case. \citet{vlahakis2018distributed} and \citet{sturz2020distributed} recently studied multi-agent systems with heterogeneous subsystems under some structural assumptions. However, their methods are model-based and, hence, are less likely to be extended to more general settings. In the context of distributed control, our work contributes to the development of model-free methods for multi-agent systems with heterogeneity. 

Our method relies on the notion of partial exchangeability \citep{arabneydi2016linear}, where they studied multi-agent LQR problems with partial exchangeability to exploit the symmetries in the problem.  A different notion with the same name was used to construct the joint state-action statistic, which can be combined with individual state-action pairs to predict the agent's next state \citep{nguyen2018credit}.  As such, it can be viewed as a generalization of the homogeneity of all agents. In contrast, partial exchangeability in our work assumes homogeneity within each subpopulation, thus allowing for heterogeneity across different subpopulations of agents. 
\citet{rahmani2009controllability}  explored how homogeneity and symmetry are related to the controllability of multi-agent systems by introducing network equitable partitions. They focus on Laplacian-based dynamics on the graph, which results in a special case of linear dynamical systems.

\subsection{Notation}\label{sec:notation}
For a vector $v$, we use $\|v\|_2$ to denote its $\ell_2$-norm. For a matrix $A$, we denote by $\|A\|$ and $\|A\|_{\mathrm{F}}$ its operator norm and Frobenius norm respectively. For a square matrix $X$, we use $\sigma_{\min}(X)$ and $\rho(X)$ to denote its minimal singular value and spectral radius respectively. For vectors $x$, $y$ and $z$, we denote by $\vec(x,y,z)$ the vector obtained by stacking all the vectors, i.e. $\left[x^{\top}, y^{\top}, z^{\top}\right]^{\top}$. We adopt the notation $A\succ 0$ ($A  \succeq 0$) for symmetric positive definite (positive semi-definite) matrix $A$, and $A\prec 0$ ($A  \preceq 0$) for symmetric negative definite (negative semi-definite) matrix $A$. We denote by $\operatorname{rows}(A, B, C)$ the matrix $\left[A^{\top}, B^{\top}, C^{\top}\right]^{\top}$ for matrices $A$, $B$ and $C$ with the same number of columns. Also, we denote by $\operatorname{cols}(A, B, C)$ the matrix $\left[A, B, C\right]$ for matrices $A$, $B$ and $C$ with the same number of rows. For a symmetric matrix $Z$, we denote by $\svec(Z)$ the vectorization of the upper triangular submatrix of $Z$, with the off-diagonal entries weighted by $\sqrt{2}$. The inverse operation is denoted by $\smat(\cdot)$.

\section{Background}
\label{section:background}

We provide the necessary background in this section. In \S\ref{sec:act_crit}, we describe actor-critic algorithm. LQR is introduced in \S\ref{sec:lqr}. \S\ref{sec:multi-agent} presents multi-agent reinforcement learning.

\subsection{Actor-Critic Algorithm}
\label{sec:act_crit}

In reinforcement learning, a system is described by a Markov decision process $\left\{\mathcal{X}, \mathcal{U}, c, \mathcal{T},\mathcal{D}_0 \right\}$, where starting with initial state $x_0\sim\mathcal{D}_0$, at each time step, an agent interacts with the environment by selecting an action $u_t \sim \pi(\cdot \mid x_t)\in \mathcal{U}$ based on its current state $x_t \in \mathcal{X}$.  Then the environment gives feedback with cost $c(x_t,u_t)$, and the agent moves to the next state by the transition kernel $x_{t+1} = \mathcal{T}(x_t,u_t)$. The agent aims to find the policy that minimizes the expected time-average cost 
\begin{equation}
\label{eq:expected_cost}
C(\pi) = \limsup _{T \rightarrow \infty} T^{-1} \sum_{t=0}^{T-1} \mathbb{E}_{x_0 \sim \mathcal{D}_{0}, u_t  \sim \pi(\cdot \mid x_t)}\left[c\left(x_{t}, u_{t}\right)\right].
\end{equation} 
Given any policy $\pi$, the action- and state-value functions are defined, respectively, as 
\begin{align*}
Q_{\pi}(x, u)&=\sum_{t } \mathbb{E}_{u_t  \sim \pi(\cdot \mid x_t)}\left[c\left(x_{t}, u_{t}\right) -C(\pi) \mid x_{0}=x, u_{0}=u\right], \\
V_{\pi}(x)&=\mathbb{E}_{u \sim \pi(\cdot|x)}\left[Q_{\pi}(x, u)\right].   
\end{align*}
In practice, the policy $\pi$ is parameterized as $\pi_\theta$. We denote the corresponding cost and action-value functions by $C(\theta)$ and $Q_\theta$, respectively.  By the policy gradient theorem \citep{sutton1999policy, baxter2001infinite},  for any MDP and any differentiable policy $\pi_{\theta}$, the gradient with respect to the parameter $\theta$ can be computed as
\[
\nabla_{\theta} C(\theta)=\mathbb{E}_{x \sim \mathcal{D}_{\theta},
	u \sim \pi_\theta(\cdot \mid x)}\left[\nabla_{\theta} \log \pi_\theta(u \mid x) \cdot Q_{\theta}(x, u)\right],
\]
where $\mathcal{D}_{\theta}$ is the stationary distribution of the Markov chain $\{x_t\}_{t\geq0}$ under policy $\pi_\theta$.

An actor-critic algorithm consists of a critic step that approximates the action-value function $Q_{\theta}$ with a parameterized function $Q(\cdot,\cdot,\omega)$ by estimating the parameter $\omega$, and an actor step where the policy $\pi_\theta$ is updated with a stochastic version of the policy gradient.  The natural actor-critic algorithm \citep{peters2008natural} updates the policy with the natural policy gradient $\left[\mathcal{I}_{\theta}\right]^{-1} \nabla_{\theta} C(\theta)$ \citep{kakade2002natural}, where
\[
\mathcal{I}_{\theta} = \mathbb{E}_{x \sim \mathcal{D}_{\theta}, u \sim
	\pi_\theta(\cdot \mid x)}\left[\nabla \log \pi_\theta(u \mid x) \cdot \nabla
\log \pi_\theta(u \mid x)^{\top}\right]
\]
is the Fisher information of the policy $\pi_\theta$.

\subsection{Linear Quadratic Regulator}
\label{sec:lqr}

LQR is a fundamental problem in optimal control. In reinforcement learning, the LQR setting is used to develop theoretical understanding of different methods and hence serves as a performance benchmark \citep{fazel2018global,tu2017least,hardt2018gradient}.  The state space is specified as $\mathcal{X} = \mathbb{R}^{d}$ and action space as $\mathcal{U} = \mathbb{R}^{k}$.  The transition dynamics takes a linear form and the cost function takes a quadratic form, specified by 
\begin{equation}
\label{eq:LQR} x_{t+1}=A x_{t} + B u_{t} + w_t, \quad
c\left(x_{t}, u_{t}\right)=x_{t}^{\top} Q x_{t}+u_{t}^{\top} R u_{t},
\end{equation}
where  $A\in \real^{ d\times d}$, $B \in \real^{ d\times k} $, $Q\in \real^{ d\times d}$ and $R\in \real^{k\times k}$ are matrices of appropriate dimensions, the noise $w_t \sim N(0,\Phi)$, and
$Q, R, \Phi \succ 0$.

The policy $\pi^*$ that minimizes $C(\pi)$ in \eqref{eq:expected_cost} is deterministic and static, taking the linear form \citep{anderson2007optimal} $\pi^*(x_t) = -K^* x_t$, where
\[
K^*  = \left(R+B^{\top} P^{*} B\right)^{-1} B^{\top} P^{*} A,
\]
and $P^*$ is the solution to a discrete algebraic Riccati equation \citep{zhou1996robust}.  In the model-free setting, where reinforcement learning methods do not have access to model parameters, it is known that policy gradient \citep{fazel2018global,tu2018gap,malik2018derivative} and actor-critic \citep{kakade2002natural} are guaranteed to find the optimal policy $\pi^*$.

\subsection{Multi-Agent Reinforcement Learning}
\label{sec:multi-agent}

A multi-agent system with the set of agents $\mathcal{N}$ can be described by a Markov Decision Process (MDP) characterized by the tuple $\left(\mathcal{X}, \{\mathcal{U}^i\}_{i\in \mathcal{N}}, \mathcal{T}, \{c_i\}_{i\in \mathcal{N}}\right)$ \citep{littman1994markov}, where each agent in the system takes an individual action $u_i \in \mathcal{U}^i$, observes its individual cost $c_i:\mathcal{X}\times \mathcal{U}\rightarrow \mathbb{R}$, and moves to the next state by the global transition kernel $\mathcal{T}:\mathcal{X}\times \mathcal{U}\rightarrow \mathcal{X}$, with $\mathcal{U} = \prod_i \mathcal{U}^i$ denoting the joint action space.

We are interested in the team optimal control problem where the goal is to find a parameterized policy $\pi=\pi_\theta:\mathcal{X}\times \mathcal{U} \rightarrow \mathcal{U}$ that minimizes the global total expected time-average cost of all agents, defined by
\begin{equation*}
\minimize_{\theta} \ C(\theta)=\limsup _{T \rightarrow \infty} T^{-1} \sum_{t=0}^{T-1} \mathbb{E}\left[ \sum_{i \in \mathcal{N}}c_i\left(\mathbf{x}_{t}, \mathbf{u}_{t}\right)\right]=\limsup _{T \rightarrow \infty} T^{-1} \sum_{t=0}^{T-1} \mathbb{E}\left[ c_{gt}\left(\mathbf{x}_{t}, \mathbf{u}_{t}\right)\right].
\end{equation*}
Here $c_{gt}\left(\mathbf{x}_{t}, \mathbf{u}_{t}\right) = \sum_{i \in 	\mathcal{N}}c_i\left(\mathbf{x}_{t}, \mathbf{u}_{t}\right)$ is the global total cost, while $\mathbf{x}_t = ({x_t^1, \dots ,x_t^{\left|\mathcal{N}\right|}})$ and $\mathbf{u}_t = ({u_t^1, \dots ,u_t^{\left|\mathcal{N}\right|}})$ denote the joint state and action tuples, respectively.  The corresponding action-value function is defined as
\begin{equation*}
Q_{\theta}(\mathbf{x}, \mathbf{u})=\sum_{t } \mathbb{E}_{ \theta}\left[c_{gt}\left(\mathbf{x}_{t}, \mathbf{u}_{t}\right) -C(\theta) \mid \mathbf{x}_{0}=\mathbf{x}, \mathbf{u}_{0}=\mathbf{u}\right]
\end{equation*}
and the state-value function is given by $V_{\theta}(\mathbf{x})=\mathbb{E}_{\mathbf{u} \sim \pi_\theta(\cdot \mid 	\mathbf{x})}\left[Q_{\theta}(\mathbf{x},\mathbf{u})\right]$.  The action- and state-value functions are coupled across agents since the transition dynamics and costs depend on the joint state and action of the entire system. As the number of agents increases, it becomes infeasible to learn $Q_{\theta}(\cdot, \cdot)$ due to the coupling structure and the exponentially increasing interactions.

\section{Hierarchical Mean-Field Multi-Agent Reinforcement Learning}
\label{section:hmrl}

We consider the setting where the population of agents $\mathcal{N}$ satisfies the property that it can be partitioned into disjoint subpopulations $\mathcal{N} = \bigsqcup_l^L \mathcal{N}^l$, such that agents within each subpopulation are \emph{exchangeable} (see Definition~\ref{def:exchangeability}). We introduce \emph{Hierarchical Mean-Field Multi-Agent Reinforcement Learning}, where we approximate the interactions of agents by each agent interacting with mean-field effects of subpopulations, as a way for accounting for the heterogeneity of agents and dealing with the curse of dimensionality. In \S\ref{sec:partial_exchangeability}, we define system with \emph{partial exchangeability}. \emph{Hierarchical Actor-Critic 	Algorithm} is introduced in~\S\ref{sec:hierarchical_ac}.

\subsection{Partial Exchangeability}
\label{sec:partial_exchangeability}

Consider a multi-agent dynamical system with agents $\mathcal{N}$. The state, action, and noise spaces for each agent are specified by $\mathcal{X}^i=\mathbb{R}^{d_i}$, $\mathcal{U}^i=\mathbb{R}^{k_i}$, and $\mathcal{W}^i=\mathbb{R}^{d_i}$ respectively.  Let $\mathbf{x}_{t}=\operatorname{vec}\left(\left({x}_{t}^{i}\right)_{i 	\in \mathcal{N}} \right)$, $\mathbf{u}_{t}=\operatorname{vec}\left(\left({u}_{t}^{i}\right)_{i 	\in \mathcal{N}} \right)$, and $\mathbf{w}_{t}=\operatorname{vec}\left(\left({w}_{t}^{i}\right)_{i	\in \mathcal{N}} \right)$ denote  the global state, action, and noise vectors of the whole system at time $t$ respectively.  The system transition dynamics is given by
\begin{equation}
\label{eq:general_dymcs}
\mathbf{x}_{t+1}=f\left(\mathbf{x}_{t}, \mathbf{u}_{t},
\mathbf{w}_{t}\right).
\end{equation}
Let $c_{t}\left(\mathbf{x}_{t}, \mathbf{u}_{t}\right)$ denote the per-step cost at time $t$ and $\sigma_{i,j}$ denote the permutation transformation. For example, $\sigma_{1,3}((x_1,x_2,x_3)) = (x_3,x_2,x_1)$. We first give the definition of \emph{partial exchangeability}
\citep{arabneydi2016linear}.
\begin{definition}[Exchangeable agents]
	\label{def:exchangeability}
	A pair $(i,j)$ of agents is called \emph{exchangeable} if 	$\mathcal{X}^{i}=\mathcal{X}^{j}$, 	$\mathcal{U}^{i}=\mathcal{U}^{j},$ and $\mathcal{W}^{i}=\mathcal{W}^{j}$. That is, the dimensions of state, action, and disturbance spaces are the same. Moreover, the dynamics and cost satisfy
	\begin{align*}
	f\left(\sigma_{i,j}\mathbf{x}_{t}, \sigma_{i,j}\mathbf{u}_{t}, \sigma_{i,j} \mathbf{w}_{t}\right) =
	&\sigma_{i,j}(f\left(\mathbf{x}_{t}, \mathbf{u}_{t}, \mathbf{w}_{t}\right)), \\
	c\left(\sigma_{i,j} \mathbf{x}_{t}, \sigma_{i,j} \mathbf{u}_{t}\right) =
	&c\left(\mathbf{x}_{t}, \mathbf{u}_{t}\right).
	\end{align*}
	That is, exchanging agents $i$ and $j$ does not affect the dynamics and cost.
\end{definition}
\begin{definition}[Multi-agent system with partial exchangeability]\label{partial_exchangeability}
A multi-agent system is called a system with \emph{partial 		exchangeability} if the agents $\mathcal{N}$ can be partitioned into $L$ disjoint exchangeable subpopulations $\mathcal{N}^{l}$, $l \in \mathcal{L} :=\{1, \ldots, L\}$, such that each pair of agents in $\mathcal{N}^{l}$ is exchangeable.
\end{definition}
Partial exchangeability assumes that agents in the system can be partitioned into subpopulations and exchanging agents in the same subpopulation does not affect the system dynamics and cost. This definition accounts for the heterogeneity among agents across subpopulations and, thus, applies to a broader range of settings compared to vanilla mean-field MARL methods, which assume homogeneous agents. With partial exchangeability, we define the mean-field of each subpopulation, which serves as a good summary of the information of that subpopulation.

\begin{definition}[Mean-fields of states and actions]\label{mean_field_states}
	The mean-field state and action of each subpopulation are defined, respectively, as the empirical means
	\[
	\bar{x}_{t}^{l} :=\frac{1}{\left|\mathcal{N}^{l}\right|} \sum_{i \in \mathcal{N}^{l}} x_{t}^{i}, \quad \bar{u}_{t}^{l} :=\frac{1}{\left|\mathcal{N}^{l}\right|} \sum_{i \in \mathcal{N}^{l}} u_{t}^{i}, \quad l \in \mathcal{L}.
	\]
	The global mean-field of the system is defined by stacking the mean-field value vectors:
	\begin{equation}
	\label{def:gl_mf}
	\bar{\mathbf{x}}_{t}:=\operatorname{vec}\left(\bar{x}_{t}^{1}, \ldots, \bar{x}_{t}^{L}\right),\quad \bar{\mathbf{u}}_{t}:=\operatorname{vec}\left(\bar{u}_{t}^{1}, \ldots, \bar{u}_{t}^{L}\right).
	\end{equation}  
\end{definition}

In \S\ref{section:main_result} we show that in the LQR setting, partial exchangeability makes the global mean-field values $\bar{\mathbf{x}}_{t}$ and $\bar{\mathbf{u}}_{t}$ sufficient to characterize the interactions of agents.

  We consider the information structure where each agent $i$ can perfectly observe its local state $x_t^i$, action ${u}_{t}^{i}$, and the global mean-field state $\bar{\mathbf{x}}_{t}$ defined in \eqref{def:gl_mf}. Furthermore, each agent can recall its entire observation history perfectly. Such an information structure is called \emph{mean-field sharing} \citep{arabneydi2016linear}. 

The assumption that each agent observes the global mean-field is called mean-field sharing. It is regarded as a \emph{non-classical} information structure in the related literature \citep{witsenhausen1971separation, ho1972team}. For LQR problems with \emph{classical} information structure, where each agent knows all observations and actions of other agents who act before it, and the \emph{partially nested} information structure, where the agents observe all observations and actions that affect its observations, the optimal controller takes a linear form when the primitive random variables are Gaussian. In general however, this is not true for non-classical information structures other than the two structures above \citep{witsenhausen1968counterexample}. Moreover, it is known that the complexity of solving the optimal control in systems with non-classical information structure belongs to NEXP complexity class \citep{bernstein2002complexity}. Therefore, we believe that this assumption is reasonable as it preserves the challenges both in the form of the optimal controller and the complexity of solving it. 	

The assumption that each agent has access to the whole history can be relaxed by assuming access to a truncated history. Note that an LQR problem with  $\rho(A-B K)<1$ has the state-action pair sequence $\left(x_{t}, u_{t}\right)_{t \geq 0}$ that is $\beta$-mixing with parameter $\rho\in (\rho(A-B K),1)$. Therefore, the system forgets the history exponentially fast. Having access to the truncation history introduces an extra term in the optimality gap that bounds the estimation error of the natural gradient. This term will be small provided that the history is sufficiently long. As a result, the convergence of our algorithm is still guaranteed. We provide additional discussion in \S\ref{global_convergence}.

\subsection{Hierarchical Actor-Critic Algorithm}
\label{sec:hierarchical_ac}

In this section, we propose the hierarchical actor-critic algorithm. We start by using partial exchangeability to decompose the original optimal control problem in a multi-agent system into optimal control problems of $L+1$ auxiliary systems: $L$ for the subpopulations, denoted as $\{\mathcal{S}_l\}_{l\in \mathcal{L}}$, and one for the mean fields, denoted as $\bar{\mathcal{S}}$. The construction of auxiliary systems relies on a coordinate transformation. We also  define the cost function of each auxiliary system. We specify them in the following definitions.

\begin{definition}[Coordinate transformation]\label{coor_trans}
	For each agent $i \in \mathcal{N}^l$, we define the coordinate transformation as
	\[
	\tilde{x}_{t}^{i} = {x}_{t}^{i} - \bar{x}_{t}^{l},\qquad \tilde{u}_{t}^{i} = {u}_{t}^{i} - \bar{u}_{t}^{l}.
	\]
	The coordinate of auxiliary system $\mathcal{S}_l$ is defined to be the tuples 	$\tilde{\mathbf{x}}^l_t = (\tilde{x}_{t}^{i})_{i \in \mathcal{N}^l}$ and 	$\tilde{\mathbf{u}}^l_t = (\tilde{u}_{t}^{i})_{i \in \mathcal{N}^l}$. For the mean-field auxiliary system $\bar{\mathcal{S}}$, the coordinate is given by the mean-field values $\bar{\mathbf{x}}_{t}$ and $\bar{\mathbf{u}}_{t}$ defined in Definition~\ref{mean_field_states}.
\end{definition}

\begin{definition}[Cost functions]\label{cost_functions}
	The global total cost of $\mathcal{S}_l$ is given by 
	\[
	\tilde{c}^l =c_{gt}(\mathbf{x}_{t},\mathbf{u}_{t})- c_{gt}(\breve{\mathbf{x}}_{t}^{l},\breve{\mathbf{u}}_{t}^{l}),
	\]
	where $\breve{\mathbf{x}}_{t}^{l}$ denotes the joint state obtained by replacing each individual state $\{x^i_t\}_{i \in \mathcal{N}^{l}}$ in $\mathbf{x}_{t}$ with the mean-field states $\bar x_t^l$, and $\breve{\mathbf{u}}_{t}^{l}$ is defined similarly. With slight abuse of notation, $\breve{\mathbf{x}}_{t}^{l}$ and $\breve{\mathbf{u}}_{t}^{l}$ are defined, respectively, by
	\[
	\label{eq:breve_trans}
	\breve{\mathbf{x}}_{t}^{l}
	= \left(\text{rep}(\bar{x}_{t}^{l},|\mathcal{N}^l|),({x}_{t}^{j})_{j \in \mathcal{N}^{-l}}\right),
	\quad
	\breve{\mathbf{u}}_{t}^{l}
	= \left(\text{rep}(\bar{u}_{t}^{l},|\mathcal{N}^l|),({u}_{t}^{j})_{j \in \mathcal{N}^{-l}}\right),
	\]
	where we use $\text{rep}(\bar{x}_{t}^{l},|\mathcal{N}^l|)$ to denote the tuple obtained by replicating the vector $\bar{x}_{t}^{l}$ $|\mathcal{N}^l|$ times and we denote by $\mathcal{N}^{-l}$ all subpopulations other than $\mathcal{N}^{l}$. The cost of $\bar{\mathcal{S}}$ is given by 
	\[
	\bar{{c}} = c_{gt}(\breve{\mathbf{x}}_{t},\breve{\mathbf{u}}_{t}),
	\] 
	where $\breve{\mathbf{x}}_{t}$ is obtained by replacing all individual states in $\mathbf{x}_{t}$ with their corresponding subpopulation mean-field states, and $\breve{\mathbf{u}}_{t}$ is	defined similarly. In particular, $\breve{\mathbf{x}}_{t}$ and	$\breve{\mathbf{u}}_{t}$ are defined, respectively, by
	\begin{equation}
	\label{eq:breve_trans2}
	\breve{\mathbf{x}}_{t} = \left(\text{rep}(\bar{x}_{t}^{l},|\mathcal{N}^l|)\right)_{l\in \mathcal{L}},
	\quad \breve{\mathbf{u}}_{t} = \left(\text{rep}(\bar{u}_{t}^{l},|\mathcal{N}^l|)\right)_{l\in \mathcal{L}}.
	\end{equation}
\end{definition}

We remark that the state-action pairs of the auxiliary systems are induced by the state-action pairs of the original system through coordinate transformation, as is shown in Definition \ref{coor_trans}. The costs of the auxiliary systems are calculated with costs of the original system, as is shown in Definition \ref{cost_functions}. In the LQR problem, the costs of the auxiliary systems can be calculated directly using matrix computation. See Proposition~\ref{lemma:decpl}. On the other hand, the policies are defined in the auxiliary systems. After choosing actions with the states and policies of the auxiliary system, we can recover the states and policies of the original system and proceed with its transition dynamics.

For the auxiliary system $\mathcal{S}_l$, we assume that all agents share a common policy $\tilde\pi_{\theta_l}$ due to homogeneity within each subpopulation $\mathcal{N}^{l}$. Thus, it reduces to a single-agent system with state-action pairs $\{(\tilde{\mathbf{x}}^l_t,\tilde{\mathbf{u}}^l_t)\}_{t\geq 0}$ induced by $\tilde\pi_{\theta_l}$ and cost $\tilde{c}^l$ at time step $t$.  Agents in $\mathcal{S}_l$ aim to search for a common optimal policy that minimizes the corresponding expected time-average cost $\tilde C^l$. Similarly, $\bar{\mathcal{S}}$ is a single-agent system with state-action pairs $\{(\bar{\mathbf{x}}_{t}, \bar{\mathbf{u}}_{t})\}_{t\geq 0}$ induced by current policy $\bar\pi_{\bar\theta}$ and cost $\bar{{c}}$. The agent aims to search for an optimal policy that minimizes the corresponding expected time-average cost $\bar C$.

\begin{algorithm} [t]
	\caption{Hierarchical (Natural) Actor-Critic} 
	\label{algo:multi-agent} 
	\begin{algorithmic} 
		\STATE{{\textbf{Input:}} Number of iteration $N$, the partition $\{\mathcal{N}^{l}:l \in \mathcal{L} :=\{1, \ldots, L\}\}$, stepsizes $\{\eta_l:\ l \in L\}$ and $\bar\eta $.}
		\STATE{{\textbf{Initialization:}} Initialize policies $\{\tilde \pi_l^0\}_{l \in \mathcal{L}}$ and  $\bar\pi^0 $ for the auxiliary systems $\{\mathcal{S}_l\}_{l \in \mathcal{L}}$ and $\bar{\mathcal{S}}$ respectively. }
		\FOR{$n = 0,\dots, N$}
		\STATE{{\textbf{Critic step.}}}
		\STATE Initialize states $\mathbf{x}_{0}=\left(x_{0}^{i}\right)_{i \in \mathcal{N}}$, do coordinate transformation to obtain initial states $\{\tilde{\mathbf{x}}^l_0\}_{l\in \mathcal{L}}$ and $\bar{\mathbf{x}}_{0}$ of auxiliary systems. 
		\FOR{$t = 0,\dots, T$}
		\STATE Take actions $\{\tilde{\mathbf{u}}^l_t\}_{l\in \mathcal{L}}$  and $\bar{\mathbf{u}}_{t}$ in each auxiliary system respectively based on current states $\{\tilde{\mathbf{x}}^l_t\}_{l\in \mathcal{L}}$, $\bar{\mathbf{x}}_{t}$ and policies $\{\tilde \pi_l^n\}_{l \in \mathcal{L}}$,  $\bar\pi^n $.
		\STATE Recover the original coordinates $\mathbf{x}_{t}$ and $\mathbf{u}_{t}$. Calculate the costs $\tilde{c}^l$ and $\bar{{c}}_{t}$ based on the original cost $c_{gt}$. Observe the next state $\mathbf{x}_{t+1} = \mathcal{T} (\mathbf{x}_{t}, \mathbf{u}_{t})$.
		\STATE Do coordinate transformation to obtain the next auxiliary states $\{\tilde{\mathbf{x}}^l_{t+1}\}_{l\in \mathcal{L}}$ and $\bar{\mathbf{x}}_{t+1}$.
		\ENDFOR
		\STATE Obtain the estimators of the action-value functions $\{\hat Q_{\pi_l}^l\}_{l \in \mathcal{L}}$ and $\hat{\bar Q}_{\bar\pi}$ via a policy evaluation algorithm in the auxiliary systems. For online algorithms, the estimation is implemented during the simulation (for example, Algorithm \ref{algo:gtd} in \S\ref{section:gtd}). 
		\STATE{{\textbf{Actor step.}}  
			\STATE Update the auxiliary policies by the (natural) policy gradient decent 
			with gradients estimated by $\{\hat Q_{\pi_l}^l\}_{l \in \mathcal{L}}$ and $\hat{\bar Q}_{\bar\pi}$ and step sizes $\{\eta_l:\ l \in L\}$ and $\bar\eta $.}
		\ENDFOR
		\STATE{{\textbf{Output:}} The final policies $\{\tilde\pi_{l}^{N}\}_{l \in \mathcal{L}}$ and $\bar\pi^{N}$.}
	\end{algorithmic}
\end{algorithm}

The resulting action-value functions $\{\tilde Q_{\theta_l}^l\}_{l \in \mathcal{L}}$ and $\bar Q_{\bar\theta}$ are still coupled since the costs $\tilde{c}^l$, $\bar{\mathbf{c}}_{t}$ and the dynamics depend on the joint state $\mathbf{x}_{t}$ and action $\mathbf{u}_{t}$. We address this by assuming that for each auxiliary system, the action-value function has either a decoupled form or can be approximated by a decoupled function that only depends on the coordinates of that auxiliary system.   This assumption allows us to update policies separately. As we will see in the next section, this assumption is without loss of generality due to the notion of partial exchangeability for LQR problems. As LQR models are fundamental in approximating general dynamic systems, our method readily applies to a number of practical settings, such as Unmanned Aerial Vehicle (UAV) \citep{zhi2017flight,setyawan2019linear} and Power Grids \citep{minciardi2011optimal, vinifa2016linear}. The decomposition might not hold exactly for general dynamic systems, however, the empirical success of decentralized/decoupled methods in various applications \citep{omidshafiei2017deep, zhang2020trajectory} justifies our assumption that the action-value function can be approximated by a decoupled function.

Algorithm~\ref{algo:multi-agent} provides a summary of the hierarchical actor-critic algorithm. Essentially, we are evaluating and updating policies with the actor-critic algorithm in the auxiliary systems and observe the dynamic transitions and costs in the original system. We provide a rigorous justification of this algorithm in the LQR setting in \S\ref{Hierarchical_LQR} and establish the global convergence in \S\ref{global_convergence}.

\section{Main Results}
\label{section:main_result}

In \S\ref{Hierarchical_LQR} we provide a rigorous justification of Algorithm~\ref{algo:multi-agent} in the LQR setting. The hierarchical natural actor-critic algorithm for the LQR problem is specified in \S\ref{sec:multi-agent_LQR}.  In \S\ref{global_convergence} we establish the provable global convergence for the hierarchical natural actor-critic algorithm.

\subsection{Decomposition of LQR with Partial Exchangeability}
\label{Hierarchical_LQR}

We focus on the multi-agent LQR optimal control problem defined as
\begin{align}
\minimize_{K} C(K) &= \limsup _{T \rightarrow \infty} T^{-1} \sum_{t=0}^{T-1} \mathbb{E}_{\mathbf{x}_{0} \sim \mathcal{D}_{0}, \mathbf{u}_{t}  \sim \pi_K(\cdot|\mathbf{x}_{t})}\left[c_{gt}\left(\mathbf{x}_{t}, \mathbf{u}_{t}\right)\right]\notag\\
\text{subject to \quad} \mathbf{x}_{t+1}&=A  \mathbf{x}_{t}+B \mathbf{u}_{t}+\mathbf{w}_{t}, \quad c_{gt}\left(\mathbf{x}_{t}, \mathbf{u}_{t}\right)=\mathbf{x}_{t}^{\top} Q \mathbf{x}_{t}+\mathbf{u}_{t}^{\top} R \mathbf{u}_{t}
\label{eq:original_system},
\end{align}
where $\mathbf{x}_{t}=\vec\left(\left({x}_{t}^{i}\right)_{i \in \mathcal{N}} \right)$ and  $\mathbf{u}_{t}=\vec\left(\left({u}_{t}^{i}\right)_{i \in \mathcal{N}} \right)$ denote the global joint state and action at time $t$ of all agents. Recall that the optimal control takes a linear form and we use the  matrix $K$ to parameterize the policy $\pi$ such that  $\mathbf{u}_{t} = -K\mathbf{x}_{t}+\sigma \mathbf{z}_{t}$, $\mathbf{z}_{t}\sim N(0,I_d)$. In addition to being of fundamental importance, the LQR problem is frequently used in practice to approximate the original problem in \eqref{eq:general_dymcs}.

We focus on the case where the system satisfies partial exchangeability with partition $\mathcal{N} = \bigsqcup_l^L \mathcal{N}^l$. We show that after the coordinate transformation, optimization problem \eqref{eq:original_system} can be decomposed into $L+1$ control problems that correspond to the $L+1$ auxiliary systems $\{\mathcal{S}_l\}_{l \in \mathcal{L}}$ and $\bar{\mathcal{S}}$. This is established by proving that in auxiliary systems, the dynamics, costs and thus the action-value functions take decoupled forms. As a result, each of the control problems can be controlled separately with a linear policy. Finally, the observation that minimizing the decoupled objectives separately for all auxiliary systems decreases the global total expected time-average cost $C(K)$ concludes the validity of Algorithm~\ref{algo:multi-agent} in the LQR setting.

We first introduce Lemma \ref{lemma:dnms_cost}, adapted from \cite{arabneydi2016linear}, that expresses the agent's individual dynamics and cost with respect to the mean-field and individual state-action pairs. 

\begin{lemma}
	\label{lemma:dnms_cost}
	Suppose the LQR problem specified in \eqref{eq:original_system}  satisfies partial exchangeability with an exchangeable partition $\mathcal{N} = \{\mathcal{N}^{l}\}_{l \in \mathcal{L} :=[L]}$. There exist matrices $A_{l}$, $B_{l}$,  $\breve{A}_l$, $\breve{B}_l$,  $Q_{l}$, $R_{l}$,  $\breve{Q}$ and  $\breve{R}$, explicitly defined by ${A}$, ${B}$, ${Q}$ and ${R}$ with dimensions independent of the number of agents in each subpopulation, such that the individual dynamics and cost function in \eqref{eq:original_system} decompose as
	\begin{align}
	&x_{t+1}^{i}=A_{l} x_{t}^{i}+B_{l} u_{t}^{i}+\breve{A}_{l} \bar{\mathbf{x}}_{t}+\breve{B}_{l} \bar{\mathbf{u}}_{t}+w_{t}^{i},\label{second_dnmcs}\\
	&c\left(\mathbf{x}_{t}, \mathbf{u}_{t}\right)=\bar{\mathbf{x}}_{t}^{\top}  \breve{Q} \bar{\mathbf{x}}_{t}+\bar{\mathbf{u}}_{t}^{\top} \breve{R} \bar{\mathbf{u}}_{t}+\sum_{l \in \mathcal{L}} \sum_{i \in \mathcal{N}^{l}} \left[\left(x_{t}^{i}\right)^{\top} Q_{l} x_{t}^{i}+\left(u_{t}^{i}\right)^{\top} R_{l} u_{t}^{i}\right].\label{second_cost}
	\end{align}
\end{lemma}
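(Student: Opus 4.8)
The plan is to turn the exchangeability hypotheses into block-symmetry constraints on the data matrices $A, B, Q, R$ and then read off the decomposition by a direct computation. Write each of these matrices as an array of blocks indexed by agent pairs $(i,j)$, with $A_{ij}$ the block acting from agent $j$'s state onto agent $i$'s next state, and similarly for $B, Q, R$. Because $f$ is linear and the noise enters additively, the dynamics equivariance in Definition~\ref{def:exchangeability}, after matching coefficients of $\mathbf{x}_t$, $\mathbf{u}_t$ and $\mathbf{w}_t$ on both sides of $f(\sigma\mathbf{x}_t,\sigma\mathbf{u}_t,\sigma\mathbf{w}_t)=\sigma f(\mathbf{x}_t,\mathbf{u}_t,\mathbf{w}_t)$, reduces to the block identities $A_{\sigma(i),\sigma(j)}=A_{ij}$ and $B_{\sigma(i),\sigma(j)}=B_{ij}$ for every transposition $\sigma$ of two agents lying in a common subpopulation; invariance of the quadratic cost $c(\sigma\mathbf{x}_t,\sigma\mathbf{u}_t)=c(\mathbf{x}_t,\mathbf{u}_t)$ gives $Q_{\sigma(i),\sigma(j)}=Q_{ij}$ and $R_{\sigma(i),\sigma(j)}=R_{ij}$ in the same way. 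Since these relations hold for all such transpositions, each block depends only on the orbit of $(i,j)$ under the product group $\prod_{l} S_{|\mathcal{N}^l|}$.

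Next I would extract the resulting three-fold block pattern for any $M \in \{A,B,Q,R\}$: for $i,j$ in the same subpopulation $l$, every diagonal block equals a common $M_D^l$ and every off-diagonal block ($i\neq j$) equals a common $M_O^l$, while for $i\in\mathcal{N}^l$, $j\in\mathcal{N}^{l'}$ with $l\neq l'$ all blocks equal a common $M_{ll'}$. The key point is that the sizes of these template blocks are those of a single agent's state/action space, hence independent of $|\mathcal{N}^l|$. Substituting this into the $i$-th row-block of the dynamics and collapsing the inner sums with the averaging identity $\sum_{j\in\mathcal{N}^l}x_t^j=|\mathcal{N}^l|\,\bar{x}_t^l$, the self-term separates out as $A_l:=A_D^l-A_O^l$ (and likewise $B_l$), while every remaining contribution collects into a linear function of the stacked mean fields, whose coefficient matrices $\bar{A}_l,\bar{B}_l$ simply absorb the scalar weights $|\mathcal{N}^{l'}|$. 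This yields exactly \eqref{second_dnmcs}.

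For the cost I would expand $\mathbf{x}_t^{\top}Q\mathbf{x}_t=\sum_{i,j}(x_t^i)^{\top}Q_{ij}x_t^j$ and use the same identity. Within subpopulation $l$ the off-diagonal terms recombine as $|\mathcal{N}^l|^2(\bar{x}_t^l)^{\top}Q_O^l\bar{x}_t^l-\sum_{i\in\mathcal{N}^l}(x_t^i)^{\top}Q_O^l x_t^i$, so after setting $Q_l:=Q_D^l-Q_O^l$ the non-mean-field part becomes $\sum_l\sum_{i\in\mathcal{N}^l}(x_t^i)^{\top}Q_l x_t^i$; the cross-subpopulation blocks contribute $|\mathcal{N}^l||\mathcal{N}^{l'}|(\bar{x}_t^l)^{\top}Q_{ll'}\bar{x}_t^{l'}$, and all such terms assemble into $\bar{\mathbf{x}}_t^{\top}\bar{Q}\bar{\mathbf{x}}_t$. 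The identical manipulation applied to $R$ then produces \eqref{second_cost}.

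The averaging bookkeeping is routine; the step demanding care is the second one, namely rigorously arguing that orbit-invariance under the full subpopulation permutation group forces precisely the diagonal/off-diagonal/cross block pattern, and checking that the aggregated matrices $\bar{A}_l,\bar{B}_l,\bar{Q},\bar{R}$ have dimensions governed only by $L$ and the per-agent dimensions, so that the reduced problem is genuinely free of the subpopulation sizes. I would also verify that $\bar{Q}$ and $\bar{R}$ inherit symmetry from $Q,R$, which follows from $Q_{ij}=Q_{ji}^{\top}$ combined with orbit-invariance of the blocks, since this symmetry (and the definiteness it supports) is what the subsequent LQR analysis relies upon.
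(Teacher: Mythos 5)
Your proposal is correct and follows essentially the same route as the paper's proof: both derive the diagonal / within-subpopulation off-diagonal / cross-subpopulation block pattern from permutation equivariance, set $A_l = A_D^l - A_O^l$, $Q_l = Q_D^l - Q_O^l$ (the paper's $a^l - \bar a^{l,l}$, $q^l - \bar q^{l,l}$), absorb the $|\mathcal{N}^{k}|$ weights into $\bar A_l$ and $\bar Q$, and collapse the sums via $\sum_{j\in\mathcal{N}^l} x_t^j = |\mathcal{N}^l|\,\bar x_t^l$, with the identical recombination of the within-subpopulation off-diagonal cost terms. The only additions beyond the paper are your explicit orbit argument and the symmetry check on $\bar Q,\bar R$, which the paper leaves implicit.
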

The proof is given in \S\ref{pf:dnms_cost}, where we also provide explicit definitions of the matrices $A_{l}$, $B_{l}$, $\breve{A}_l$, $\breve{B}_l$, $Q_{l}$, $R_{l}$,  $\breve{Q}$ and $\breve{R}$.

Based on the matrices defined in Lemma~\ref{lemma:dnms_cost}, we further define the following matrices that turn out to be useful in defining the auxiliary systems. Specifically, we define matrices $\bar{A}$ and $\bar{B}$ as 
\begin{align*}
\bar{A} & :=\operatorname{diag}\left(A_{1}, \ldots, A_{L}\right)+\operatorname{rows}\left(\breve{A}_{1}, \ldots, \breve{A}_{L}\right), \\ 
\bar{B} & :=\operatorname{diag}\left(B_{1}, \ldots, B_{L}\right)+\operatorname{rows}\left(\breve{B}_{1}, \ldots, \breve{B}_{L}\right),
\end{align*}
and matrices $\bar{Q}$ and $\bar{R}$  as 
\[
\bar{Q} :=\breve{Q} + \operatorname{diag}\left(|\mathcal{N}^1|\cdot Q_{1}, \ldots, |\mathcal{N}^L|\cdot Q_{L}\right), \quad 
\bar{R} :=\breve{R} +\operatorname{diag}\left(|\mathcal{N}^1|\cdot R_{1}, \ldots,|\mathcal{N}^L| \cdot R_{L}\right).
\]
The following standard assumption \citep{fazel2018global,arabneydi2016linear} ensures that the cost functions of the auxiliary systems are well defined.
\begin{assumption}\label{assump:semi_posi}
	Matrices $\{Q_l\}_{l \in \mathcal{L}}$, $\bar{Q}$,  $\{R_l\}_{l \in \mathcal{L}}$ and $\bar{R}$ are positive definite.
\end{assumption}

The following proposition tells us that the dynamics and costs of the auxiliary systems have a decoupled form. Note that we also apply the coordinate transformation to the noise terms.

\begin{proposition}[Auxiliary systems with decoupled dynamics and costs]
	\label{lemma:decpl}
	Suppose the assumptions of Lemma~\ref{lemma:dnms_cost} and  	Assumption~\ref{assump:semi_posi} hold. After an application of the coordinate transformation \eqref{coor_trans}, the dynamics of the auxiliary systems $\mathcal{S}_l$ and $\bar{\mathcal{S}}$ induced by
	the original dynamics in \eqref{eq:original_system} can be written as
	\begin{align}
	\label{eq:decpl_dnms}
	\tilde{x}_{t+1}^{i}&=A_{l} \tilde{x}_{t}^{i}+ B_{l} \tilde{u}_{t}^{i}+\tilde{w}_{t}^{i}, \quad \tilde{w}_{t}^{i} \sim N(0, \Phi_l), \\
	\bar{\mathbf{x}}_{t+1}&=\bar{A} \bar{\mathbf{x}}_{t}+\bar{B} \bar{\mathbf{u}}_{t}+\bar{\mathbf{w}}_{t}, \quad \bar{\mathbf{w}}_{t} \sim N(0, \bar{\Phi}).
	\end{align}
	Furthermore, the global total costs of $\mathcal{S}_l$ and the cost of $\bar{\mathcal{S}}$, defined in Definition~\ref{cost_functions}, only depend on state-action pairs 	$(\tilde{\mathbf{x}}^l_t,\tilde{\mathbf{u}}^l_t)$ and $\left(\bar{\mathbf{x}}_{t}, \bar{\mathbf{u}}_{t}\right)$, respectively, and can be written as
	\begin{align}
	\label{eq:decpl_cost2}
	\tilde{c}^l(\tilde{\mathbf{x}}^l_t,\tilde{\mathbf{u}}^l_t)&=\sum_{i \in \mathcal{N}^{l}} \left[\left(\tilde{x}_{t}^{i}\right)^{\top} Q_{l} \tilde{x}_{t}^{i}+\left(\tilde{u}_{t}^{i}\right)^{\top} R_{l} \tilde{u}_{t}^{i}\right],\\
	\label{eq:decpl_cost1}
	\bar{c}\left(\bar{\mathbf{x}}_{t}, \bar{\mathbf{u}}_{t}\right)&=\bar{\mathbf{x}}_{t}^{\top}{\bar{Q} }
	\bar{\mathbf{x}}_{t}+\bar{\mathbf{u}}_{t}^{\top}{\bar{R} }\bar{\mathbf{u}}_{t}.
	\end{align}	
	Moreover, the original global total cost function decomposes as
	\begin{equation}
	\label{eq:decpl_cost}
	c_{gt}\left(\mathbf{x}_t, \mathbf{u}_{t}\right)=\bar{c}\left(\bar{\mathbf{x}}_{t}, \bar{\mathbf{u}}_{t}\right)+\sum_{l \in \mathcal{L}}\tilde{c}^l(\tilde{\mathbf{x}}^l_t,\tilde{\mathbf{u}}^l_t).
	\end{equation}
\end{proposition}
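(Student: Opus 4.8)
The plan is to push the individual dynamics and cost of Lemma~\ref{lemma:dnms_cost} through the coordinate transformation of Definition~\ref{coor_trans}, exploiting the one algebraic fact that the within-subpopulation deviations sum to zero, namely $\sum_{i\in\mathcal{N}^l}\tilde{x}^i_t = 0$ and $\sum_{i\in\mathcal{N}^l}\tilde{u}^i_t = 0$, which holds directly from the definition of the mean-field $\bar{x}^l_t$. Everything in the proposition will follow from this zero-sum property together with two clean cancellations.

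First I would handle the dynamics. Averaging the individual update \eqref{second_dnmcs} over $i\in\mathcal{N}^l$ yields the subpopulation recursion $\bar{x}^l_{t+1} = A_l\bar{x}^l_t + B_l\bar{u}^l_t + \bar{A}_l\bar{\mathbf{x}}_t + \bar{B}_l\bar{\mathbf{u}}_t + \bar{w}^l_t$, where $\bar{w}^l_t = |\mathcal{N}^l|^{-1}\sum_{i\in\mathcal{N}^l} w^i_t$. Subtracting this from \eqref{second_dnmcs} cancels the coupling terms $\bar{A}_l\bar{\mathbf{x}}_t$ and $\bar{B}_l\bar{\mathbf{u}}_t$ exactly, since they do not depend on $i$, leaving $\tilde{x}^i_{t+1} = A_l\tilde{x}^i_t + B_l\tilde{u}^i_t + \tilde{w}^i_t$ with $\tilde{w}^i_t = w^i_t - \bar{w}^l_t$; as the $w^i_t$ are i.i.d.\ $N(0,\Phi)$, a direct covariance computation gives $\Phi_l = (1 - |\mathcal{N}^l|^{-1})\Phi$. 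Stacking the $L$ subpopulation recursions and matching the block-diagonal and stacked-row pieces against the definitions of $\bar{A}$ and $\bar{B}$ produces $\bar{\mathbf{x}}_{t+1} = \bar{A}\bar{\mathbf{x}}_t + \bar{B}\bar{\mathbf{u}}_t + \bar{\mathbf{w}}_t$, with $\bar{\mathbf{w}}_t = \vec(\bar{w}^1_t,\dots,\bar{w}^L_t)$ and $\bar{\Phi} = \operatorname{diag}(\Phi/|\mathcal{N}^1|,\dots,\Phi/|\mathcal{N}^L|)$, which is \eqref{eq:decpl_dnms}.

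Next I would decompose the cost. Writing $x^i_t = \tilde{x}^i_t + \bar{x}^l_t$ in the per-subpopulation quadratic $\sum_{i\in\mathcal{N}^l}(x^i_t)^\top Q_l x^i_t$ and expanding, the cross term $2(\bar{x}^l_t)^\top Q_l\sum_{i\in\mathcal{N}^l}\tilde{x}^i_t$ vanishes by the zero-sum property, leaving $\sum_{i\in\mathcal{N}^l}(\tilde{x}^i_t)^\top Q_l\tilde{x}^i_t + |\mathcal{N}^l|(\bar{x}^l_t)^\top Q_l\bar{x}^l_t$, and identically for the action term with $R_l$. Substituting this into the cost formula \eqref{second_cost} and collecting the mean-field pieces via $\sum_{l}|\mathcal{N}^l|(\bar{x}^l_t)^\top Q_l\bar{x}^l_t = \bar{\mathbf{x}}_t^\top\breve{Q}\bar{\mathbf{x}}_t$ and the analogous identity for $\breve{R}$ yields the decomposition \eqref{eq:decpl_cost}, with mean-field part $\bar{\mathbf{x}}_t^\top(\bar{Q}+\breve{Q})\bar{\mathbf{x}}_t + \bar{\mathbf{u}}_t^\top(\bar{R}+\breve{R})\bar{\mathbf{u}}_t$ and deviation part $\sum_l\sum_{i\in\mathcal{N}^l}[(\tilde{x}^i_t)^\top Q_l\tilde{x}^i_t + (\tilde{u}^i_t)^\top R_l\tilde{u}^i_t]$.

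Finally I would reconcile these explicit forms with the abstract definitions of $\tilde{c}^l$ and $\bar{c}$ in Definition~\ref{cost_functions}. Evaluating $c_{gt}$ at $\breve{\mathbf{x}}_t$ sets every deviation to zero while leaving each subpopulation mean-field unchanged, so $\bar{c} = c_{gt}(\breve{\mathbf{x}}_t,\breve{\mathbf{u}}_t)$ collapses to the mean-field part, giving \eqref{eq:decpl_cost1}; and since $\breve{\mathbf{x}}^l_t$ zeroes only the subpopulation-$l$ deviations while leaving all other contributions and the global mean-field intact, the difference $\tilde{c}^l = c_{gt}(\mathbf{x}_t,\mathbf{u}_t) - c_{gt}(\breve{\mathbf{x}}^l_t,\breve{\mathbf{u}}^l_t)$ retains exactly the subpopulation-$l$ deviation quadratic, giving \eqref{eq:decpl_cost2}. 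The step requiring the most care is this last reconciliation—verifying that the global mean-field is invariant under the replacement maps $\breve{\cdot}$, so that the mean-field cost term cancels in the difference defining $\tilde{c}^l$—but it is immediate once the zero-sum cancellation is established, and I expect no genuine obstacle beyond careful tracking of which terms survive each replacement.
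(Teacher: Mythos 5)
Your proposal is correct and follows essentially the same route as the paper's proof: average the individual dynamics from Lemma~\ref{lemma:dnms_cost} over each subpopulation and subtract to cancel the mean-field coupling terms, use the zero-sum property $\sum_{i\in\mathcal{N}^l}\tilde{x}^i_t=0$ to kill the cross terms in the quadratic cost, and then identify $\tilde{c}^l$ and $\bar{c}$ by checking which terms survive the replacement maps $\breve{\cdot}$. Your version is if anything slightly more explicit than the paper's (e.g., in deriving the mean-field recursion by averaging and in writing out the noise covariances $\Phi_l$ and $\bar{\Phi}$), but the underlying argument is the same.
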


Note that by Proposition~\ref{lemma:decpl}, the original system decomposes into $L+1$ auxiliary systems: $L$ for each subpopulation $\{\mathcal{N}^{l}\}_{l \in \mathcal{L}}$, and one for the mean-field system. The auxiliary systems have decoupled dynamics and costs, hence they can be controlled with separate policies. Moreover, as is shown in \eqref{eq:decpl_cost2}, the individual cost (also denoted by $\tilde{c}^l(\cdot,\cdot)$ with a slight abuse of notation) in $\mathcal{S}_l$ takes the identical form
\[ 
\tilde{c}^l(\tilde{{x}}^i_t,\tilde{{u}}^i_t) = \left(\tilde{x}_{t}^{i}\right)^{\top} Q_{l} \tilde{x}_{t}^{i}+\left(\tilde{u}_{t}^{i}\right)^{\top} R_{l} \tilde{u}_{t}^{i}.
\] 
 Therefore, agents in the subpopulation $\mathcal{N}^l$ share the matrices $A_l$, $B_l$ for the dynamics and $Q_l$, $R_l$ for the costs. As a result, their optimal policies are identical, which justifies the usage of a common policy $\tilde\pi_l$ in Algorithm~\ref{algo:multi-agent}.

We parameterize the policies of auxiliary systems by matrices $ K_l \in \mathbb{R}^{k_l \times d_l}$ and $\bar K \in \mathbb{R}^{\overline k \times \overline d}$. By adding the Gaussian noise to allow for exploration, the policies can be written as 
\begin{equation}
\begin{aligned}
\label{eq:pi_K_pi_K_mean}
\tilde u_{t}^i&=- K_l \tilde x_{t}^i+\sigma_l \cdot \tilde z_{t}^i, \quad \tilde z_{t}^i \sim N\left(0, I_{d_l}\right),\ i \in \mathcal{N}^l,\ l\in \mathcal{L},\\
\bar{\mathbf{u}}_{t}&=-\bar K \bar{\mathbf{x}}_{t}+\bar\sigma \cdot \bar{\mathbf{z}}_{t}, \quad \bar z_{t} \sim N\left(0, I_{\bar d}\right), 
\end{aligned}
\end{equation}
with the corresponding distributions denoted as $\pi_{ K_l}(\cdot \mid \tilde x_t^i)$ and $\pi_{\bar K}(\cdot \mid \bar{\mathbf{x}}_{t})$. Our next result states that minimizing the original objective $C(K)$ can be done separately with respect to $\bar K$ and $ K_l$, $l \in \mathcal{L}$.
\begin{proposition}
	\label{prop:ergodic_cost_decompositon}
	The objective $C(K)$ can be decomposed as $C(K) = \bar C(\bar K) + \sum_{l \in \mathcal{L}}  \tilde C(\tilde K_l)$, where
	\begin{align}
	\bar C(\bar K) &:= \lim _{T \rightarrow \infty} \mathbb{E}_{\bar{\mathbf{x}}_{0} \sim \bar{\mathcal{D}}_{0}, \bar{\mathbf{u}}_{t} \sim \pi_{\bar K}\left(\cdot | \bar{\mathbf{x}}_{t}\right)}\left[\frac{1}{T} \sum_{t=0}^{T-1}	\bar{c}\left(\bar{\mathbf{x}}_{t}, \bar{\mathbf{u}}_{t}\right)\right], \\
	\tilde C( K_l) & := \lim _{T \rightarrow \infty} \mathbb{E}_{\tilde x_0^i \sim \tilde{\mathcal{D}}_{0}^l, \tilde u_{t}^i \sim \pi_{\tilde K_l}\left(\cdot | \tilde x_{t}^i\right)} \left[\frac{1}{T} \sum_{t=0}^{T-1}\tilde{c}^l(\tilde{\mathbf{x}}^l_t,\tilde{\mathbf{u}}^l_t)\right],\ i \in \mathcal{N}^l, \ l\in \mathcal{L}.
	\end{align}
\end{proposition}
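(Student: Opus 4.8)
The plan is to lift the pointwise cost decomposition \eqref{eq:decpl_cost} from Proposition~\ref{lemma:decpl} to the time-averaged objective, and then argue that the $\limsup$ in the definition of $C(K)$ can be replaced by a sum of genuine limits because each auxiliary system evolves as an autonomous, stable linear--Gaussian Markov chain. First I would substitute the per-step identity $c_{gt}(\mathbf{x}_t,\mathbf{u}_t)=\bar c(\bar{\mathbf{x}}_t,\bar{\mathbf{u}}_t)+\sum_{l\in\mathcal{L}}\tilde c^l(\tilde{\mathbf{x}}^l_t,\tilde{\mathbf{u}}^l_t)$ into the definition of $C(K)$ and use linearity of expectation to express the expected per-step cost as the sum of the expected auxiliary costs. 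The key structural point to emphasize here is that the coordinate transformation of Definition~\ref{coor_trans} is a fixed deterministic (indeed linear) map, so the expectation over the original trajectory coincides with the expectation over the induced auxiliary trajectories; and because, by Proposition~\ref{lemma:decpl}, the auxiliary dynamics \eqref{eq:decpl_dnms} are decoupled, the process $(\bar{\mathbf{x}}_t,\bar{\mathbf{u}}_t)$ is an autonomous Markov chain driven only by $\bar K$ and the mean-field noise, while each $(\tilde{\mathbf{x}}^l_t,\tilde{\mathbf{u}}^l_t)$ is an autonomous chain driven only by $K_l$ and the subpopulation noise, with initial laws $\bar{\mathcal{D}}_0$ and $\tilde{\mathcal{D}}_0^l$ arising as pushforwards of $\mathcal{D}_0$.

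Second I would show that each of the averaged auxiliary costs converges to a finite limit. For a stabilizing gain the closed-loop map $\bar A-\bar B\bar K$ (respectively $A_l-B_lK_l$) is stable, so the Gaussian state distribution converges to its unique stationary law; since the per-step cost is quadratic, the expected instantaneous cost $\mathbb{E}[\bar c(\bar{\mathbf{x}}_t,\bar{\mathbf{u}}_t)]$ (respectively $\mathbb{E}[\tilde c^l(\tilde{\mathbf{x}}^l_t,\tilde{\mathbf{u}}^l_t)]$) converges as $t\to\infty$, and therefore its Ces\`aro average converges to the same stationary value. This establishes that the limits defining $\bar C(\bar K)$ and each $\tilde C(\tilde K_l)$ exist as genuine limits rather than merely $\limsup$'s.

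Finally, since $\mathcal{L}$ is finite and each of the $L+1$ Ces\`aro averages converges, the $\limsup$ in the definition of $C(K)$ is in fact a genuine limit that distributes over the finite sum, yielding $C(K)=\bar C(\bar K)+\sum_{l\in\mathcal{L}}\tilde C(\tilde K_l)$. The main obstacle is precisely this interchange of $\limsup$ with the finite sum: in general one only has the subadditivity inequality $\limsup_T(a_T+b_T)\le\limsup_T a_T+\limsup_T b_T$, and equality requires the individual averages to converge, which is exactly what the stability and stationarity argument of the second step secures. A secondary point to handle with care is that the stationary distributions exist only for stabilizing controllers; I would therefore note that the decomposition is asserted on the stabilizing set of $K$ (equivalently, for $\bar K$ and $\{K_l\}_{l\in\mathcal{L}}$ stabilizing their respective auxiliary systems), which is the regime in which $C(K)$ is finite and the claim is meaningful.
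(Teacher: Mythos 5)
Your proof is correct and follows essentially the same route as the paper's: substitute the pointwise cost decomposition \eqref{eq:decpl_cost} from Proposition~\ref{lemma:decpl} into the definition of $C(K)$, use linearity of expectation, and split the time average over the $L+1$ decoupled auxiliary systems. The paper's own proof is just this chain of equalities; your additional care in justifying that the $\limsup$ becomes a genuine limit that distributes over the finite sum (via stability of the closed-loop auxiliary dynamics and convergence of each Ces\`aro average) fills in a step the paper leaves implicit.
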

The result follows by direct computation and is proved in \S\ref{pf:ergodic_cost_decompositon}. Note that $\bar C(\bar K)$ and $\tilde C( K_l)$ are exact objectives of the optimal control problems defined in the auxiliary systems and minimizing them separately yields the minimum of the original objective $C(K)$. With the decoupled dynamics and cost functions, as well as the linearly parameterized policies described above, the corresponding action-value functions $\{\widetilde{Q}_{K_{l}}\}_{l \in \mathcal{L}}$ and $\bar{Q}_{\bar{K}}$ indeed have decoupled structures, which justifies Algorithm~\ref{algo:multi-agent} in the LQR setting. 

\subsection{Hierarchical Natural Actor-Critic in LQR Setting}
\label{sec:multi-agent_LQR}

In this section, we develop the hierarchical natural actor-critic algorithm for the LQR problem. With the action distribution defined in \eqref{eq:pi_K_pi_K_mean}, the state dynamics
defined by \eqref{eq:decpl_dnms} take respectively the forms
\begin{equation}
\begin{aligned}
\tilde x_{t+1}^i&=( A_l - B_l  K_l) \tilde x_{t}^i+\tilde\varepsilon_{t}^i, \quad \tilde\varepsilon_{t}^i\sim N\left(0, \Phi_{\sigma_l}^l\right),\ i \in \mathcal{N}^l,\ l\in \mathcal{L}, \\
\bar x_{t+1}&=(\bar A-\bar B \bar K) \bar x_{t}+ \bar \varepsilon_{t}, \quad \bar \varepsilon_{t}\sim N\left(0, \bar \Phi_{\bar\sigma}\right),\label{markov_chain_}
\end{aligned}
\end{equation}
where $\Phi_{\sigma_l}^l := \Phi_l+\sigma_l^{2} \cdot  B_l  B_l^{\top}$ and $\bar \Phi_{\bar\sigma} := \bar \Phi+\bar\sigma^{2} \cdot \bar B \bar B^{\top}$.  Let $\{\Sigma_{K_l}\}_{l \in \mathcal{L}}$ and $\Sigma_{\bar K}$ denote the unique positive definite solutions to the Lyapunov equations 
\begin{align*}
\Sigma_{K_l}&=\Phi_{\sigma_l}^l+(A_l-B_l K_l) \Sigma_{K_l}(A-B K_l)^{\top},\\
\Sigma_{\bar K}&=\bar\Phi_{\bar\sigma}+(\bar A-\bar B \bar K) \Sigma_{\bar K}(\bar A-\bar B \bar K)^{\top}. 
\end{align*}
Under the condition that $\rho(A_l-B_l K_l)<1$ and $\rho(\bar A- \bar B \bar K)<1$, the Markov chains introduced by \eqref{markov_chain_} have stationary distributions $N\left(0, \Sigma_{K_l}\right)$ and $N\left(0, \Sigma_{\bar K}\right)$, denoted by $\mathcal{D}_{K_l}$ and ${\mathcal{D}}_{\bar K}$, respectively.

The following lemma establishes the functional forms of costs $\tilde C( K_l)$ and $\bar C(\bar K)$, as well as their gradients.
\begin{lemma}\label{lemma:cost_grad_K}
	The ergodic costs are given by
	\begin{align}\label{eq:cost_K2}
	\tilde C( K_l)&=\tr\left[\left(Q_l+K^{\top}_l R_l K_l\right) \Sigma_{K_l}\right]+\sigma_l^{2} \cdot \tr(R_l)\notag\\&=\tr\left(P_{K_l} \Phi_{\sigma_l}^l\right)+\sigma_l^{2} \cdot \tr(R_l),\\
	\bar C(\bar K)&=\tr\left[\left(\bar{Q}+\bar K^{\top}\bar{R} \bar K\right) \Sigma_{\bar K}\right]+\bar\sigma^{2} \cdot \tr(\bar{R})\notag\\
	&=\tr\left( P_{\bar K} \bar\Phi_{\bar\sigma}\right)+\bar\sigma^{2} \cdot \tr(\bar{R}),
	\end{align}
	with gradients 
	\begin{align}
	\nabla_{K_l} \tilde C(K_l)&= 2\left[\left(R_l+B_l^{\top} P_{K_l} B_l\right) K_l-B_l^{\top} P_{K_l} A_l\right] \Sigma_{K_l}=2 E_{K_l} \Sigma_{K_l},\\
	\nabla_{\bar K} \bar C(\bar K)&=2\left[\left(\bar{R}+B^{\top}  P_{\bar K} B\right) \bar K-B^{\top}  P_{\bar K} A\right]  \Sigma_{\bar K}=2  E_{\bar K}  \Sigma_{\bar K},
	\end{align}
	where $P_{K_l}$ and $P_{\bar K}$ are obtained as the solution to
	\begin{align*}
	P_{K_l}&=\left(Q_l+K_l^{\top} R_l K_l\right)+(A_l-B_l K_l)^{\top} P_{K_l}(A_l-B_l K_l),\\ 	
	P_{\bar K}&=\left(\bar{Q}+\bar K^{\top} \bar{R} \bar K\right)+(\bar A-\bar B \bar K)^{\top} P_{\bar K}(\bar A-\bar B \bar K),
	\end{align*}
	and $E_{K_l}$ and $E_{\bar K}$ are defined as
	\begin{align*}
	E_{K_l} &= \left(R_l+B_l^{\top} P_{K_l} B_l\right) K_l-B_l^{\top} P_{K_l} A_l,\\
	E_{\bar K} &= \left(\bar{R}+\bar B^{\top} P_{\bar K} \bar B\right) \bar K-\bar B^{\top} P_{\bar K} \bar A.
	\end{align*}
\end{lemma}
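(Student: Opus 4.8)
The plan is to treat both $\tilde C(K_l)$ and $\bar C(\bar K)$ as two instances of a single average-cost LQR computation, since the closed-loop chains in \eqref{markov_chain_} are both linear with Gaussian noise and the costs are quadratic. I would carry out the argument once for $\tilde C(K_l)$ with data $(A_l,B_l,Q_l,R_l,\Phi_{\sigma_l}^l)$ and recover $\bar C(\bar K)$ by the substitution $(A_l,B_l,Q_l,R_l,\Phi_{\sigma_l}^l)\mapsto(\bar A,\bar B,\bar Q+\breve Q,\bar R+\breve R,\bar\Phi_{\bar\sigma})$. For the cost formula, since $\rho(A_l-B_lK_l)<1$ the chain is ergodic with stationary law $N(0,\Sigma_{K_l})$, so the time-average defining $\tilde C(K_l)$ equals the stationary expectation $\E[(\tilde x^i)^\top Q_l\tilde x^i+(\tilde u^i)^\top R_l\tilde u^i]$ for a representative agent $i\in\mathcal{N}^l$. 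Substituting $\tilde u^i=-K_l\tilde x^i+\sigma_l\tilde z^i$ with $\tilde z^i$ independent of $\tilde x^i\sim N(0,\Sigma_{K_l})$ and zero-mean kills the cross terms, and using $\E[\tilde x^i(\tilde x^i)^\top]=\Sigma_{K_l}$ and $\E[\tilde z^i(\tilde z^i)^\top]=I$ yields the first expression $\tr[(Q_l+K_l^\top R_lK_l)\Sigma_{K_l}]+\sigma_l^2\tr(R_l)$.

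For the equivalence of the two forms of the cost, I would invoke the adjoint relation between the two Lyapunov equations. Writing $L_l:=A_l-B_lK_l$ and $M_l:=Q_l+K_l^\top R_lK_l$, the value matrix satisfies $P_{K_l}=M_l+L_l^\top P_{K_l}L_l$ while the covariance satisfies $\Sigma_{K_l}=\Phi_{\sigma_l}^l+L_l\Sigma_{K_l}L_l^\top$. Then $\tr(M_l\Sigma_{K_l})=\tr\big((P_{K_l}-L_l^\top P_{K_l}L_l)\Sigma_{K_l}\big)=\tr\big(P_{K_l}(\Sigma_{K_l}-L_l\Sigma_{K_l}L_l^\top)\big)=\tr(P_{K_l}\Phi_{\sigma_l}^l)$, where the middle step only uses the cyclic property of the trace. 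This gives the second expression $\tr(P_{K_l}\Phi_{\sigma_l}^l)+\sigma_l^2\tr(R_l)$.

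For the gradient, using the second form reduces the task to $\nabla_{K_l}\tr(P_{K_l}\Phi_{\sigma_l}^l)$, so I must differentiate the implicitly defined $P_{K_l}$. Differentiating the value Lyapunov equation along $K_l\mapsto K_l+t\Delta$ and writing $\dot P$ for the derivative at $t=0$ produces a Lyapunov equation $\dot P-L_l^\top\dot P L_l=W$, where $W=\Delta^\top R_lK_l+K_l^\top R_l\Delta-\Delta^\top B_l^\top P_{K_l}L_l-L_l^\top P_{K_l}B_l\Delta$ collects the explicit $K_l$-dependence. Applying the same adjoint identity as above, now with the two Lyapunov equations in reversed roles, gives $\tr(\dot P\Phi_{\sigma_l}^l)=\tr(W\Sigma_{K_l})$; collecting the four trace terms into the Frobenius pairing $\la\,\cdot\,,\Delta\ra$ and using symmetry of $\Sigma_{K_l}$, $P_{K_l}$, $R_l$ leaves $2\,\la(R_lK_l-B_l^\top P_{K_l}L_l)\Sigma_{K_l},\Delta\ra$. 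Substituting $L_l=A_l-B_lK_l$ regroups $R_lK_l-B_l^\top P_{K_l}(A_l-B_lK_l)=(R_l+B_l^\top P_{K_l}B_l)K_l-B_l^\top P_{K_l}A_l=E_{K_l}$, so $\nabla_{K_l}\tilde C(K_l)=2E_{K_l}\Sigma_{K_l}$.

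I expect the gradient step to be the main obstacle: $P_{K_l}$ depends on $K_l$ implicitly through the Lyapunov equation, and naive differentiation produces the intractable term $\tr(\dot P\Phi_{\sigma_l}^l)$. The decisive point is the adjoint/duality relation that converts $\tr(\dot P\Phi_{\sigma_l}^l)$ into $\tr(W\Sigma_{K_l})$, eliminating $\dot P$ and leaving only the explicit dependence in $W$; getting the symmetrization and regrouping of the four trace terms exactly right, so that the answer is $2E_{K_l}\Sigma_{K_l}$ rather than its transpose or off by a factor, is the delicate bookkeeping. The barred case for $\bar C(\bar K)$ and $E_{\bar K}$ is identical under the substitution noted at the outset.
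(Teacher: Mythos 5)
Your proposal is correct, and for the cost formula it follows essentially the same route as the paper: reduce both claims to a single generic average-cost LQR computation (the paper's Lemma~\ref{lemma:cost_grad_K_single}), evaluate the stationary expectation of the quadratic cost under $u=-Kx+\sigma z$ to get $\tr[(Q+K^\top RK)\Sigma_K]+\sigma^2\tr(R)$, and then pass to $\tr(P_K\Phi_\sigma)$ by the duality between the two Lyapunov equations. Your version of that duality, $\tr(M\Sigma)=\tr\big((P-L^\top PL)\Sigma\big)=\tr\big(P(\Sigma-L\Sigma L^\top)\big)=\tr(P\Phi_\sigma)$, works directly from the fixed-point equations, whereas the paper realizes the same adjoint identity through the series operators $\mathcal{T}_K$ and $\mathcal{T}_K^\top$ in \eqref{eq:operator}; these are interchangeable. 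The one substantive difference is the gradient: the paper's appendix proof of Lemma~\ref{lemma:cost_grad_K_single} only establishes the two expressions for the cost and leaves the formula $\nabla_K C(K)=2E_K\Sigma_K$ to the cited literature, while you derive it by implicitly differentiating the Bellman--Lyapunov equation, obtaining $\dot P-L^\top\dot PL=W$ with $W=\Delta^\top RK+K^\top R\Delta-\Delta^\top B^\top P L-L^\top PB\Delta$, and applying the same adjoint identity to convert $\tr(\dot P\Phi_\sigma)$ into $\tr(W\Sigma_K)=2\langle E_K\Sigma_K,\Delta\rangle$. I checked this bookkeeping and it is right (the symmetry of $R$, $P_K$, $\Sigma_K$ pairs the four trace terms correctly, and $RK-B^\top P(A-BK)=(R+B^\top PB)K-B^\top PA=E_K$), so your write-up is in fact more self-contained than the paper's on this point. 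The barred system follows by the substitution you indicate, exactly as in the paper.
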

The lemma directly follows from the functional forms of the cost functions and gradients of a general LQR problem applied to each auxiliary system. The proof is presented in \S\ref{sec:supp_pf}.

To see how the natural policy gradient is related to $E_{K_l}$, observe that $\mathcal{I}(K_l)$ has block diagonal structure with $k_l$ blocks of size $d_l\times d_l$. Each block contains entries with coordinates of the form $(i,\cdot)\times(i,\cdot)$, where $i\in\{1\dots k_l\}$ and all of the blocks are identical to $\Sigma_{K_l}$. Hence, the natural policy gradient algorithm updates the policy in the direction of 
\[
[\mathcal{I}(K_l)]^{-1} \nabla_{K_l} \tilde{C}(K_l)=\nabla_{K_l} \tilde{C}(K_l) \Sigma_{K_l}^{-1}=2 E_{K_l}.
\] 
Similarly, the natural gradient for mean-field system is $2 E_{\bar K}$. In the critic step, the model-free estimates $\{\hat E_{K_l}\}_{l \in \mathcal{L}}$ and $\hat E_{\bar{K}}$ can be obtained with an online gradient-based temporal-difference algorithm \citep{sutton2009fast}. In the actor step, the policies are updated with $\{\hat E_{K_l}\}_{l \in \mathcal{L}}$ and $\hat E_{\bar{K}}$. Thus, we obtain the hierarchical natural actor critic algorithm for the LQR problem, as is summarized in Algorithm~\ref{algo:multi-agent_LQR}.

\begin{algorithm} [tbp]
	\caption{Hierarchical Natural Actor-Critic for LQR Problem} 
	\label{algo:multi-agent_LQR} 
	\begin{algorithmic} 
		\STATE{{\textbf{Input:}} Number of iteration $N$, the partition $\{\mathcal{N}^{l}:l \in \mathcal{L} :=\{1, \ldots, L\}\}$, stepsizes $\{\eta_l:\ l \in L\}$ and $\bar\eta $.}
		\STATE{{\textbf{Initialization:}} Initialize policies $\{\pi_{K_{l,0}}:\ l \in L\}$ and  $\bar\pi_{K_0}$. }
		\FOR{$n = 0,\dots, N$}
		\STATE{\textbf{Critic step.} Simulate current policies $\{ \pi_{K_{l,n}}\}_{l \in \mathcal{L}}$ and $\bar\pi_{K_n} $ following the same steps in Algorithm~\ref{algo:multi-agent}. Update estimators $\{\hat E_{K_{l,n}}:\ l \in L\}$ and $\hat E_{\bar K_n}$ via online GTD algorithm (Algorithm \ref{algo:gtd} in \S\ref{section:gtd}) at each simulating iteration.}

		\STATE{{\textbf{Actor step.}}  Update the policy parameter by  
			$ K_{l, n+1} =  K_{l, n} - \eta_l \cdot  \widehat{E}_{K_{l,n}}   $ and $\bar K_{n+1} =  \bar K_n - \bar\eta \cdot  \widehat{E}_{\bar K_n}   $.}
		\ENDFOR
		\STATE{{\textbf{Output:}} The final policies $\{\pi_{K_{l,N}}:\ l \in L\}$ and $\bar\pi_{K_{N}}$.}
	\end{algorithmic}
\end{algorithm}	

\subsection{Global Convergence}
\label{global_convergence}

In this section, we prove that the hierarchical natural actor critic algorithm, described in the previous section, converges globally to the optimal policy at a linear rate for LQR problems. We start by introducing notation and making some mild assumptions. At iteration $n$, the algorithm produces auxiliary policies $\{K_{l,n}\}_{l \in \mathcal{L}}$ and $\bar K_n$ for subpolulations and the mean fields, respectively.  They induce policy $K_n$ for the multi-agent system. We denote by $\{K^*_l\}_{l \in \mathcal{L}}$ and $\bar K^*$ the corresponding optimal policies. They induce the optimal policy for the multi-agent system, denoted by $K^*$. With this notation, we make the following assumptions.

\begin{assumption}
	\label{assump:stable_initial_policy}
	The initial policies $\{\pi_{K_{l,0}}:\ l \in \mathcal{L}\}$ satisfy $\rho\left(A_{l}-B_{l} K_{l,0}\right)<1$ and 
	$\bar\pi_{K_0}$ satisfies $\rho(\bar{A}-\bar{B} \bar{K}_0)<1$.
\end{assumption}

\begin{assumption}
	\label{assump:stepsize}
	The stepsizes are sufficiently small and satisfy
	\begin{align*}
	\eta_l &\leq\left[\|R_l\|+\sigma_{\min }^{-1}(\Phi_l) \cdot\|B_l\|^{2} \cdot C\left(K_{l,0}\right)\right]^{-1}, \forall l \in \mathcal{L},\\ \bar\eta&\leq\left[\|\bar R\|+\sigma_{\min }^{-1}(\bar \Phi) \cdot\|\bar B\|^{2} \cdot C\left(\bar K_{0}\right)\right]^{-1}.
	\end{align*}
\end{assumption}

\begin{assumption}
	\label{assump:pe}
	The estimates of natural gradients given by the critic step satisfy 
	\[
	\left\|\widehat{E}_{K_{l,n}}-E_{K_{l,n}}\right\| \leq \delta_{l,n}
	\qquad\text{and}\qquad \left\|\widehat{E}_{\bar K_n}-E_{\bar K_n}\right\| \leq \bar\delta_n, 
	\]
	where $\{\delta_{l,n}\}_{l \in \mathcal{L}}$ and $\bar\delta_n$ are sufficiently small positive values satisfying
	\begin{align*}
	\delta_{l,n} &\leq  \epsilon/2 \cdot\eta_l \cdot \sigma_{\min }(\Phi_l) \cdot \sigma_{\min }(R) \cdot\left\|\Sigma_{K^*_l}\right\|^{-1}\cdot 1/\Lambda(\|K_{l,n}\|,C(K_{l,0}))\cdot1/(L+1), \\
	\bar\delta_n& \leq  \epsilon/2 \cdot\bar\eta \cdot \sigma_{\min }(\bar\Phi) \cdot \sigma_{\min }(R) \cdot\left\|\Sigma_{\bar K^*}\right\|^{-1}\cdot 1/\Lambda(\|\bar K_n\|,C(\bar K_0))\cdot1/(L+1).
	\end{align*}
	Here $\Lambda(\cdot,\cdot)$ is a polynomial and $\epsilon$ is the error level we want to achieve, that is, $C\left(K_{N}\right)-C\left(K^{*}\right) \leq \epsilon$.
\end{assumption}

 Assumption \ref{assump:stable_initial_policy} is a standard assumption for the model-free LQR problem \citep{fazel2018global,malik2018derivative,tu2017least},
 despite the concern that finding such a stable policy without prior knowledge of the system parameters is difficult \citep{lewis2009reinforcement, bu2019lqr}. One simple way to address this problem is by approximating the infinite horizon problem with a finite horizon one \citep{fazel2018global}. Also, recently \citet{perdomo2021stabilizing} shows that an unknown dynamical system can be stabilized efficiently via a model-free policy gradient method. Assumptions \ref{assump:stepsize} and \ref{assump:pe} are technical assumptions on the relative updating steps of the natural gradient decent in the actor step and the GTD algorithm in the critic step (see Algorithm~\ref{algo:gtd} proposed in appendix). In particular, Assumption \ref{assump:pe}  states that the critic is updated at a faster pace than the actor. Under these assumptions we can prove non-asymptotic convergence results in contrast to asymptotic results of classical actor-critic algorithms \citep{konda2000actor,bhatnagar2009natural,grondman2012survey}.  
Note that Assumption \ref{assump:pe} is rather weak and can be satisfied by setting the number of iterations in the GTD algorithm sufficiently large. The statistical rate of convergence for the GTD algorithm is established by the following theorem.
\begin{theorem}[Informal]
	Let $\widehat{E}_{K}$ be the output of the GTD algorithm (see Algorithm~\ref{algo:gtd}) for policy $\pi_K$ with $T$ iterations.  For a sufficiently large $T$, we have
	\#\label{eq:informal_estimate_error} 
		\| \widehat{E}_{K} -E_{K}\|_{2}^2 \leq \Upsilon \cdot \frac{{\log^6 T}}{ \sqrt{T}},
	\#
	with probability at least $1 - T^{-4}$,	where $\Upsilon$ is a polynomial of the system parameters.
\end{theorem} 
For an LQR problem with partial exchangeability, the polynomial $\Upsilon$ for all auxiliary systems jointly determines the statistical rate of convergence of the GTD algorithm for the original LQR problem. The polynomial $\Upsilon$ reflects the essential difficulty of estimating the system parameters of the auxiliary systems. Note that the dimension of the mean-field system is proportional to the number of subpopulations $L$, which characterizes the intrinsic difficulty of learning the multi-agent system. In a special case when $L =1$, the problem reduces to the setting of homogeneous agents. The complexity is the same as that of learning a single-agent system. If $L = |\mathcal{N}|$, all agents are heterogeneous and we cannot explore any symmetry. The complexity is the same as that of treating the problem as a huge LQR problem and directly learning it, which becomes problematic as the number of agents grows.

If the agents only have access to a truncated history of data, there will be an extra error term in the estimation error $\| \widehat{E}_{K} -E_{K}\|_{2}^2$.  The thresholds $\delta_{l,n}$ and $\bar\delta_n$ in Assumption 4 are essentially of the order $O(1)$. Therefore, the truncation will not affect the convergence analysis provided that the history is sufficiently long, but finite, since even with the extra estimation error term, the total error is still smaller than the corresponding threshold. This allows us to relax the assumption on the access to the whole history of data to the access to the truncated history of data. As a result, we can ensure memory efficiency.

Next, we present the main theorem.
\begin{theorem}[Global convergence of hierarchical actor-critic]\label{thm:multi_ac}
	Let Assumptions \ref{assump:stable_initial_policy}-\ref{assump:pe} hold. Then $\left\{C\left(K_{n}\right)\right\}_{n \geq 0}$ is a monotonously decreasing sequence. 	Moreover, for any $\epsilon>0$, if 	$N>2M\log\left\{\left[C(K_0)-C(K^*)\right]\cdot(L+1)/\epsilon\right\}$ where the constant $M$ is defined by
	\#
	M = \max\left\{\left\{\|\Sigma_{K^{*}_l}\|\cdot\eta^{-1}_l\cdot\sigma^{-1}_{\min }(\Phi_l)\cdot\sigma^{-1}_{\min }(R_l)\right\}_{l \in \mathcal{L}}, \left\|\Sigma_{\bar K^{*}}\right\|\cdot\bar\eta^{-1}\cdot\sigma^{-1}_{\min }(\bar\Phi)\cdot\sigma^{-1}_{\min }(\bar R)\right\}\notag,
	\#
	then
	$C\left(K_{N}\right)-C\left(K^{*}\right) \leq \epsilon$.
\end{theorem}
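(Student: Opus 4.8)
The plan is to exploit the additive decomposition $C(K) = \bar{C}(\bar{K}) + \sum_{l \in \mathcal{L}} \tilde{C}(K_l)$ from Proposition~\ref{prop:ergodic_cost_decompositon}, which turns the multi-agent problem into $L+1$ \emph{independent} single-agent LQR problems that the algorithm updates simultaneously but separately (each auxiliary policy moving only along its own estimated natural gradient). Since the objective splits as a sum and the updates do not couple, I would reduce the theorem to a per-system one-step analysis, essentially the single-agent natural-actor-critic convergence result of \citet{yang2019global}, applied to each of the $L+1$ systems, and then aggregate the $L+1$ guarantees while tracking the error through the $1/(L+1)$ factors built into Assumption~\ref{assump:pe}.

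First I would establish a per-system \emph{exact} one-step improvement. For a generic auxiliary LQR with matrices $(A_l, B_l, Q_l, R_l)$, the natural gradient direction is $2 E_{K_l}$ by Lemma~\ref{lemma:cost_grad_K}. Combining the \emph{gradient domination} (Polyak--{\L}ojasiewicz-type) inequality with the \emph{almost smoothness} expansion of the LQR cost established by \citet{fazel2018global}, the ideal update $K_l' = K_l - 2\eta_l E_{K_l}$ satisfies
\[
\tilde{C}(K_l') - \tilde{C}(K_l^*) \le \Bigl(1 - \eta_l\, \sigma_{\min}(\Phi_l)\, \sigma_{\min}(R_l)\, \|\Sigma_{K_l^*}\|^{-1}\Bigr)\bigl(\tilde{C}(K_l) - \tilde{C}(K_l^*)\bigr),
\]
where Assumption~\ref{assump:stepsize} is exactly what forces the quadratic remainder in the descent expansion to be dominated by the linear gain, yielding a genuine contraction factor $1 - 1/M_l$. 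The identical statement holds for the mean-field system with $(\bar{A}, \bar{B}, \bar{Q}+\breve{Q}, \bar{R}+\breve{R})$ and rate constant involving $\|\Sigma_{\bar{K}^*}\|$, $\sigma_{\min}(\bar{\Phi})$, $\sigma_{\min}(\bar{R})$.

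Next I would pass to the \emph{inexact} update $K_{l,n+1} = K_{l,n} - \eta_l \widehat{E}_{K_{l,n}}$. The realized and ideal next policies differ in operator norm by at most $\eta_l \delta_{l,n}$, and by the local Lipschitz/smoothness bounds of the LQR cost — captured by the policy-dependent polynomial $\Lambda(\|K_{l,n}\|, C(K_{l,0}))$ — the resulting change in cost is at most $\eta_l \delta_{l,n}\cdot\Lambda(\cdot,\cdot)$. Assumption~\ref{assump:pe} is calibrated so that this perturbation is a small fraction of the contraction gain times $\epsilon/(L+1)$; hence the inexact step still gives $\tilde{C}(K_{l,n+1}) \le \tilde{C}(K_{l,n})$ and contracts toward an $\epsilon/(L+1)$-neighborhood of $\tilde{C}(K_l^*)$. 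Summing these monotone decreases across all systems and using the decomposition yields $C(K_{n+1}) \le C(K_n)$, the monotone-sequence claim. Unrolling the geometric contractions $n = 0, \dots, N$, each gap decays at rate $1 - 1/M_l$, so the slowest constant $M = \max_l M_l$ governs; solving $(1 - 1/M)^N [C(K_0) - C(K^*)] \le \epsilon/(L+1)$ via $\log(1 - 1/M) \le -1/M$ gives $N > 2M \log\{[C(K_0)-C(K^*)](L+1)/\epsilon\}$, with the factor $2$ absorbing the inexactness error and the logarithmic approximation, and summing the $L+1$ residual gaps of size $\epsilon/(L+1)$ delivers $C(K_N) - C(K^*) \le \epsilon$.

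The main obstacle I anticipate is the bookkeeping of the inexact-gradient perturbation in the second step: the critic error $\delta_{l,n}$ enters multiplicatively against a \emph{policy-dependent} smoothness polynomial $\Lambda(\|K_{l,n}\|, C(K_{l,0}))$, so one must guarantee that $\|K_{l,n}\|$ and $C(K_{l,n})$ remain bounded along the entire trajectory in order to keep $\Lambda$ uniformly controlled and prevent the error from destroying the contraction. This calls for an induction that couples the monotone cost decrease — which keeps $C(K_{l,n}) \le C(K_{l,0})$ and thereby bounds $\|K_{l,n}\|$ and $\|\Sigma_{K_{l,n}}\|$ — with the per-step contraction bound, so that boundedness and contraction reinforce each other at every iteration and the argument closes.
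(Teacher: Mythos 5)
Your proposal follows essentially the same route as the paper: decompose $C(K)$ into $L+1$ independent single-agent LQR objectives, prove per-system linear contraction for the exact natural-gradient step via gradient domination plus the almost-smoothness expansion, control the inexact step through the critic-error perturbation bound with the policy-dependent polynomial $\Lambda$, close the argument by induction on $C(K_{l,n})\le C(K_{l,0})$, and aggregate with the $1/(L+1)$ accuracy split. The only discrepancy is cosmetic --- you write the ideal update as $K_l - 2\eta_l E_{K_l}$ while the algorithm and the paper's Lemma~\ref{lemma:exactgradient} use $K_l - \eta_l E_{K_l}$, a constant rescaling of the stepsize that does not affect the argument.
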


Theorem \ref{thm:multi_ac} establishes that Algorithm \ref{algo:multi-agent_LQR} converges globally to the optimal policy $K^*$ at a linear rate. Moreover, note that our algorithm involves $L+1$ decoupled optimal control problems whose complexity does not depend on the number of agents in each subpopulation. This feature and Theorem~\ref{thm:multi_ac} together guarantee the computational efficiency of our algorithm, allowing us to escape the curse of dimensionality.
In the next subsection, we prove our main result. 

\subsection{Proof of Theorem~\ref{thm:multi_ac}}
\label{sec:main_proof}

By Propositions \ref{lemma:decpl} and \ref{prop:ergodic_cost_decompositon}, the original LQR problem defined in \eqref{eq:original_system} decomposes into optimal control problems for each auxiliary system. Therefore, it is sufficient to prove the global convergence for each auxiliary system. Note that, since agents in $\mathcal{S}_l$ share a common policy $\pi_{K_{l}}$ in the algorithm, the optimal control problem reduces to the single agent case. In particular, we need to prove the convergence theorem for a single agent LQR problem. In the rest of this section, we no longer distinguish each LQR problem and remove all notation  that indicates the subpopulations and the mean field. We first present the global convergence result of the hierarchical natural actor-critic algorithm for the single agent LQR problem.

\begin{theorem}[Global convergence of actor-critic algorithm]\label{thm:ac}
	Suppose  the initial policy ${\pi}_{K_{0}}$ satisfies 
	\#\label{eq:single_assump1}
	\rho({\pi}_{K_{0}}) < 1,
	\# 
	the stepsize satisfies 
	\#\label{eq:single_assump2}
	{\eta} \leq\left[\|{R}\|+\sigma_{\min }^{-1}({\Phi}) \cdot\|{B}\|^{2} \cdot C\left({K}_{0}\right)\right]^{-1},
	\# 
	and that the estimate of the natural gradient given by the critic step satisfies 
	\#\label{eq:single_assump3}
	\left\|\widehat{E}_{{K}_{n}}-E_{{K}_{n}}\right\| \leq {\delta}_{n},
	\#
	where $\delta_n$ is a positive value satisfying
	\#\label{eq:single_assump3_}
	\delta_n& \leq   \epsilon/2 \cdot\eta \cdot \sigma_{\min }(\Phi) \cdot \sigma_{\min }(R) \cdot\left\|\Sigma_{ K^*}\right\|^{-1}\cdot 1/\Lambda(\| K_n\|,C( K_0))
	\#
	and $\Lambda(\cdot,\cdot)$ is a polynomial.
	Then $\left\{C\left(K_{n}\right)\right\}_{n \geq 0}$ is a monotonously decreasing sequence. Moreover, for any $\epsilon>0$, if the iteration number $N$ is large enough such that
	\[
	N > 2\left\|\Sigma_{K^{*}}\right\|\cdot\eta^{-1}\cdot\sigma^{-1}_{\min }(\Phi)\cdot\sigma^{-1}_{\min }(R)\cdot\log\left\{\left[C(K_0)-C(K^*)\right]/\epsilon\right\},
	\]
	we have $C\left(K_{N}\right)-C\left(K^{*}\right) \leq \epsilon$.
\end{theorem}

We remark that by setting the number of iterations in the GTD Algorithm \ref{algo:gtd} sufficiently large, we can make $\delta_n$ sufficiently small so that \eqref{eq:single_assump3_} is satisfied. Using Theorem~\ref{thm:ac}, Theorem \ref{thm:multi_ac} directly follows from Propositions \ref{lemma:decpl} and \ref{prop:ergodic_cost_decompositon}. In the reminder of the section, we prove Theorem~\ref{thm:ac}. 

Our proof can be decomposed into three steps. In the first step, we study the geometry of $C(K)$ as a function of $K$. In general, natural gradient descent methods are not guaranteed to converge to the global optimal due to non-convexity of the LQR optimization problem. However, the geometric condition called \emph{gradient domination}  \citep{fazel2018global} in the LQR setting helps us  prove convergence. In the second step, we show that the policy is improved at a linear rate along the direction of the oracle natural policy gradient at each iteration. In the third step, we show that the policy updated with the estimated natural policy gradient has a cost close to that of the policy updated with the oracle natural policy gradient, thus we can show linear convergence for it as well.

The following lemma establishes the gradient domination condition.

\begin{lemma}[Gradient domination] \label{lemma:dom}
	Let $K^{*}$ be an optimal policy for agents in $\mathcal{N}$. 
	Suppose $K$ has finite cost in the sense that $\rho(A-B K)<1$. Then it holds that
	\begin{multline}
	\label{eq:dom_l}
	\sigma_{\min }(\Phi) \cdot\left\|R+B^{\top} P_{K} B\right\|^{-1} \cdot \tr\left(E_{K}^{\top} E_{K}\right)
	\leq
	C(K)-C\left(K^{*}\right)  \\
	\leq 1 / \sigma_{\min }(R) \cdot\left\|\Sigma_{K^{*}}\right\| \cdot \tr\left(E_{K}^{\top} E_{K}\right).
	\end{multline}
\end{lemma}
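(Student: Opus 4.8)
The plan is to establish the gradient domination condition for the single-agent LQR cost $C(K)$ by following the cost-difference approach of \citet{fazel2018global} and \citet{yang2019global}. First I would derive a \emph{cost-difference lemma}: for any two stabilizing policies $K$ and $K'$ (so that $\rho(A-BK)<1$ and $\rho(A-BK')<1$), express $C(K')-C(K)$ in closed form using the value matrices $P_K$, $P_{K'}$ and the stationary covariances $\Sigma_{K'}$. The standard identity takes the form
\$
C(K') - C(K) = \tr\!\left(\Sigma_{K'} \left[ 2(K'-K)^{\top} E_K + (K'-K)^{\top}(R + B^{\top}P_K B)(K'-K)\right]\right),
\$
which one proves by telescoping the per-step cost difference along the trajectory of the $K'$-controlled chain and invoking the Lyapunov/Bellman equations defining $P_K$ and $E_K$ from Lemma~\ref{lemma:cost_grad_K}. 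This is the key algebraic engine, and I expect it to be the main technical obstacle, since it requires carefully matching the quadratic-in-$K$ cost terms against the Riccati-type recursion for $P_K$ while tracking the noise covariance $\Phi$ through the trace.

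For the \emph{lower bound} on $C(K)-C(K^*)$ (the left inequality in \eqref{eq:dom_l}), I would view the cost-difference identity as a function of $K'$ and minimize it over $K'$ pointwise. Completing the square in $(K'-K)$ inside the trace shows the minimizing choice is $K' = K - (R+B^{\top}P_K B)^{-1} E_K$, and the resulting minimal value equals $-\tr\!\left(\Sigma_{K'} E_K^{\top}(R+B^{\top}P_K B)^{-1} E_K\right)$. Since $C(K^*) \le C(K')$ for this particular $K'$, and since $\Sigma_{K'} \succeq \Phi$ (as $\Sigma_{K'}$ solves a Lyapunov equation with noise $\Phi$ plus a PSD term, giving $\Sigma_{K'} \succeq \sigma_{\min}(\Phi) I$), I would bound
\$
C(K) - C(K^*) \ge \sigma_{\min}(\Phi)\cdot \tr\!\left(E_K^{\top}(R+B^{\top}P_K B)^{-1} E_K\right) \ge \sigma_{\min}(\Phi)\cdot \|R+B^{\top}P_K B\|^{-1}\cdot \tr(E_K^{\top}E_K),
\$
where the last step uses $(R+B^{\top}P_K B)^{-1} \succeq \|R+B^{\top}P_K B\|^{-1} I$. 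This yields the stated lower bound.

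For the \emph{upper bound} (the right inequality), I would instead set $K = K^*$ as the comparison policy, so that $E_{K^*}=0$ (optimality), and apply the cost-difference identity in the direction from $K^*$ to $K$. Dropping the PSD quadratic term $(K-K^*)^{\top}(R+B^{\top}P_{K^*}B)(K-K^*)$ gives an upper bound on $C(K)-C(K^*)$ that is linear in $(K-K^*)$; after using $E_K = (R+B^{\top}P_K B)(K-K^*)$-type relations and a trace Cauchy–Schwarz inequality to re-express everything in terms of $\tr(E_K^{\top}E_K)$, the operator-norm bound $\Sigma_{K^*} \preceq \|\Sigma_{K^*}\| I$ together with $R^{-1}\preceq \sigma_{\min}^{-1}(R) I$ delivers the factor $\sigma_{\min}^{-1}(R)\cdot\|\Sigma_{K^*}\|$. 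The only care needed is the clean conversion between the $(K-K^*)$ representation and the natural-gradient quantity $E_K$; both directions follow from the definitions in Lemma~\ref{lemma:cost_grad_K}, so no new machinery is required beyond the cost-difference lemma established in the first step.
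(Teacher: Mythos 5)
Your cost-difference identity and your proof of the \emph{lower} bound coincide with the paper's argument: the paper's Lemma~\ref{lemma:cost_diff} is exactly your identity (the sum of advantages along the noiseless trajectory started from $x\sim N(0,\Phi_\sigma)$ integrates to the $\Sigma_{K'}$-weighted trace), and the lower bound is obtained, as you say, by taking $K'=K-(R+B^{\top}P_KB)^{-1}E_K$, using $C(K^*)\le C(K')$, $\Sigma_{K'}\succeq\Phi$, and $(R+B^{\top}P_KB)^{-1}\succeq\|R+B^{\top}P_KB\|^{-1}I$.

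Your \emph{upper} bound argument, however, has a genuine gap. Swapping the roles so that $K^*$ is the base policy and $K$ the comparison yields the exact identity
\[
C(K)-C(K^{*})=\tr\!\left[\Sigma_{K}\,(K-K^{*})^{\top}\left(R+B^{\top}P_{K^{*}}B\right)(K-K^{*})\right],
\]
which involves $\Sigma_{K}$ and $P_{K^{*}}$, not $\Sigma_{K^{*}}$ and $E_{K}$. Three things go wrong from here. First, dropping the positive semi-definite quadratic term \emph{decreases} the right-hand side, so it produces a lower bound (namely $C(K)-C(K^{*})\ge 2\tr[\Sigma_K(K-K^{*})^{\top}E_{K^{*}}]=0$), not an upper bound. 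Second, the relation $E_{K}=(R+B^{\top}P_{K}B)(K-K^{*})$ is false in general: $E_{K}=(R+B^{\top}P_{K}B)K-B^{\top}P_{K}A$, and $K^{*}\neq(R+B^{\top}P_{K}B)^{-1}B^{\top}P_{K}A$ (the latter is the one-step greedy improvement of $K$, not the global optimum), so there is no clean conversion between $(K-K^{*})$ and $E_{K}$. Third, the stationary covariance appearing in this direction is $\Sigma_{K}$, so the factor $\|\Sigma_{K^{*}}\|$ in the statement cannot emerge. The correct route — and the one the paper takes — keeps $K$ as the base policy, takes $K^{*}$ as the comparison, and reuses the \emph{same} completion of squares you employed for the lower bound, now as a pointwise lower bound on the advantage $A_{K,K^{*}}(x)\ge-\tr[xx^{\top}E_{K}^{\top}(R+B^{\top}P_{K}B)^{-1}E_{K}]$ (equation~\eqref{eq:lower_bound_advantage}). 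This gives $C(K)-C(K^{*})\le\tr[\Sigma_{K^{*}}E_{K}^{\top}(R+B^{\top}P_{K}B)^{-1}E_{K}]$, and the bounds $\tr(\Sigma E^{\top}M^{-1}E)\le\|\Sigma\|\cdot\|M^{-1}\|\cdot\tr(E^{\top}E)$ and $\|(R+B^{\top}P_{K}B)^{-1}\|\le\sigma_{\min}^{-1}(R)$ finish the proof; optimality of $K^{*}$ (i.e., $E_{K^{*}}=0$) is not needed for this direction at all.
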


Note that the upper bound in \eqref{eq:dom_l} takes the form $ 1 / \sigma_{\min }(R) \cdot\left\|\Sigma_{K^{*}}\right\| \cdot \left\langle E_{K}, E_{K}\right\rangle$. Therefore, updating the policy $K$ with the natural gradient $E_K$ in the actor step of Algorithm \ref{algo:multi-agent_LQR} minimizes the upper bound of the difference $C(K)-C\left(K^{*}\right)$. Moreover, the natural gradient will not vanish before reaching the optimum. The following lemma shows that the policy is improved at a linear rate along the direction of the true natural policy gradient, provided that the step size is small enough.

\begin{lemma}\label{lemma:exactgradient}
	Suppose \eqref{eq:single_assump1} and \eqref{eq:single_assump2} hold.
	Let $\{K_{n}\}$ be the sequence induced by the natural policy gradient algorithm started at the initial policy $K_0$. Let $K^\prime_{n+1} =K_{n}-\eta \cdot E_{K_n} $ be a single update along the direction of the true natural policy gradient. Then we have
	\begin{align}\label{eq:exactcostreduction}
	C\left(K_{n+1}^{\prime}\right)-C\left(K_{n}\right)&\leq  -\eta \cdot \sigma_{\min }(\Phi) \cdot \sigma_{\min }(R) \cdot\left\|\Sigma_{K^{*}}\right\|^{-1} \cdot\left[C\left(K_{n}\right)-C\left(K^{*}\right)\right]
	\end{align}
	and 
	\begin{align}\label{eq:linear_in_exact}
	C\left(K_{n+1}^{\prime}\right)-C\left(K^{*}\right)&\leq \left[1 -\eta \cdot \sigma_{\min }(\Phi) \cdot \sigma_{\min }(R) \cdot\left\|\Sigma_{K^{*}}\right\|^{-1} \right]\cdot\left[C\left(K_{n}\right)-C\left(K^{*}\right)\right].
	\end{align}
\end{lemma}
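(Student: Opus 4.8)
The plan is to prove the one-step descent inequality \eqref{eq:exactcostreduction} first; the linear-rate bound \eqref{eq:linear_in_exact} then follows immediately by adding $C(K_n)-C(K^{*})$ to both sides of \eqref{eq:exactcostreduction} and identifying $K'_{n+1}$ on the left. The central ingredient I would invoke is an exact cost-difference identity for the ergodic LQR cost: for any two stabilizing policies $K$ and $K'$, writing $\Delta = K'-K$ and using the notation of Lemma~\ref{lemma:cost_grad_K},
\[
C(K') - C(K) = \tr\!\left(\Sigma_{K'}\left[\,2\Delta^{\top} E_{K} + \Delta^{\top}\left(R + B^{\top} P_{K} B\right)\Delta\,\right]\right),
\]
where $\Sigma_{K'}$ is the stationary state covariance under $K'$. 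I would either cite this identity from the appendix or derive it by telescoping the Lyapunov recursion for $P_K$ along a trajectory generated by the policy $K'$, taking the time average.

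Granting the identity, the next step is to specialize to the exact natural-gradient step $\Delta = K'_{n+1}-K_n = -\eta\,E_{K_n}$, which gives
\[
C(K'_{n+1}) - C(K_n) = \tr\!\left(\Sigma_{K'_{n+1}}\left[\,\eta^{2} E_{K_n}^{\top}\left(R + B^{\top} P_{K_n} B\right) E_{K_n} - 2\eta\, E_{K_n}^{\top} E_{K_n}\,\right]\right).
\]
I would then absorb the quadratic term using Assumption~\ref{assump:stepsize}. From Lemma~\ref{lemma:cost_grad_K}, $C(K_n) = \tr(P_{K_n}\Phi_\sigma) + \sigma^{2}\tr(R) \ge \sigma_{\min}(\Phi)\,\|P_{K_n}\|$, so $\|P_{K_n}\| \le C(K_n)/\sigma_{\min}(\Phi) \le C(K_0)/\sigma_{\min}(\Phi)$ along the sequence (using monotonicity, which holds by induction in the main proof). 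Hence $\|R + B^{\top} P_{K_n} B\| \le \|R\| + \sigma_{\min}^{-1}(\Phi)\|B\|^{2} C(K_0)$, and Assumption~\ref{assump:stepsize} yields $\eta\,(R + B^{\top} P_{K_n} B) \preceq I$. This implies $\eta^{2} E_{K_n}^{\top}(R + B^{\top} P_{K_n} B) E_{K_n} \preceq \eta\, E_{K_n}^{\top} E_{K_n}$, so the bracket above is $\preceq -\eta\, E_{K_n}^{\top} E_{K_n}$.

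Because $\Sigma_{K'_{n+1}}$ solves a Lyapunov equation with driving noise $\Phi_\sigma = \Phi + \sigma^{2} BB^{\top} \succeq \sigma_{\min}(\Phi) I$, we have $\Sigma_{K'_{n+1}} \succeq \sigma_{\min}(\Phi) I$, and combining with the previous display (using $\tr(\Sigma M)\le 0$ when $\Sigma\succeq 0$ and $M\preceq 0$, and $\tr(\Sigma E^\top E)\ge \sigma_{\min}(\Sigma)\tr(E^\top E)$) gives
\[
C(K'_{n+1}) - C(K_n) \le -\eta\,\tr\!\left(\Sigma_{K'_{n+1}} E_{K_n}^{\top} E_{K_n}\right) \le -\eta\,\sigma_{\min}(\Phi)\,\tr\!\left(E_{K_n}^{\top} E_{K_n}\right).
\]
Finally, the upper bound in the gradient-domination Lemma~\ref{lemma:dom} rearranges to $\tr(E_{K_n}^{\top} E_{K_n}) \ge \sigma_{\min}(R)\,\|\Sigma_{K^{*}}\|^{-1}\,[C(K_n)-C(K^{*})]$, and substituting this in yields exactly \eqref{eq:exactcostreduction}. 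I expect the main obstacle to be twofold: establishing the cost-difference identity cleanly, and certifying that $K'_{n+1}$ is itself stabilizing so that $\Sigma_{K'_{n+1}}$ is well-defined and the identity applies. The stability issue is the genuinely delicate point, since the descent bound and stability are interdependent; I would resolve it by the standard openness/continuity argument (tracking $K_n - \tau E_{K_n}$ for $\tau\in[0,\eta]$ and using that the cost stays bounded, hence the policy cannot leave the open set of stabilizing policies during a step of the admissible size), which simultaneously certifies finiteness of $C(K'_{n+1})$ and preserves the bound $C(K'_{n+1})\le C(K_n)\le C(K_0)$ needed to continue the induction.
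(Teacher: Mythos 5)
Your proposal is correct and follows essentially the same route as the paper: the cost-difference identity you state is exactly Lemma~\ref{lemma:cost_diff} integrated against the stationary covariance, the quadratic term is absorbed via Assumption~\ref{assump:stepsize} together with the bound $\|P_{K_n}\|\leq C(K_0)/\sigma_{\min}(\Phi)$ from Lemma~\ref{bound_mats}, the factor $\sigma_{\min}(\Phi)$ comes from $\Sigma_{K'_{n+1}}\succeq\Phi$, and the final step is the upper bound in the gradient-domination Lemma~\ref{lemma:dom}. Your explicit attention to the stabilizability of $K'_{n+1}$ is a point the paper glosses over (it only posits the induction hypothesis $C(K_n)\leq C(K_0)$), but it does not change the argument.
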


To draw a similar conclusion on the update $K_{t+1}=K_{t}-\eta \cdot \widehat{E}_{K t}$, we need to link the objectives $C(K_{t+1}^\prime)$ and $C(K_{t+1})$. The following lemma bounds the difference between $C\left(K_{n+1}\right)$ and $C\left(K_{n+1}^{\prime}\right)$ by problem parameters.

\begin{lemma} 
\label{lemma:bound_update_diff}
	Suppose \eqref{eq:single_assump1} - \eqref{eq:single_assump3_} hold. Furthermore, suppose $C(K_n)<C(K_0)$. Let $K_{n+1}^{\prime}=K_{n}-\eta \cdot E_{K_n}$ and $K_{n+1}=K_{n}-\eta \cdot \widehat{E}_{K_n}$ be updates along the exact and estimated natural policy gradient at time $t$. Then, for any fixed $\epsilon > 0$, it holds that
	\begin{align}\label{costdiff}
	\left|C\left(K_{n+1}\right)-C\left(K_{n+1}^{\prime}\right)\right| \leq \epsilon / 2\cdot \eta \cdot \sigma_{\min }(\Phi) \cdot \sigma_{\min }(R) \cdot\left\|\Sigma_{K^{*}}\right\|^{-1}.
	\end{align}
\end{lemma}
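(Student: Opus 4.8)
The plan is to reduce the statement to a local Lipschitz estimate for the LQR cost on a sublevel set of stabilizing policies. The two iterates differ only through the critic error: since $K_{n+1}^{\prime}=K_n-\eta E_{K_n}$ and $K_{n+1}=K_n-\eta\widehat E_{K_n}$, we have $K_{n+1}-K_{n+1}^{\prime}=-\eta(\widehat E_{K_n}-E_{K_n})$, and Assumption~\ref{assump:pe} gives $\|K_{n+1}-K_{n+1}^{\prime}\|=\eta\|\widehat E_{K_n}-E_{K_n}\|\le\eta\delta_n$. Thus it suffices to show that the cost changes by at most $\tfrac{\epsilon}{2}\cdot\eta\cdot\sigma_{\min}(\Phi)\cdot\sigma_{\min}(R)\cdot\|\Sigma_{K^{*}}\|^{-1}$ when $K$ is perturbed by $\eta\delta_n$, i.e.\ to bound $|C(K_{n+1})-C(K_{n+1}^{\prime})|$ by the distance between the two policies times a problem-dependent modulus.

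To obtain such a modulus I would use the closed form $C(K)=\tr(P_K\Phi_\sigma)+\sigma^2\tr(R)$ from Lemma~\ref{lemma:cost_grad_K}, where $\Phi_\sigma:=\Phi+\sigma^2 BB^{\top}$, so that $|C(K_{n+1})-C(K_{n+1}^{\prime})|=|\tr((P_{K_{n+1}}-P_{K_{n+1}^{\prime}})\Phi_\sigma)|\le\|\Phi_\sigma\|_{\mathrm F}\cdot\|P_{K_{n+1}}-P_{K_{n+1}^{\prime}}\|_{\mathrm F}$ by Cauchy--Schwarz in the trace inner product. The crux is then a perturbation bound for the map $K\mapsto P_K$ defined by the Lyapunov equation $P_K=Q+K^{\top}RK+(A-BK)^{\top}P_K(A-BK)$: on the sublevel set $\{K:\rho(A-BK)<1,\ C(K)\le C(K_0)\}$ this map is locally Lipschitz, with modulus a polynomial $\Lambda(\|K\|,C(K_0))$ in the policy norm and the cost level. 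This follows from the standard LQR perturbation analysis of \citet{fazel2018global} and \citet{yang2019global}, using that on a sublevel set the quantities $\|\Sigma_K\|$, $\|P_K\|$, and $\sigma_{\min}(\Phi_\sigma)$ are all uniformly controlled by $C(K_0)$.

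Combining the two ingredients yields $|C(K_{n+1})-C(K_{n+1}^{\prime})|\le\Lambda(\|K_n\|,C(K_0))\cdot\eta\delta_n$, and substituting the bound on $\delta_n$ from Assumption~\ref{assump:pe}, in which precisely the factor $1/\Lambda(\|K_n\|,C(K_0))$ appears, cancels the polynomial and leaves the desired $\tfrac{\epsilon}{2}\cdot\eta\cdot\sigma_{\min}(\Phi)\cdot\sigma_{\min}(R)\cdot\|\Sigma_{K^{*}}\|^{-1}$ after absorbing the remaining constants and the bounded step size from Assumption~\ref{assump:stepsize}. The main obstacle --- and the step requiring the most care --- is the perturbation/stability analysis: I must first certify that $K_{n+1}$ is itself stabilizing so that $C(K_{n+1})$ is finite, which I would do by noting that $K_{n+1}^{\prime}$ lies in the sublevel set (it improves on $K_n$ by Lemma~\ref{lemma:exactgradient} together with $C(K_n)<C(K_0)$) and that the tiny perturbation $\eta\delta_n$ keeps $K_{n+1}$ within a slightly enlarged sublevel set where the Lipschitz estimate remains valid; making the radius of this neighborhood and the polynomial $\Lambda$ mutually consistent is the delicate bookkeeping at the heart of the proof.
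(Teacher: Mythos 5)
Your proposal follows essentially the same route as the paper: the paper also reduces the claim to the policy-space distance $\|K_{n+1}-K_{n+1}^{\prime}\|=\eta\|\widehat E_{K_n}-E_{K_n}\|\le\eta\delta_n$, invokes the perturbation bound for $C(K)$ (Lemma~\ref{lemma:perturbation}, itself proved via $|C(K)-C(K^{\prime})|=|\tr[(P_K-P_{K^{\prime}})\Phi_\sigma]|$ and the Lipschitz estimate for $K\mapsto P_K$ from \citet{fazel2018global}) after first verifying its applicability condition \eqref{eq:policyboundcondition} on the sublevel set $C(K_n)\le C(K_0)$, and then cancels the resulting polynomial $\Lambda_2(\|K_n\|,C(K_0))$ against the factor $1/\Lambda(\|K_{n}\|,C(K_{0}))$ built into Assumption~\ref{assump:pe}. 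The stability/bookkeeping concern you flag at the end is exactly what the paper's verification of \eqref{eq:pertur_condi} via the bounds \eqref{eq:ABKbound}--\eqref{Kdiffbound} accomplishes.
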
 

When $C\left(K_{n}\right)-C\left(K^{*}\right) \geq \epsilon$, combining \eqref{eq:exactcostreduction} and \eqref{costdiff}, we have
\[
C\left(K_{n+1}\right)-C\left(K_{n}\right) \leq-\epsilon / 2 \cdot \eta \cdot \sigma_{\min }(\Phi) \cdot \sigma_{\min }(R) \cdot\left\|\Sigma_{K^{*}}\right\|^{-1}<0.
\] 
This shows that $\left\{C\left(K_{n}\right)\right\}_{n \geq 0}$ is monotonically decreasing. Moreover, combining \eqref{eq:linear_in_exact} with \eqref{costdiff}, we further conclude that 
\begin{equation*}
C\left(K_{n+1}\right)-C\left(K^*\right)\leq \left[ 1- \eta /2\cdot \sigma_{\min }(\Phi) \cdot \sigma_{\min }(R) \cdot\left\|\Sigma_{K^{*}}\right\|^{-1} \right] \left[C\left(K_{n}\right)-C\left(K^{*}\right)\right],
\end{equation*}
which shows a linear convergence in terms of the policy parameter. By direct computation, if the iteration number $N$ is large enough such that
\[
N>2\left\|\Sigma_{K^{*}}\right\|\cdot\eta^{-1}\cdot\sigma^{-1}_{\min }(\Phi)\cdot\sigma^{-1}_{\min }(R)\cdot\log\left\{\left[C(K_0)-C(K^*)\right]/\epsilon\right\},
\]
it holds that $C\left(K_{N}\right)-C\left(K^{*}\right) \leq \epsilon$. This concludes the  proof of Theorem \ref{thm:ac}.

\section{Technical Proofs}
\label{section:proof_sketch}

In this section, we present the proofs of our technical results in \S\ref{section:main_result}. Proofs of the supporting lemmas are deferred to Appendix~\S\ref{sec:supp_pf}.

\subsection{Proof of Lemma~\ref{lemma:dnms_cost}}
\label{pf:dnms_cost}

Recall that the dynamics and cost function of the LQR problem specified in \eqref{eq:original_system} are given as
\begin{align}
\mathbf{x}_{t+1}=A  \mathbf{x}_{t}+B \mathbf{u}_{t}+\mathbf{w}_{t}, \quad c_{gt}\left(\mathbf{x}_{t}, \mathbf{u}_{t}\right)=\mathbf{x}_{t}^{\top} Q \mathbf{x}_{t}+\mathbf{u}_{t}^{\top} R \mathbf{u}_{t},
\end{align}
and satisfy the partial exchangeability with exchangeable partition $\mathcal{N} = \{\mathcal{N}^{l}\}_{l \in \mathcal{L} :=[L]}$.

In the following, let $A^{i, j}$ denote the $(i,j)$-th block of $A$. Fix a subpopulation $l \in \mathcal{L}$. For agents $i, j \in \mathcal{N}^{l}$, the exchangeability in Definition~\ref{def:exchangeability} implies $A^{i, i}=A^{j, j}$ and $B^{i, i}=B^{j, j}$, denoted by $a^{l}$ and $b^{l}$, respectively. For $i, j \in \mathcal{N}^{l}$ and $n, m \in \mathcal{N}^{k}, l \neq k$, we have $A^{i, n}=A^{j, m}$ and $B^{i, n}=B^{j, m}$, denoted by $\breve{a}^{l, k}$ and $\breve{b}^{l, k}$, respectively. For $i, j \in \mathcal{N}^{l}$, we have $Q^{i, i}=Q^{j, j}$ and $R^{i, i}=R^{j, j}$, denoted by $q^{l}$ and $r^{l}$, respectively. For $i, j \in \mathcal{N}^{l}$ and $n, m \in \mathcal{N}^{k}, l \neq k$, we have $Q^{i, n}=Q^{j, m}$ and $R^{i, n}=R^{j, m}$, denoted by $\breve{q}^{l, k}$ and $\breve{p}^{l, k}$, respectively. With this notation we provide the explicit forms for $A_{l}$, $B_{l}$, $\bar{A}_l$, $\bar{B}_l$, $Q_{l}$, $R_{l}$, $\bar{Q}$ and $\bar{R}$. 

First, we define $A_{l}$ and $B_{l}$ as
\[
A_{l}:=a^{l}-\breve{a}^{l, l}, \qquad
B_{l}:=b^{l}-\breve{b}^{l, l}.
\]
We also define $\breve{A}_l$, $\breve{B}_l$, $Q_{l}$, and $R_{l}$ as
\begin{gather*}
\breve A_l  := \operatorname{cols}\left(\left|\mathcal{N}^{1}\right| \breve{a}^{l, 1},\dots, \left|\mathcal{N}^{L}\right| \breve{a}^{l, L}\right), \qquad
\breve B_l  := \operatorname{cols}\left(\left|\mathcal{N}^{1}\right| \breve{b}^{l, 1},\dots, \left|\mathcal{N}^{L}\right| \breve{b}^{l, L}\right), \\
Q_{l}:=\left(q^{l}- \breve{q}^{l, l}\right),  \qquad R_{l}:=\left(r^{l}-\breve{r}^{l, l}\right).
\end{gather*}
Finally, we define $\breve{Q}$ and $\breve{R}$ by specifying each block as
\[
\breve{Q}^{l, k}:=\left|\mathcal{N}^{l} \right| \left|\mathcal{N}^{k}\right| \breve{q}^{l, k}, \qquad 
\breve{R}^{l, k}:=\left|\mathcal{N}^{l}\right|\left|\mathcal{N}^{k}\right| \breve{r}^{l, k}. 
\]
We remark that the dimensions of $A_{l}$, $B_{l}$, $\breve{A}_l$, $\breve{B}_l$, $Q_{l}$, $R_{l}$,  $\breve{Q}$, and $\breve{R}$ do not depend on the size of each subpopulation. Instead, they are determined by the subpopulations' state- and action- dimensions.

With the definitions above, we are ready to present the proof. The dynamics of agent $i$ of subpopulation $k$ is 
\[
x_{t+1}^{i}=A^{i \cdot} \mathbf{x}_{t}+B^{i \cdot} \mathbf{u}_{t}+w_{t}^{i},
\]
where $A^{i \cdot}$ and $B^{i \cdot}$ denote, respectively, the rows corresponding to the $i$-th column blocks of $A$ and $B$. By direct computation, we have
\begin{align*}
A^{i \cdot} \mathbf{x}_{t} 
&=A^{i, i} x_{t}^{i}+\sum_{j \in \mathcal{N}^{l}, j \neq i} A^{i, j} x_{t}^{j}+\sum_{k \in \mathcal{L}, k \neq l} \sum_{n \in \mathcal{N}^{k}} A^{i, n} x_{t}^{n} \\ 
&=a^{l} x_{t}^{i}+\overline{a}^{l, l} \sum_{j \in \mathcal{N}^{l}, j \neq i} x_{t}^{j}+\sum_{k \in \mathcal{L}, k \neq l} \breve{a}^{l, k} \sum_{n \in \mathcal{N}^{k}} x_{t}^{n} \\
&=a^{l} x_{t}^{i}+\overline{a}^{l, l}\left(\left|\mathcal{N}^{l}\right| \overline{x}_{t}^{l}-x_{t}^{i}\right)+\sum_{k \in \mathcal{L}, k \neq l} \breve{a}^{l, k}\left|\mathcal{N}^{k}\right| \bar{x}_{t}^{k} \\ 
&= A_{l} x_{t}^{i}+\sum_{k \in \mathcal{L}} \left|\mathcal{N}^{k}\right| \breve{a}^{l, k} \bar{x}_{t}^{k}\\
&= A_{l} x_{t}^{i}+\breve{A}_{l} \bar{\mathbf{x}}_{t}.
\end{align*}
Similarly, $B^{i \cdot} \mathbf{u}_{t}=B_{l} u_{t}^{i}+\breve{B}_{l} \bar{\mathbf{u}}_{t}$.
Moreover, we have
\begin{align*}
\mathbf{x}_{t}^{\top} Q \mathbf{x}_{t}&= \sum_{l \in \mathcal{L}} \sum_{k \in \mathcal{L}} \sum_{i \in \mathcal{N}^{l}} \sum_{j \in \mathcal{N}^{k}}\left(x_{t}^{i}\right)^{\top} Q^{i, j} x_{t}^{j} \notag\\
&= \sum_{l \in \mathcal{L}} \sum_{k \in \mathcal{L}, k \neq l} \sum_{i \in \mathcal{N}^{l}} \sum_{j \in \mathcal{N}^{k}}\left(x_{t}^{i}\right)^{\top} \breve{q}^{l, k} x_{t}^{j} \notag\\ 
&\quad+\sum_{l \in \mathcal{L}} \sum_{i \in \mathcal{N}^{l}} \sum_{j \in \mathcal{N}^{l}, j \neq i}\left(x_{t}^{i}\right)^{\top} \breve{q}^{l, l} x_{t}^{j}+\sum_{l \in \mathcal{L}} \sum_{i \in \mathcal{N}^{l}}\left(x_{t}^{i}\right)^{\top} q^{l} x_{t}^{i} \notag\\
&=\sum_{l \in \mathcal{L}} \sum_{k \in \mathcal{L}, k \neq l}\left|\mathcal{N}^{l}\right|\left|\mathcal{N}^{k}\right|\left(\bar{x}_{t}^{l}\right)^{\top} \breve{q}^{l, k} \bar{x}_{t}^{k} +\sum_{l \in \mathcal{L}} \sum_{i \in \mathcal{N}^{l}} \sum_{j \in \mathcal{N}^{l}}\left|\mathcal{N}^{l}\right|^{2}\left(\bar{x}_{t}^{l}\right)^{\top} \breve{q}^{l, l} \bar{x}_{t}^{l}\notag\\
&\quad-\sum_{l \in \mathcal{L}} \sum_{i \in \mathcal{N}^{l}}\left(x_{t}^{i}\right)^{\top} \breve{q}^{l, l} x_{t}^{i}+\sum_{l \in \mathcal{L}} \sum_{i \in \mathcal{N}^{l}}\left(x_{t}^{i}\right)^{\top} q^{l} x_{t}^{i} \notag\\
&=\sum_{l \in \mathcal{L}} \sum_{k \in \mathcal{L}}\left|\mathcal{N}^{l}\right|\left|\mathcal{N}^{k}\right|\left(\bar{x}_{t}^{l}\right)^{\top} \bar{q}^{l, k} \bar{x}_{t}^{k}+\sum_{l \in \mathcal{L}} \sum_{i \in \mathcal{N}^{l}}\left(x_{t}^{i}\right)^{\top}\left(q^{l}-\breve{q}^{l, l}\right) x_{t}^{i} \notag\\
&= \mathbf{x}_{t}^{\top} \breve{Q} \mathbf{x}_{t}+\sum_{l \in \mathcal{L}} \sum_{i \in \mathcal{N}^{l}} \left(x_{t}^{i}\right)^{\top} Q_{l} x_{t}^{i},
\end{align*}
Similarly, 
\[
\mathbf{u}_{t}^{\top} R \mathbf{u}_{t}=\mathbf{u}_{t}^{\top}  \bar{R} \mathbf{u}_{t}+\sum_{l \in \mathcal{L}} \sum_{i \in \mathcal{N}^{l}} \left(u_{t}^{i}\right)^{\top} R_{l} u_{t}^{i}.
\]
Thus, we conclude the proof.

\subsection{Proof of Proposition~\ref{lemma:decpl}}
\label{pf:decpl}

From the coordinate transformation
\[
\tilde{x}_{t}^{i} = {x}_{t}^{i} - \bar{x}_{t}^{l},\quad 
\tilde{u}_{t}^{i} = {u}_{t}^{i} - \bar{u}_{t}^{l},\quad 
\tilde{w}_{t}^{i} = {w}_{t}^{i} - \bar{w}_{t}^{l},
\]
we have that the subpopulation mean fields vanish in \eqref{second_dnmcs}. Therefore, we have
\#\label{eq:decpl_dnms_}
\tilde{x}_{t+1}^{i}&=A_{l} \tilde{x}_{t}^{i}+ B_{l} \tilde{u}_{t}^{i}+\tilde{w}_{t}^{i}.
\#
and
\#\label{eq:second_dnmcs_}
&\tilde x_{t+1}^{i}+ \bar{x}_{t}^{l}=A_{l} \tilde{x}_{t}^{i} +A_{l} \bar{x}_{t}^{l} +B_{l} \tilde{u}_{t}^{i} +B_{l} \bar{u}_{t}^{l} +\bar{A}_{l} \bar{\mathbf{x}}_{t}+\bar{B}_{l} \bar{\mathbf{u}}_{t}+\tilde{w}_{t}^{i}+ \bar{w}_{t}^{l}.
\#
Combining \eqref{eq:decpl_dnms_} and \eqref{eq:second_dnmcs_}, we have
\#
\bar{\mathbf{x}}_{t+1}&=\bar{A} \bar{\mathbf{x}}_{t}+\bar{B} \bar{\mathbf{u}}_{t}+\bar{\mathbf{w}}_{t},
\#
where
\[
\bar{\mathbf{x}}_{t}:=\vec\left(\bar{x}_{t}^{1}, \ldots, \bar{x}_{t}^{L}\right),\quad \bar{\mathbf{u}}_{t}:=\vec\left(\bar{u}_{t}^{1}, \ldots, \bar{u}_{t}^{L}\right),\quad \bar{\mathbf{w}}_{t}:=\vec\left(\bar{w}_{t}^{1}, \ldots, \bar{w}_{t}^{L}\right).
\]
To prove the decomposition shown by \eqref{eq:decpl_cost2}, \eqref{eq:decpl_cost1} and \eqref{eq:decpl_cost}, observe that for any $i \in \mathcal{N}^l$ and $l \in \mathcal{L}$, we have
\[
\sum_{i=1}^{\mathcal{N}^l}\left(x^{i}\right)^{\top} Q_l x^{i}= \sum_{i=1}^{\mathcal{N}^l}\left[\left(\tilde{x}^{i}\right)^{\top} Q_l \tilde{x}^{i}+|\mathcal{N}^l|\cdot (\bar{x}^l)^{\top} Q_l \bar{x}^l\right].
\]
Similar relationship holds for $R^l$ as well. By direct computation, we conclude that 
\#\label{eq:raw_cost}
c_{g t}\left(\mathbf{x}_{t}, \mathbf{u}_{t}\right) = \bar{\mathbf{x}}_{t}^{\top}\bar{Q} \bar{\mathbf{x}}_{t}+\bar{\mathbf{u}}_{t}^{\top}\bar{R}\bar{\mathbf{u}}_{t} +\sum_{l \in \mathcal{L}} \sum_{i \in \mathcal{N}^{l}} \left[\left(\widetilde{x}_{t}^{i}\right)^{\top} Q_{l} \widetilde{x}_{t}^{i}+\left(\widetilde{u}_{t}^{i}\right)^{\top} R_{l} \widetilde{u}_{t}^{i}\right].
\#
After replacing individual states $\{x^i_t\}_{i \in \mathcal{N}^{l}}$ in $\mathbf{x}_{t}$ with $\bar x_t^l$, and similarly for actions, as is shown in \eqref{eq:breve_trans}, we have 
\[
c_{gt}(\breve{\mathbf{x}}_{t}^{l},\breve{\mathbf{u}}_{t}^{l}) =  \bar{\mathbf{x}}_{t}^{\top}\bar{Q} \bar{\mathbf{x}}_{t}+\bar{\mathbf{u}}_{t}^{\top}\bar{R} \bar{\mathbf{u}}_{t} +\sum_{k \in \mathcal{L}, k\neq l}\sum_{i \in \mathcal{N}^{k}} \left[\left(\widetilde{x}_{t}^{i}\right)^{\top} Q_{k} \widetilde{x}_{t}^{i}+\left(\widetilde{u}_{t}^{i}\right)^{\top} R_{k} \widetilde{u}_{t}^{i}\right].
\]
Therefore, we have
\#\label{eq:indi_tilde}
\tilde{c}^l = c_{g t}\left(\mathbf{x}_{t}, \mathbf{u}_{t}\right) - c_{gt}(\breve{\mathbf{x}}_{t}^{l},\breve{\mathbf{u}}_{t}^{l}) = \sum_{i \in \mathcal{N}^{l}} \left[\left(\widetilde{x}_{t}^{i}\right)^{\top} Q_{l} \widetilde{x}_{t}^{i}+\left(\widetilde{u}_{t}^{i}\right)^{\top} R_{l} \widetilde{u}_{t}^{i}\right]
\#
and ${c}^l = {c}^l(\tilde{\mathbf{x}}^l_t,\tilde{\mathbf{u}}^l_t)$ only depends on coordinates in $\mathcal{S}_l$, which shows \eqref{eq:decpl_cost2}. 

After replacing all individual states and actions with the corresponding mean fields, as is shown in \eqref{eq:breve_trans2}, the last term in \eqref{eq:raw_cost} vanishes, and we have
\#\label{eq:indi_bar}
\bar{{c}} = c_{gt}(\breve{\mathbf{x}}_{t},\breve{\mathbf{u}}_{t})
= \bar{\mathbf{x}}_{t}^{\top}\bar{Q}\bar{\mathbf{x}}_{t}
+\bar{\mathbf{u}}_{t}^{\top}\bar{R}\bar{\mathbf{u}}_{t}.
\#
Thus $\bar{c} = \bar{c}\left(\bar{\mathbf{x}}_{t}, \bar{\mathbf{u}}_{t}\right)$ only depends on the mean-fields $\bar{\mathbf{x}}_{t}$ and $\bar{\mathbf{u}}_{t}$. This shows \eqref{eq:decpl_cost1}. Finally, \eqref{eq:decpl_cost} follows directly from \eqref{eq:indi_tilde} and \eqref{eq:indi_bar}.

\subsection{Proof of Proposition~\ref{prop:ergodic_cost_decompositon}}
\label{pf:ergodic_cost_decompositon}
By direct computation, we have
\begin{align*}
C(K)&= \lim _{T \rightarrow \infty}\mathbb{E}_{{\mathbf{x}}_{0} \sim {\mathcal{D}}_{0}, {\mathbf{u}}_{t} \sim \pi_{ K}\left(\cdot | {\mathbf{x}}_{t}\right)}\left[\frac{1}{T} \sum_{t=0}^{T-1}	{c_{gt}}\left({\mathbf{x}}_{t}, {\mathbf{u}}_{t}\right)\right]\notag\\
&= \lim _{T \rightarrow \infty}\mathbb{E}_{{\mathbf{x}}_{0} \sim {\mathcal{D}}_{0}, {\mathbf{u}}_{t} \sim \pi_{ K}\left(\cdot | {\mathbf{x}}_{t}\right)}\left[\frac{1}{T} \sum_{t=0}^{T-1}	{\bar{c}\left(\bar{\mathbf{x}}_{t}, \bar{\mathbf{u}}_{t}\right)+\frac{1}{T}\sum_{t=0}^{T-1}\sum_{l \in \mathcal{L}} \sum_{i \in \mathcal{N}^{l}} \tilde{c}^{l}\left(\tilde{x}_{t}^{i}, \tilde{u}_{t}^{i}\right)}\right]\notag\\
&= \lim _{T \rightarrow \infty} \mathbb{E}_{\bar x_0 \sim \bar{\mathcal{D}}_{0}, \bar u_{t} \sim \pi_{\bar K}\left(\cdot | \bar x_{t}\right)}\left[\frac{1}{T} \sum_{t=0}^{T-1}	\bar{c}\left(\bar{\mathbf{x}}_{t}, \bar{\mathbf{u}}_{t}\right)\right]\notag \\
&\quad+ \sum_{l \in \mathcal{L}} \sum_{i \in \mathcal{N}^{l}} \mathbb{E}_{\tilde x_0^i \sim \tilde{\mathcal{D}}_{0}^l, \tilde u_{t}^i \sim \pi_{ K_l}\left(\cdot | \tilde x_{t}^i\right)} \left[\frac{1}{T} \sum_{t=0}^{T-1} \tilde{c}^{l}\left(\tilde{x}_{t}^{i}, \tilde{u}_{t}^{i}\right)\right].\notag\\
&=: \bar C(\bar K) + \sum_{l \in \mathcal{L}}   \tilde C( K_l).
\end{align*}

\subsection{Proof of Lemma~\ref{lemma:dom}}

Lemma \ref{lemma:cost_grad_K_single} shows that 
\begin{equation*}
C(K)=\tr\left(P_{K} \Phi_{\sigma}\right)+\sigma^{2} \cdot \tr(R)=\mathbb{E}_{x \sim N\left(0, \Phi_{\sigma}\right)}\left(x^{\top} P_{K} x\right)+\sigma^{2} \cdot \tr(R).
\end{equation*} 
Given two linear policies $K$ and $K^{\prime}$, we define the function
\begin{align}
A_{K, K^{\prime}}(x) \coloneqq 2 x^{\top}\left(K^{\prime}-K\right)^{\top} E_{K} x+x^{\top}\left(K^{\prime}-K\right)^{\top}\left(R+B^{\top} P_{K} B\right)\left(K^{\prime}-K\right) x\notag.
\end{align}
Let $x^*$ and $u^*$ denote the states and actions induced by $K^*$, respectively. Lemma~\ref{lemma:cost_diff} then gives us 
\begin{align}\label{eq:quad_lower_bound}
C(K)-C(K^*) 
& = -\mathbb{E}_{x \sim N\left(0, \Phi_{\sigma}\right)}  \sum_{t \geq 0} A_{K, K^*}\left(x_{t}^*\right) \notag\\
& \leq \tr\left[\Sigma_{K^*}  E_{K}^{\top}\left(R+B^{\top} P_{K} B\right)^{-1} E_{K}\right] .
\end{align}
Since $R+B^{\top} P_{K} B \succeq R$, we have
\begin{equation*}
\begin{aligned}
\tr\left[\Sigma_{K^*}  E_{K}^{\top}\left(R+B^{\top} P_{K} B\right)^{-1} E_{K}\right]
&\leq\left\|\Sigma_{K^{*}}\right\| \cdot\|(R+B^{\top} P_{K} B)^{-1}\| \cdot \tr\left(E_{K}^{\top} E_{K}\right)\\
&\leq\frac{\left\|\Sigma_{K^{*}}\right\|}{\sigma_{\min }(R)} \tr\left(E_{K}^{\top} E_{K}\right),
\end{aligned}
\end{equation*}
which completes the upper bound proof.

Next, we establish a lower bound. Since the policy $K^{\prime}:=K-\left(R+B^{\top} P_{K} B\right)^{-1} E_{K}$ attains the equality in \eqref{eq:quad_lower_bound}, we have
\begin{align*}
C(K)-C\left(K^{*}\right)
&\geq C(K)-C\left(K^\prime\right)\notag\\
&= -\mathbb{E}_{x \sim N\left(0, \Psi_{\sigma}\right)} \sum_{t \geq 0} A_{K, K^{\prime}}\left(x_{t}^{\prime}\right)\notag\\
&= \tr\left[\Sigma_{K^{\prime}} E_{K}^{\top}\left(R+B^{\top} P_{K} B\right)^{-1} E_{K}\right]\notag\\
&\geq \sigma_{\min }(\Phi) \cdot\left\|R+B^{\top} P_{K} B\right\|^{-1} \cdot \tr\left(E_{K}^{\top} E_{K}\right).
\end{align*}
This completes the proof.

\subsection{Proof of Lemma~\ref{lemma:exactgradient}}
We first make the induction assumption $C(K_n)\leq C(K_0)$. Applying Lemma~\ref{lemma:cost_diff} to policies $K_{n}$ and $K^\prime_{n+1} =K_{n}-\eta \cdot E_{K_n}$, we have 
\#\label{eq:cost_diff_bound}
C&\left(K_{n+1}^{\prime}\right)-C\left(K_{n}\right) \notag\\
&= -2 \eta \cdot \tr\left(\Sigma_{K_{n+1}^{\prime}} \cdot E_{K_{n}}^{\top} E_{K_{n}}\right)+\eta^{2} \cdot \tr\left[\Sigma_{K_{n+1}^{\prime}} \cdot E_{K_{n}}^{\top}\left(R+B^{\top} P_{K_{n}} B\right) E_{K_{n}}\right]\notag\\
&\leq -2 \eta \cdot \tr\left(\Sigma_{K_{n+1}^{\prime}} \cdot E_{K_{n}}^{\top} E_{K_{n}}\right)+\eta^{2} \cdot\left\|R+B^{\top} P_{K_{n}} B\right\| \cdot \tr\left(\Sigma_{K_{n+1}^{\prime}} \cdot E_{K_{n}}^{\top} E_{K_{n}}\right).
\#
Note that we also have
\#\label{eq:Unif_bound_K0}
\tr\left[\Sigma_{K_{n+1}^{\prime}} \cdot E_{K_{n}}^{\top}\left(R+B^{\top} P_{K_{n}} B\right) E_{K_{n}}\right] \leq \left\|R+B^{\top} P_{K_{n}} B\right\| \cdot \tr\left(\Sigma_{K_{n+1}^{\prime}} \cdot E_{K_{n}}^{\top} E_{K_{n}}\right).
\#
Furthermore by Lemma~\ref{bound_mats} and the induction assumption $C(K_n)\leq C(K_0)$, we have
\[
\left\|R+B^{\top} P_{K_{n}} B\right\| \leq \|R\|+\|B\|^{2} \cdot\left\|P_{K_{n}}\right\| \leq \|R\|+\sigma_{\min }^{-1}(\Phi) \cdot\|B\|^{2} \cdot C\left(K_{0}\right).
\]
Since the step size $\eta$ satisfies $\eta\leq \left[\|R\|+\sigma_{\min }^{-1}(\Phi) \cdot\|B\|^{2} \cdot C\left(K_{0}\right)\right]^{-1}$, combining \eqref{eq:cost_diff_bound} and \eqref{eq:Unif_bound_K0}, we conclude
\begin{equation}\label{eq:like_the_dom}
C\left(K_{n+1}^{\prime}\right)-C\left(K_{n}\right) \leq - \eta \cdot \tr\left(\Sigma_{K_{n+1}^{\prime}} \cdot E_{K_{n}}^{\top} E_{K_{n}}\right) 
\leq - \eta \cdot\sigma_{\min }(\Phi) \cdot\tr\left( E_{K_{n}}^{\top} E_{K_{n}}\right) ,
\end{equation}
where we use the fact that $\Sigma_{K_{n+1}^{\prime}} \succeq \Phi$. Combining \eqref{eq:like_the_dom} and \eqref{eq:dom_l} in Lemma~\ref{lemma:dom}, we conclude \eqref{eq:exactcostreduction}. Then \eqref{eq:linear_in_exact} follows directly by adding $C\left(K_{n}\right)-C\left(K^{*}\right)$ to both sides of \eqref{eq:exactcostreduction}. Thus, we conclude the proof.

\subsection{Proof of Lemma~\ref{lemma:bound_update_diff}}

We will use Lemma~\ref{lemma:perturbation} in the proof. We first show that its condition \eqref{eq:policyboundcondition} holds, which is equivalent to
\begin{align}\label{eq:pertur_condi}
4  \left(1+\left\|A-B K_{n+1}^{\prime}\right\|\right) \cdot \|B\| \cdot \|\Sigma_{K_{n+1}^{\prime}}\|\cdot \left\|K_{n+1}-K_{n+1}^{\prime}\right\|  \leq \sigma_{\min }(\Phi).
\end{align}
By direct computation, we have
\begin{equation}\label{eq:ABKbound}
\left\|A-B K_{n+1}^{\prime}\right\| \leq \left\|A-B K_{n}\right\|+\eta\cdot\|B\|\cdot\|E_{K_n}\|.
\end{equation}
In the following, we bound $\|E_{K_n}\|$ and $\|\Sigma_{K_{n+1}^{\prime} }\|$ with problem parameters. For $\|E_{K_n}\|$, we have
\#\label{Ebound}
\|E_{K_n}\|&=\| (R+B^\top P_{K_n} B)K_n - B^\top P_{K_n}A \| \notag\\
&\leq \|R+B^\top P_{K_n} B \|\cdot \|K_n\| + \|B^\top\|\cdot\| P_{K_n}\|\cdot\|A\| \notag\\
&\leq \left( \|R\|+\| B\|^2\cdot\frac{C(K_0)}{\sigma_{min}(\Phi)} \right)\cdot\|K_n\|+\|B\|\cdot\|A\|\cdot\frac{C(K_0)}{\sigma_{min}(\Phi)},
\#
where in the last inequality we have used Lemma \ref{bound_mats} and the induction assumption. For $\|\Sigma_{K_{n+1}^{\prime} }\|$, again by Lemma \ref{bound_mats} and the induction assumption, we have 
\begin{align}\label{sigmabound}
\|\Sigma_{K_{n+1}^{\prime}}\| \leq \frac{C(K^\prime_{n+1})}{\sigma_{min}(Q)} \leq \frac{C(K_0)}{\sigma_{min}(Q)}.
\end{align}
Furthermore, using \eqref{eq:single_assump3}, we can bound $\left\|K_{n+1}-K_{n+1}^{\prime}\right\|$ as
\#\label{Kdiffbound}
\left\|K_{n+1}-K_{n+1}^{\prime}\right\| = \eta\left\|\widehat{E}_{K_n}-E_{K_n}\right\| \leq \eta \cdot \delta_n.
\#
Combining \eqref{eq:ABKbound}, \eqref{Ebound}, \eqref{sigmabound} and \eqref{Kdiffbound}, we conclude that
\#
4  \left(1+\left\|A-B K_{n+1}^{\prime}\right\|\right) \cdot \|B\| \cdot \|\Sigma_{K_{n+1}^{\prime}}\|\cdot \left\|K_{n+1}-K_{n+1}^{\prime}\right\|  \leq \Lambda_1(\|K_n\|,C(K_0))\cdot\eta \cdot\delta_n,
\#
where $\Lambda_1(\cdot,\cdot)$ is a polynomial of $\|K_n\|$ and $C(K_0)$. Here we regard $A$, $B$, $Q$, $R$ and  $\Phi$  as fixed parameters. Suppose $\delta_n$ satisfies
\begin{equation}\label{eq:delta_cond1}
0 < \delta_n \leq \sigma_{\min }(\Phi)\cdot 1/\Lambda_1(\|K_n\|,C(K_0)),
\end{equation} 
and, therefore, the condition~\eqref{eq:pertur_condi} holds. Then by Lemma~\ref{lemma:perturbation}, we have 
\begin{multline}
\label{eq:costperturbationbound2}
\left\|C({K_{n+1}})-C({K_{n+1}^{\prime}})\right\| 
\leq 6 \left(\|\Phi\| +\sigma^{2} \cdot\|B\|^{2}\right)\cdot \sigma_{\min }^{-1}(\Phi) \cdot\left\|\Sigma_{K_{n+1}^{\prime}}\right\| \cdot\|K_{n+1}^{\prime}\| \cdot\|R\| \\
\cdot(\|K_{n+1}^{\prime}\| \cdot\|B\| \cdot\|A-B K_{n+1}^{\prime}\|+\|K_{n+1}^{\prime}\| \cdot\|B\|+1) \cdot\left\|K_{n+1}-K_{n+1}^{\prime}\right\|.
\end{multline}
Note that we can further bound $\|K_{n+1}^{\prime}\|$ by
\#\label{exactKupdatebound}
\|K_{n+1}^{\prime}\| = \|K_{n}-\eta \cdot E_{K_n}\|
\leq\|K_{n}\| + \eta\cdot\| E_{K_n}\|.
\#
Combining \eqref{eq:ABKbound}, \eqref{Ebound}, \eqref{Kdiffbound}, \eqref{eq:costperturbationbound2} and \eqref{exactKupdatebound}, we conclude that 
\#
\left|C\left(K_{n+1}\right)-C\left(K_{n+1}^{\prime}\right)\right| 
\leq \Lambda_2(\|K_n\|,C(K_0))\cdot\eta\cdot\delta_n,
\#
where $\Lambda_2(\|K_n\|,C(K_0))$ is a polynomial of $\|K_n\|$ and $C(K_0)$. Again,
here we regard $A$, $B$, $Q$, $R$,  $\Phi$ and $\sigma^2$ as fixed parameters.
Suppose  we have 
\#\label{eq:delta_cond2}
0<\delta_n\leq \epsilon / 2 \cdot \sigma_{\min }(\Phi) \cdot \sigma_{\min }(R) \cdot\left\|\Sigma_{K^{*}}\right\|^{-1}\cdot 1/\Lambda_2(\|K_n\|,C(K_0)),
\#
and, hence,
$
\left|C\left(K_{n+1}\right)-C\left(K_{n+1}^{\prime}\right)\right| \leq \epsilon / 2\cdot\eta \cdot \sigma_{\min }(\Phi) \cdot \sigma_{\min }(R) \cdot\left\|\Sigma_{K^{*}}\right\|^{-1}.
$
Finally, without loss of generality we assume $\epsilon \leq 1$ and specify $\Lambda(\cdot,\cdot)$ in \eqref{eq:single_assump3_} to be
\#
\Lambda(\|K_n\|,C(K_0)) = \sigma_{\min }(R) \cdot\left\|\Sigma_{K^{*}}\right\|^{-1}\cdot\Lambda_1(\|K_n\|,C(K_0)) +  \Lambda_2(\|K_n\|,C(K_0)).
\#
This ensures  \eqref{eq:delta_cond1} and \eqref{eq:delta_cond2}, hence  concluding the proof.

\section{ Numerical Experiments}\label{sec:experiments}

We illustrate the empirical performance of the hierarchical natural actor-critic algorithm. We consider simulated LQR settings with varied numbers of agents in subpopulations to illustrate the computational complexity of our algorithm. 

We consider an LQR problem with two exchangeable subpopulations. Concretely, following the notation in \S\ref{section:main_result} (also see the notation table in \S\ref{sec:notation_table}),  we sample the system matrices of the two subpopulations as $A_l \sim \mathcal{N} (a_l  \mathrm{1}_{d_l}, \sigma^2I_{d_l})$, $B_l \sim \mathcal{N} (b_l  \mathrm{1}_{d_l}, \sigma^2I_{d_l})$, $Q_l \sim \mathcal{N} (q_l  \mathrm{1}_{d_l}, \sigma^2I_{d_l})$, $R_l \sim \mathcal{N} (r_l  \mathrm{1}_{d_l}, \sigma^2I_{d_l})$, $l\in\{1,2\}$. We set all off-diagonal terms to be the same, namely $\breve{a}^{l, k} \equiv \breve{a}, \breve{b}^{l, k}\equiv \breve{b}, \breve{q}^{l, k} \equiv \breve{q}, \breve{r}^{l, k} \equiv \breve{r}$ for all $k,l \in \mathcal{L}$. Then the original multi-agent system matrices $A, B, Q, R$ are recovered so that they satisfy partial exchangeability. 

We measure the computational complexity via the number of iterations needed for the algorithm to achieve a predetermined precision $\epsilon$. During our experiment, we let each subpopulation have the same size and vary it from $50$ to $500$. We fix  $d_1 = d_2 = 2$, $\sigma^2 = 0.01$, and sample $a_1, b_1, q_1, r_1 \sim \text{Unif}[0.06,0.08]$, $a_2, b_2, q_2, r_2 \sim \text{Unif}[0.12,0.14]$ and $\breve{a}, \breve{b},\breve{q},\breve{r} \sim 1.6 \text{e-4}\text{Unif}[0,1]$. We average simulation results over 20 independent runs and plot the number of iterations needed to achieve the precision  $\epsilon = 1\times10^{-5}$ against the subpopulation size. The results are shown in Figure \ref{fig:complexity}. We see that as the subpopulation size increases, the computational complexity remains roughly fixed, rather than increasing with the size of the subpopulation. This coincided with our theoretical analysis which showed that the computational complexity of Algorithm~\ref{algo:multi-agent_LQR} is independent of the number of agents in subpopulations.  
 
\begin{figure}
	\includegraphics[width=.9\textwidth]{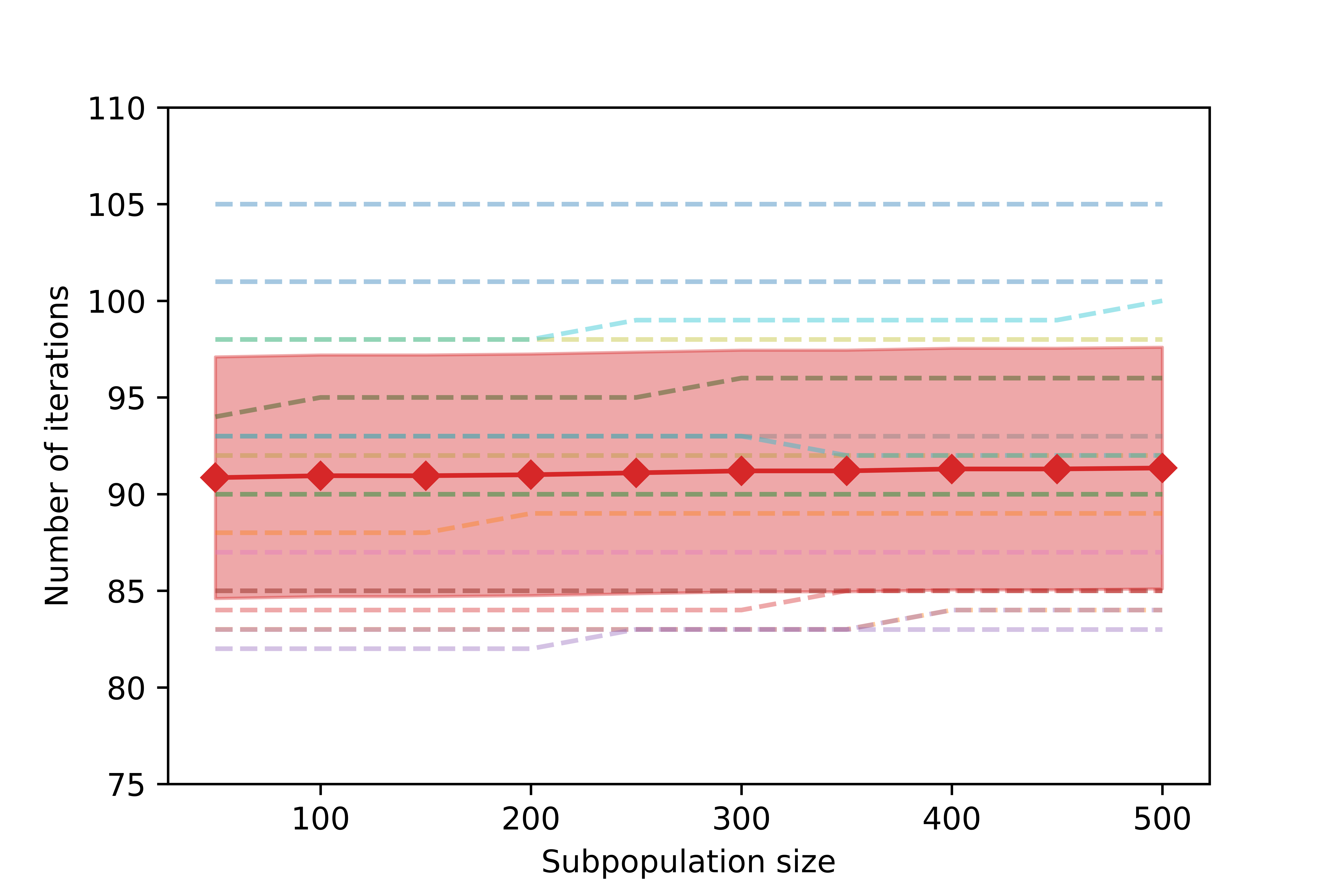}
	\caption{The number of iterations needed for the hierarchical natural actor-critic algorithm to converge to a predetermined precision as a function of the subpopulation size. Each dashed line represents the number of iterations of one simulated LQR instance.  The red solid line represent the average number of iterations and the red shaded region indicates $1$-standard deviation. As the subpopulation size increases, the number of iterations needed are roughly fixed for all instances, validating that the computational complexity of the hierarchical natural actor-critic algorithm is independent of the number of agents in each subpopulation.}\label{fig:complexity}
\end{figure}

\acks{This work was completed in part with resources provided by the
	University of Chicago Research Computing Center.}

\newpage

\appendix

\section{Policy Evaluation Algorithm}
\label{section:gtd}

For completeness, we introduce a policy evaluation algorithm based on gradient-based temporal difference learning (Algorithm 2 in \cite{yang2019global}) that can be implemented in Algorithm \ref{algo:multi-agent_LQR}. 

In the LQR setting, the state- and action-value functions of policy $\pi_K$ take, respectively, the following forms
\begin{equation}\label{eq:value_funrion}
V_{K}(x) =\sum_{t=0}^{\infty}\left\{\mathbb{E}\left[c\left(x_{t}, u_{t}\right)|x_0 = x \right]-C(K)\right\}, \quad Q_K(x,u)=c(x, u)-C(K)+\mathbb{E}_{x^\prime}\left[V_{K}\left(x^{\prime}\right) \right],
\end{equation}
where $\{(x_t,u_t)\}_{t\geq0}$ is the sequence of state-action pairs generated by policy $\pi_K$ with initial state $x_0 = x$. Here we use $x^\prime$ to denote the next state generated by $(x,u)$, i.e. $x^\prime = Ax+Bu+w$, where the noise $w\sim N\left(0, \Phi\right)$ for some positive definite matrix $\Phi$. 

The following lemma establishes the functional forms of the state- and action-value functions.

\begin{lemma}\label{value_func}
	The state and action value functions \eqref{eq:value_funrion} are given, respectively, by
	\#
	V_{K}(x)&=x^{\top} P_{K} x-\tr\left(P_{K} \Sigma_{K}\right),\\
	Q_{K}(x, u)&=\varphi(v)^\top\delta_{K}^{*}-\sigma^{2} \cdot \tr(R+P_{K} B B^{\top})-\tr\left(P_{K} \Sigma_{K}\right),
	\#
	where $v = \vec(x, u)$ and $\varphi(v)=\svec(v v^\top)$. The vector $\delta_{K}^{*}$ is called \emph{value vector} and is related to matrices $A$, $B$, $Q$, $R$ and matrix $P_K$ defined in Lemma~\ref{lemma:cost_grad_K_single}. Moreover, $E_K$ can be recovered with $\delta_K^*$ and $K$. The explicit form of $\delta_{K}^{*}$	is given in the proof in \S\ref{pf:value_func}.
\end{lemma}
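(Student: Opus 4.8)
The plan is to verify both value functions directly from the closed-loop linear dynamics and the Lyapunov equation defining $P_K$. First I would write the dynamics under $\pi_K$ in closed form: substituting $u_t = -K x_t + \sigma z_t$ into the transition gives $x_{t+1} = (A-BK)x_t + \varepsilon_t$ with effective noise $\varepsilon_t = \sigma B z_t + w_t \sim N(0,\Phi_\sigma)$, where $\Phi_\sigma = \Phi + \sigma^2 B B^\top$; write $\widetilde A := A - BK$ and $\widetilde Q := Q + K^\top R K$. Averaging the per-step cost over the exploration noise $z_t$ yields $\E[c(x_t,u_t)\mid x_t] = x_t^\top \widetilde Q x_t + \sigma^2 \tr(R)$, while Lemma~\ref{lemma:cost_grad_K_single} (in single-agent form) gives $C(K) = \tr(P_K \Phi_\sigma) + \sigma^2\tr(R) = \tr(\widetilde Q \Sigma_K)+\sigma^2\tr(R)$.

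For the state-value function I would verify that the ansatz $V_K(x) = x^\top P_K x - \tr(P_K\Sigma_K)$ solves the average-cost Bellman equation $V_K(x) = \E[c(x,u)\mid x] - C(K) + \E[V_K(x')]$ with $x' = \widetilde A x + \varepsilon$. Since $\E[(x')^\top P_K x' \mid x] = x^\top \widetilde A^\top P_K \widetilde A x + \tr(P_K \Phi_\sigma)$, the right-hand side collapses to $x^\top(\widetilde Q + \widetilde A^\top P_K \widetilde A)x - \tr(P_K\Sigma_K)$, and the Lyapunov equation $P_K = \widetilde Q + \widetilde A^\top P_K \widetilde A$ makes this equal to the ansatz. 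Because $\rho(\widetilde A)<1$, the series defining $V_K$ converges absolutely and the Bellman solution is unique up to an additive constant; the constant is pinned by $\E_{x\sim N(0,\Sigma_K)}[V_K(x)] = 0$, which the $-\tr(P_K\Sigma_K)$ offset enforces exactly. Equivalently, one may sum $\sum_t \{\E[c(x_t,u_t)\mid x_0=x]-C(K)\}$ term by term: the quadratic part telescopes to $x^\top P_K x$ via $P_K = \sum_{t\ge0}(\widetilde A^t)^\top \widetilde Q \widetilde A^t$, and reindexing the constant part gives $\sum_{m\ge0}(m+1)\tr(\widetilde Q \widetilde A^m \Phi_\sigma (\widetilde A^m)^\top) = \tr(P_K\Sigma_K)$.

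For the action-value function I would substitute the established $V_K$ into $Q_K(x,u) = c(x,u) - C(K) + \E_{x'}[V_K(x')]$. The crucial bookkeeping point is that here the one-step transition uses the bare noise $w\sim N(0,\Phi)$ rather than the closed-loop $\Phi_\sigma$, since the action $u$ is held fixed. Thus $\E[V_K(x')\mid x,u] = (Ax+Bu)^\top P_K (Ax+Bu) + \tr(P_K\Phi) - \tr(P_K\Sigma_K)$, and combining $-\tr(P_K\Phi_\sigma)+\tr(P_K\Phi) = -\sigma^2\tr(P_K BB^\top)$ with the $-\sigma^2\tr(R)$ coming from $-C(K)$ produces exactly the constant $-\sigma^2\tr(R+P_K BB^\top) - \tr(P_K\Sigma_K)$. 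The remaining quadratic terms $x^\top Q x + u^\top R u + (Ax+Bu)^\top P_K(Ax+Bu)$ assemble into $v^\top \Theta_K v$ with $v = \vec(x,u)$ and $\Theta_K = \begin{pmatrix} Q + A^\top P_K A & A^\top P_K B \\ B^\top P_K A & R + B^\top P_K B\end{pmatrix}$.

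Finally I would repackage this quadratic form in vectorized notation. Using the isometry $\tr(\Theta_K vv^\top) = \svec(\Theta_K)^\top \svec(vv^\top)$ — valid precisely because $\svec$ weights off-diagonal entries by $\sqrt2$ — and setting $\varphi(v) = \svec(vv^\top)$, the quadratic equals $\varphi(v)^\top \delta_K^*$ with value vector $\delta_K^* := \svec(\Theta_K)$, which gives the claimed form of $Q_K$. To recover $E_K$, note that the $(2,2)$ and $(2,1)$ blocks of $\Theta_K = \smat(\delta_K^*)$ are $R + B^\top P_K B$ and $B^\top P_K A$, so $E_K = (R + B^\top P_K B)K - B^\top P_K A$ is read off directly from $\smat(\delta_K^*)$ and $K$. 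I expect the main obstacle to be the constant-term bookkeeping: keeping the two distinct noise covariances $\Phi$ and $\Phi_\sigma$ straight is exactly what separates the $V_K$ and $Q_K$ constants, and a slip there would corrupt the $\sigma^2\tr(R+P_K BB^\top)$ term.
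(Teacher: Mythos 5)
Your proposal is correct and follows essentially the same route as the paper's proof: both identify $P_K$ through the average-cost Bellman equation and the Lyapunov/Riccati-type identity $P_K=(Q+K^\top RK)+(A-BK)^\top P_K(A-BK)$, pin the constant via $\mathbb{E}_{x\sim\mathcal{D}_K}[x^\top P_K x]=\tr(P_K\Sigma_K)$, and assemble $Q_K$ by substituting $V_K$ into $c(x,u)-C(K)+\mathbb{E}[V_K(x')]$ with $\delta_K^*=\svec(\Delta_K)$ and $E_K=\Delta_K^{22}K-\Delta_K^{21}$. Your explicit bookkeeping of the two noise covariances ($\Phi$ for the one-step transition with $u$ held fixed versus $\Phi_\sigma$ inside $C(K)$) is exactly the step the paper uses to produce the $-\sigma^2\tr(R+P_KBB^\top)$ constant, so there is no gap.
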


By Lemma \ref{value_func}, estimating the action-value function $Q_{K}(x, u)$ is equivalent to estimating  $\delta_K^*$ and thus gives estimator of $E_K$ as we desire.

Lemma \ref{lemma:chv} tells us that $v = [x^\top, u^\top]^\top \sim N (0, \breve{\Sigma}_{K})$ is a Markov chain. Hereafter, we use $c(v)$ to denote $c(x, u)$. Similar notation is also used for other functions.

Note that $V_{K}(x)=\mathbb{E}_{u \sim \pi_{K}(\cdot | x)}\left[Q_{K}(v)\right]$,  which indicates that for any $v \in \mathbb{R}^{d+k}$, we have
\begin{equation}
\left\langle\varphi(v), \delta_{K}^{*}\right\rangle= c(v)-C(K)+\left\langle\mathbb{E}\left[\varphi\left(v^{\prime}\right)\right], \delta_{K}^{*}\right\rangle,
\end{equation}
where $v^\prime =\vec \left(x^{\prime}, u^{\prime}\right)$ is the next state-action pair  generated by $v = \vec(x,u)$. Denote the expectation with respect to $v\sim N (0, \breve{\Sigma}_{K})$ by $\mathbb{E}_{v}$. We let
\[
b_k=(C(K) \  d_K)^\top, \ 
\Theta_{K}=\mathbb{E}_{v}\left\{\varphi(v)\left[\varphi(v)-\varphi\left(v^{\prime}\right)\right]^{\top}\right\},
\text{ and }
d_{K}=\mathbb{E}_{v}[c(v) \varphi(v)].
\]
We also let $\gamma_{K}^{*}=\left(C(K), \delta_{K}^{* \top}\right)^{\top}$. It was shown in \cite{yang2019global} that $\left(\gamma_{K}^{*},0\right)^{\top}$ is the unique saddle point of the minimax optimization problem:
\begin{equation}\label{eq:new_opt}
\begin{aligned} \min _{\gamma \in \mathcal{X}_{\Gamma}} \max _{\xi \in \mathcal{X}_{\Xi}} G(\gamma, \xi)=\left[\gamma^{1}-C(K)\right] \cdot \xi^{1} +\left\langle\gamma^{1} \cdot \mathbb{E}_{v}[\varphi(v)]+\Theta_{K} \gamma^{2}-d_{K}, \xi^{2}\right\rangle - 1 / 2 \cdot\|\xi\|_{2}^{2},
\end{aligned}
\end{equation}
where $\mathcal{X}_{\Gamma}$ and $\mathcal{X}_{\Xi}$ are compact sets and $\gamma=(\gamma^1,\gamma^2)$ and $\xi = (\xi^1,\xi^2)$ are primal and dual variables.

We solve \eqref{eq:new_opt} with stochastic gradient method, and return $\hat\delta_{K} = \hat\gamma^{2}$ as the estimator of $\delta_{K}^{*}$. Algorithm~\ref{algo:gtd} details the Gradient-Based Temporal Difference (GTD) Algorithm.

To present the theoretical result of the policy evaluation algorithm using GTD, we make the following assumptions. First, we specify the compact sets $\mathcal{X}_{\Gamma}$ and $\mathcal{X}_{\Xi}$ defined in \eqref{eq:new_opt} to be
\begin{equation}
\begin{aligned}
\label{projsets}
\mathcal{X}_{\Gamma} &= \{\gamma:0\leq\gamma^1\leq \Gamma^1, \| \gamma^2 \|_2\leq  \Gamma^2 \},\\
\mathcal{X}_{\Xi} &= \{\xi:0\leq|\xi^1|\leq \Xi^1= C(K_0), \| \xi^2 \|_2\leq  \Xi^2 \},
\end{aligned}
\end{equation}
where  
$\Gamma^1 = C(K_0)$, 
$\Gamma^2 = \| Q \| _{{\mathrm{F}}}+ \| R \|_{{\mathrm{F}}} + \sqrt{ d }  / \sigma_{\min} (\Phi)  \cdot   ( \| A \|_{{\mathrm{F}}} ^2 + \| B \|_{{\mathrm{F}}}^2   )\cdot C(K_0 )$, 
$\Xi^1 =  C(K_0)$, and 
$\Xi^2 =  C\cdot \left(1+\|K\|_{\mathrm{F}}^{2}\right)^{2} \cdot   \Gamma^2 \cdot \sigma_{\min}^{-2} (Q) \cdot [C(K_0)]^2$.
Here, $C$ is a constant that does not depend on $K$. 
Moreover, we set the step size in Algorithm \ref{algo:gtd} to be $\alpha_t = \alpha/\sqrt{t}$.

\begin{algorithm} [t]
	\caption{Gradient-Based Temporal-Difference (Algorithm 2 in \cite{yang2019global})} 
	\label{algo:gtd} 
	\begin{algorithmic} 
		\STATE{{\textbf{Input:}} Current policy $\pi_K$, number of iterations $T$, and  step sizes  $\{ \alpha_t=\alpha/\sqrt{t}\}_{t\in\{1,2, \dots, T\} } $.}  
		\STATE{\textbf{Initialization:} Initialize  $\gamma_0 \in \mathcal{X}_{\Gamma} $ 
			and $\xi_0\in \mathcal{X}_{\Xi} $. Sample the initial state-action pairs 
			$x_0\sim \mathcal{D}_K$, $u_0\sim \pi_{K}\left(\cdot | x_{0}\right)$, and construct $v_0$.
			Observe the cost $c_0$ and the next state $x_1$.}
		\FOR{$t= 1 ,  2, \ldots,  T $}
		\STATE{Take action $u_{t}\sim \pi_{K}\left(\cdot | x_{t}\right)$ , set $v_{t} = \vec (x_t, u_t)$, observe the cost $c_t$ and the next state $x_{t+1}$.}
		\STATE{Update primal-dual parameters $\gamma_{t-1} $ and $\xi_{t-1}$ with gradient decent to obtain $\gamma_t $ and $\xi_t$.}
		\STATE{Project $\gamma_t $ and $\xi_t$ to $\mathcal{X}_{\Gamma}$ and $\mathcal{X}_{\Xi}$.} 
		\ENDFOR
		\STATE{Obtain $\hat \gamma =(\hat \gamma^1, \hat \gamma^2)  =   (  \sum_{t=1}^{T} \alpha_t \cdot \gamma_t )/ (  \sum_{t=1}^{T} \alpha _t ) $ and $\hat \xi =  (  \sum_{t=1}^{T} \alpha_t \cdot \xi_t ) / (  \sum_{t=1}^{T} \alpha _t ) $}.
		\STATE{\textbf{Output:} Estimators $\hat C = \hat \gamma^1$ of cost $C(K)$ and  $\hat \delta_K =\hat \gamma^2$ of $\delta_K^*$.}
	\end{algorithmic}
\end{algorithm}

\begin{theorem}[Policy evaluation algorithm using GTD]\label{pe}
	Let initial policy $\pi_{K_0}$ be stable in the sense that $\rho(A-BK_0)<1$. Let $\mathcal{X}_{\Gamma}$ and $\mathcal{X}_{\Xi}$ be compact sets specified in \eqref{projsets}. For any $\rho \in(\rho(A-B K), 1)$, there exists sufficiently large iteration number $T$, so that with probability at least $1-T^{-4}$, we have 
	\begin{align}
	\| \hat\delta_{K} -\delta_{K}^{*} \|_{2}^2 \leq \Lambda\left(\Gamma^2,\Xi^2,C(K_0),\|K\|_{\mathrm{F}}, \Sigma_K, \sigma_{\min }^{-1}(Q)\right)\cdot\frac{{\log^6 T}}{(1-\rho){\tau_{K}^{*}}^{2} \sqrt{T}},
	\end{align}
	where $\Lambda$ is a polynomial of 
	$\Gamma^2$, 
	$\Xi^2$,
	$C(K_0)$, 
	$\|K\|_{\mathrm{F}}$, 
	$\Sigma_K$ and 
	$\sigma_{\min }^{-1}(Q)$, and constant 
	${\tau_{K}^{*}}>0$ solely depends on $\rho(A-B K)$, $\sigma$, and $\sigma_{\min }(\Phi)$.
\end{theorem}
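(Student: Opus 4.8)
The plan is to treat Algorithm~\ref{algo:gtd} as stochastic primal-dual gradient descent on the convex-concave saddle-point problem \eqref{eq:new_opt}, and to convert an optimization bound on the averaged duality gap into the desired bound on $\|\hat\delta_K - \delta_K^*\|_2^2$. First I would record the structure of $G(\gamma,\xi)$: it is linear (hence convex) in $\gamma$ for fixed $\xi$ and strongly concave in $\xi$ for fixed $\gamma$ because of the $-\tfrac12\|\xi\|_2^2$ term. Performing the inner maximization over $\xi$ in closed form reduces the problem to minimizing a quadratic $F(\gamma)=\tfrac12\|M\gamma-b\|_2^2$, where $M$ is built from $\Theta_K$ and $\mathbb{E}_v[\varphi(v)]$ and $b=(C(K),d_K)$; its unique minimizer is exactly $\gamma_K^*=(C(K),\delta_K^{*\top})^\top$. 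The modulus of strong convexity is $\sigma_{\min}(M)^2$, governed by $\sigma_{\min}(\Theta_K)$, which I identify with the constant $\tau_K^*$. Since $\Theta_K$ is the feature second-moment-type matrix under the stationary law $N(0,\breve\Sigma_K)$ and $\Sigma_K\succeq\Phi$, a lower bound on $\tau_K^*$ is controlled by $\sigma_{\min}(\Phi)$, $\sigma$, and the mixing rate $\rho(A-BK)$, matching the dependence claimed in the statement. This yields the error bound $\|\hat\delta_K-\delta_K^*\|_2^2 \le \|\hat\gamma-\gamma_K^*\|_2^2 \le c\,(\tau_K^*)^{-2}\,\mathrm{Gap}(\hat\gamma,\hat\xi)$, reducing the theorem to controlling the duality gap of the averaged iterates.

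Next I would run the standard telescoping argument for projected stochastic gradient descent on saddle-point problems. Writing $g_t$ for the stochastic gradient formed from the single sample $v_t$ at iteration $t$, I would bound $\mathrm{Gap}(\hat\gamma,\hat\xi)$ by three contributions: an initialization term of order $D^2/\sum_t\alpha_t$ controlled by the diameters of the compact projection sets $\mathcal{X}_\Gamma,\mathcal{X}_\Xi$ in \eqref{projsets}; a second-order term of order $\sum_t\alpha_t^2\|g_t\|_2^2 \big/ \sum_t\alpha_t$; and a bias term measuring the discrepancy between $g_t$ and the population gradient $\nabla G$ evaluated along the trajectory. With $\alpha_t=\alpha/\sqrt t$ we have $\sum_t\alpha_t\asymp\sqrt T$ and $\sum_t\alpha_t^2\asymp\log T$, so the first two terms are of order $\log T/\sqrt T$ once the gradient magnitudes are under control.

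The main obstacle, where essentially all the work concentrates, is the bias term: the samples $\{v_t\}$ come from a single Markov-chain trajectory rather than i.i.d.\ draws from the stationary $N(0,\breve\Sigma_K)$, and the feature $\varphi(v_t)$ is quadratic in a Gaussian vector and hence only sub-exponential. I would handle the Markovian dependence through the geometric ergodicity of $\{v_t\}$ guaranteed by $\rho(A-BK)<\rho<1$: the mixing time is of order $1/(1-\rho)$, so partitioning the trajectory into blocks whose length scales with the mixing time (or coupling to the stationary chain) decouples the correlations and produces the $(1-\rho)^{-1}$ factor. To control the sub-exponential tails I would establish a uniform high-probability bound on $\|v_t\|_2$, hence on $\|\varphi(v_t)\|_2$ and $\|g_t\|_2$, for all $t\le T$ via a union bound; each such bound costs a $\mathrm{poly}(\log T)$ factor, and tracking these factors through the blocking and the second-order term yields the stated $\log^6 T$ dependence while the failure probability aggregates to at most $T^{-4}$. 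Collecting the polynomial dependence on $\Gamma^2,\Xi^2,C(K_0),\|K\|_{\mathrm F},\Sigma_K,\sigma_{\min}^{-1}(Q)$ into $\Lambda$ and combining with the curvature bound of the first paragraph gives the claimed rate. Since this reproduces the single-agent LQR critic analysis of \citet{yang2019global}, I would follow their argument for the precise constants and the construction of the concentration bounds.
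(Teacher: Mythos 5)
Your proposal is correct and follows essentially the same route as the paper: reduce the parameter error to the primal--dual gap of \eqref{eq:new_opt} via the curvature constant $\tau_K^*$ (the paper's Lemma~\ref{lemma:esitimate_bound}, using $\sigma_{\min}(\Omega_K)$ rather than $\sigma_{\min}(\Theta_K)$ alone), then bound the gap for the averaged GTD iterates by combining a truncation of the sub-exponential features (Lemmas~\ref{lemma:tuncation_prob}--\ref{lemma:lip}) with the $\beta$-mixing of $\{v_t\}$ and the saddle-point concentration bound imported from \citet{yang2019global} (Lemma~\ref{lemma:saddle_gap}). The only step you leave implicit that the paper makes explicit is verifying that the saddle point $(\gamma_K^*,0)$ actually lies in the projection sets \eqref{projsets} (Lemma~\ref{lemma:compact_sets}), a routine check needed for the projected iterates to target the right point.
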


\subsection{Proof Sketch of Theorem \ref{pe}}\label{pf:pe}

We will use Lemma~\ref{lemma:saddle_gap} to establish the result. We start by
verifying the conditions of Lemma~\ref{lemma:saddle_gap}. The following lemma shows that 
$(\gamma_{K}^{*},0)$ is the solution to the optimization problem \eqref{eq:new_opt}. 

\begin{lemma}\label{lemma:compact_sets}
	It holds that $\gamma_{K}^{*} \in \mathcal{X}_{\Gamma}$. Let $\xi(\gamma)$ be the solution to the unconstrained problem $\max _{\xi} G(\gamma, \xi)$. Then for any $\gamma \in \mathcal{X}_{\Gamma}$, we have $\xi(\gamma) \in \mathcal{X}_{\Xi}$.
\end{lemma}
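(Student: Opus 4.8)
The statement has two parts: that the saddle's primal component $\gamma_K^* = (C(K),\delta_K^{*\top})^\top$ lies in $\mathcal{X}_\Gamma$, and that the unique inner maximizer $\xi(\gamma)$ of the strongly concave map $\xi\mapsto G(\gamma,\xi)$ lands in $\mathcal{X}_\Xi$ whenever $\gamma\in\mathcal{X}_\Gamma$. The plan is to verify each coordinate bound in the definition \eqref{projsets} directly, using the explicit forms available from Lemma~\ref{value_func} and the operator-norm estimates for $P_K$ and $\Sigma_K$ from Lemma~\ref{bound_mats}.

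For the first claim I would treat the two coordinates of $\gamma_K^*$ separately. The scalar coordinate is $\gamma^1=C(K)$; since the per-step cost is nonnegative and every policy $K$ supplied to the algorithm obeys the standing bound $C(K)\le C(K_0)$ (the monotonicity from the proof of Theorem~\ref{thm:ac}), we get $0\le C(K)\le C(K_0)=\Gamma^1$. For the vector coordinate, I would read off the explicit form of $\delta_K^*$ from Lemma~\ref{value_func}: matching $Q_K(x,u)=\varphi(v)^\top\delta_K^*+\text{const}$ against the quadratic expansion $Q_K(x,u)=v^\top H v+\text{const}$, with $v=\vec(x,u)$ and $H=\operatorname{cols}\!\big(\operatorname{rows}(Q+A^\top P_K A,\ B^\top P_K A),\ \operatorname{rows}(A^\top P_K B,\ R+B^\top P_K B)\big)$, identifies $\delta_K^*=\svec(H)$, so that $\|\delta_K^*\|_2=\|H\|_{\mathrm F}$. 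A triangle-inequality split $\|H\|_{\mathrm F}\le \|Q\|_{\mathrm F}+\|R\|_{\mathrm F}+\|N^\top P_K N\|_{\mathrm F}$ with $N=\operatorname{cols}(A,B)$, a Frobenius-versus-operator conversion producing the $\sqrt d$ factor, and the bound $\|P_K\|\le C(K)/\sigma_{\min}(\Phi)\le C(K_0)/\sigma_{\min}(\Phi)$ from Lemma~\ref{bound_mats}, then reproduce exactly the expression $\Gamma^2$, giving $\|\gamma^2\|_2\le\Gamma^2$.

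For the second claim, strong concavity of $G(\gamma,\cdot)$ (from the $-\tfrac12\|\xi\|_2^2$ term) makes the unconstrained maximizer unique and given by the first-order conditions, namely $\xi^1(\gamma)=\gamma^1-C(K)$ and $\xi^2(\gamma)=\gamma^1\,\mathbb{E}_v[\varphi(v)]+\Theta_K\gamma^2-d_K$. The scalar bound is immediate: from $0\le\gamma^1\le C(K_0)$ and $0\le C(K)\le C(K_0)$ we get $|\xi^1(\gamma)|\le C(K_0)=\Xi^1$. The vector bound is the substantive part: using $|\gamma^1|\le C(K_0)$ and $\|\gamma^2\|_2\le\Gamma^2$, it reduces to controlling $\|\mathbb{E}_v[\varphi(v)]\|_2=\|\breve\Sigma_K\|_{\mathrm F}$, the operator norm $\|\Theta_K\|$, and $\|d_K\|_2$. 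Since $v\sim N(0,\breve\Sigma_K)$ is centered Gaussian and $\varphi$ is quadratic in $v$, each of these is a second- or fourth-moment quantity and is therefore bounded by a polynomial in $\|\breve\Sigma_K\|$; the relation $\|\breve\Sigma_K\|\lesssim(1+\|K\|^2)\|\Sigma_K\|$ together with $\|\Sigma_K\|\le C(K)/\sigma_{\min}(Q)\le C(K_0)/\sigma_{\min}(Q)$ from Lemma~\ref{bound_mats} produces precisely the factors $(1+\|K\|_{\mathrm F}^2)^2$, $\sigma_{\min}^{-2}(Q)$, and $[C(K_0)]^2$ appearing in $\Xi^2$.

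The main obstacle is this last step: bounding $\Theta_K=\mathbb{E}_v\{\varphi(v)[\varphi(v)-\varphi(v')]^\top\}$ and $d_K=\mathbb{E}_v[c(v)\varphi(v)]$ requires computing fourth-order Gaussian moments of the joint state-action vector (and, for $\Theta_K$, of the coupled pair $(v,v')$ linked by the closed-loop dynamics), and then carefully tracking the dependence on $\|\Sigma_K\|$ and $\|K\|$ so that the constants collapse into the stated form of $\Xi^2$. Everything else is routine once the closed forms of $\xi^1(\gamma)$, $\xi^2(\gamma)$, and $\delta_K^*$ are in hand.
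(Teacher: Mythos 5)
Your proposal follows essentially the same route as the paper's proof: the same identification $\delta_K^*=\svec(\Delta_K)$ with $\Delta_K=\operatorname{diag}(Q,R)+\operatorname{cols}(A,B)^\top P_K\operatorname{cols}(A,B)$ and the bound via Lemma~\ref{bound_mats}, the same closed form for the inner maximizer from the first-order conditions, and the same decomposition of $\|\xi^2(\gamma)\|_2$ into the three terms $\gamma^1\,\mathbb{E}_v[\varphi(v)]$, $\Theta_K\gamma^2$, and $d_K$ controlled through $\|\breve\Sigma_K\|$ and Lemma~\ref{lemma:chv}. The moment computations you flag as the remaining obstacle are exactly what Lemma~\ref{lemma:pe_mat} (giving $\|\Theta_K\|\le 4(1+\|K\|_{\mathrm F}^2)^2\|\Sigma_K\|^2$) and Lemma~\ref{lemma:quadform} (giving $\|d_K\|_2\le 3(\|Q\|_{\mathrm F}+\|R\|_{\mathrm F})\|\breve\Sigma_K\|^2$) supply, so the argument closes precisely as you describe.
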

\begin{proof}
	See \S\ref{pf:compact_sets} for a detailed proof.
\end{proof}

For fixed values $\hat\gamma\in\Gamma$ and $\hat\xi\in\Xi$, we let 
\[
G_{\hat\gamma\cdot} = \max _{\xi\in\Xi}G(\hat\gamma,\xi), \quad G_{\cdot\hat\xi} = \min _{\gamma\in\Gamma}G(\gamma,\hat\xi).
\]
The primal-dual gap with respect to $\hat\gamma$ and $\hat\xi$ is defined as $G_{\hat\gamma\cdot} - G_{\cdot\hat\xi}$, which measures the closeness between $(\hat\gamma,\hat\xi)$ and the saddle point $(\delta_K^*,0)$.
The rate of convergence of the estimators obtained by the primal-dual problem \eqref{eq:new_opt}  is controlled by the primal-dual gap $G_{\hat\gamma\cdot} - G_{\cdot\hat\xi}$ as shown by the following lemma.

\begin{lemma}\label{lemma:esitimate_bound}
	It holds that
	\begin{equation}\label{gaplink}
	{\tau_{K}^{*}}^2 \left( |\gamma^1-C(K)|^2 + \|\hat\delta_{K} - \delta_{K}^{*}\|^2_2\right)\leq G_{\hat\gamma\cdot} - G_{\cdot\hat\xi} ,
	\end{equation}
	where $\tau_{K}^{*}>0$ is a constant that depends only on $\rho(A-B K)$, $\sigma$, and $\sigma_{\min }(\Phi)$.
\end{lemma}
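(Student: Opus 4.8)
The plan is to reduce the primal-dual gap to the squared Euclidean distance between the primal iterate $\hat\gamma$ and the saddle point $\gamma_K^*=(C(K),\delta_K^{*\top})^\top$, exploiting that $G$ is linear in $\gamma$ and strongly concave in $\xi$. First I would rewrite the objective in \eqref{eq:new_opt} as $G(\gamma,\xi)=\langle r(\gamma),\xi\rangle-\tfrac12\|\xi\|_2^2$, where the residual vector is
\[
r(\gamma):=\bigl(\gamma^1-C(K),\ \gamma^1\,\mathbb{E}_v[\varphi(v)]+\Theta_K\gamma^2-d_K\bigr).
\]
Since $r$ is affine in $\gamma$ and, by the fixed-point characterization of $(\gamma_K^*,0)$ established in \cite{yang2019global}, $r(\gamma_K^*)=0$, we obtain the exact identity $r(\gamma)=M(\gamma-\gamma_K^*)$ with the block lower-triangular Jacobian
\[
M:=\begin{pmatrix}1 & 0\\ \mathbb{E}_v[\varphi(v)] & \Theta_K\end{pmatrix}.
\]

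Next I would evaluate the two halves of the gap separately. For the primal value, the inner problem $\max_\xi G(\hat\gamma,\xi)$ is a concave quadratic whose unconstrained maximizer is $\xi=r(\hat\gamma)$; Lemma~\ref{lemma:compact_sets} guarantees that for $\hat\gamma\in\mathcal{X}_\Gamma$ this maximizer already lies in $\mathcal{X}_\Xi$, so the constraint is inactive and
\[
G_{\hat\gamma\cdot}=\max_{\xi\in\mathcal{X}_\Xi}G(\hat\gamma,\xi)=\tfrac12\|r(\hat\gamma)\|_2^2.
\]
For the dual value, I would use that $\gamma_K^*\in\mathcal{X}_\Gamma$ (again Lemma~\ref{lemma:compact_sets}) together with $r(\gamma_K^*)=0$ to get
\[
G_{\cdot\hat\xi}=\min_{\gamma\in\mathcal{X}_\Gamma}G(\gamma,\hat\xi)\le G(\gamma_K^*,\hat\xi)=-\tfrac12\|\hat\xi\|_2^2\le 0,
\]
so the dual half can only increase the gap.

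Combining the two bounds gives $G_{\hat\gamma\cdot}-G_{\cdot\hat\xi}\ge \tfrac12\|r(\hat\gamma)\|_2^2=\tfrac12\|M(\hat\gamma-\gamma_K^*)\|_2^2\ge\tfrac12\,\sigma_{\min}^2(M)\,\|\hat\gamma-\gamma_K^*\|_2^2$, and since $\|\hat\gamma-\gamma_K^*\|_2^2=|\hat\gamma^1-C(K)|^2+\|\hat\delta_K-\delta_K^*\|_2^2$, the claim follows with ${\tau_K^*}^2:=\tfrac12\sigma_{\min}^2(M)$. Everything up to here is routine linear algebra. The main obstacle is the final quantitative step: showing $\sigma_{\min}(M)$ is bounded below by a positive constant depending solely on $\rho(A-BK)$, $\sigma$, and $\sigma_{\min}(\Phi)$. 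Because $M$ is block triangular with unit $(1,1)$ block, its invertibility is equivalent to that of $\Theta_K=\mathbb{E}_v\{\varphi(v)[\varphi(v)-\varphi(v')]^\top\}$, and $\sigma_{\min}(M)$ can be lower bounded through $\sigma_{\min}(\Theta_K)$ and $\|\mathbb{E}_v[\varphi(v)]\|_2$ by a standard estimate for triangular matrices.

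I would therefore devote the real work to proving that the symmetric part of $\Theta_K$ is positive definite — the usual ``negative-drift'' property of temporal-difference fixed-point matrices under the stationary distribution $N(0,\breve\Sigma_K)$ — and to tracking how its smallest eigenvalue is controlled by the mixing of the Markov chain $\{v_t\}$. This mixing, and the conditioning and non-degeneracy of the stationary covariance $\breve\Sigma_K$ that enters $\Theta_K$ and $\mathbb{E}_v[\varphi(v)]$, are governed precisely by $\rho(A-BK)$, $\sigma$, and $\sigma_{\min}(\Phi)$, which is what yields the advertised parameter dependence of ${\tau_K^*}$. Isolating this spectral lower bound on $\Theta_K$ is the crux; the reduction of the gap to $\|\hat\gamma-\gamma_K^*\|_2^2$ is essentially immediate once it is in hand.
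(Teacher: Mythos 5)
Your argument is essentially the paper's own proof: you compute $G_{\hat\gamma\cdot}$ in closed form as the squared norm of the affine residual $\Omega_K(\hat\gamma-\gamma_K^*)$ (using Lemma~\ref{lemma:compact_sets} to drop the constraint on $\xi$), bound $G_{\cdot\hat\xi}\le 0$ by evaluating at $\gamma_K^*$, and then pass to $\|\hat\gamma-\gamma_K^*\|_2^2$ via a lower bound on $\sigma_{\min}(\Omega_K)$ — and the spectral bound you correctly identify as the crux is exactly what the paper imports as Lemma~\ref{lemma:pe_mat} (citing Lemma B.2 of \cite{yang2019global}) rather than reproving. The only cosmetic difference is that you retain the factor $\tfrac12$ in $\max_\xi G(\hat\gamma,\xi)=\tfrac12\|r(\hat\gamma)\|_2^2$, which the paper's displayed computation omits; this only changes the constant absorbed into $\tau_K^*$.
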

\begin{proof}
	See \S\ref{pf:esitimate_bound} for a detailed proof.
\end{proof}

Next, we construct an upper bound on $G_{\hat\gamma\cdot} - G_{\cdot\hat\xi}$. For technical reasons, it is convenient to consider the case where $\|v_t\|^2=\left\|x_{t}\right\|_{2}^{2}+\left\|u_{t}\right\|_{2}^{2}$ is bounded by some value that depends on $T$, for $0 \leq t \leq T$. This will guarantee better Lipschitz properties of $G(\gamma, \xi)$ and thus enable us to bound the primal-dual gap. We use the following lemma to characterize the tail distribution of $v$. 

\begin{lemma}\label{lemma:tuncation_prob}
	Consider the event 
	\[
	\mathcal{A} = \left\{\|v\|_{2}^{2}-\tr\left(\Breve{\Sigma}_{K}\right) \leq C^\prime \cdot \log T \cdot\|\Breve{\Sigma}_{K}\|\right\},
	\]
	where $v \sim N (0, \breve{\Sigma}_{K})$ and $\Breve{\Sigma}_{K}$ is defined in Lemma~\ref{lemma:chv}. 
	We have $\mathbb{P}(\mathcal{A}^c)\leq T^{-6}$ when $C^\prime$ is large enough.
\end{lemma}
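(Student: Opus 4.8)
The plan is to recognize the left-hand quantity as a centered weighted sum of independent $\chi^2$ random variables and apply a standard Gaussian quadratic-form tail bound of Laurent--Massart type. Write $n$ for the dimension of $v$ (so $\breve{\Sigma}_K \in \mathbb{R}^{n\times n}$), and let $\lambda_1 \geq \cdots \geq \lambda_n \geq 0$ be the eigenvalues of $\breve{\Sigma}_K$. An orthogonal change of variables sending $\breve{\Sigma}_K$ to its eigenbasis shows that $\|v\|_2^2$ has the same law as $\sum_{i=1}^n \lambda_i Z_i^2$ with $Z_1,\dots,Z_n$ i.i.d.\ $N(0,1)$. Since $\tr(\breve{\Sigma}_K)=\sum_i \lambda_i$, this gives the representation
\[
\|v\|_2^2 - \tr\left(\breve{\Sigma}_K\right) \ \stackrel{d}{=}\ \sum_{i=1}^n \lambda_i\left(Z_i^2 - 1\right).
\]

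Next I would invoke the Laurent--Massart deviation inequality: for nonnegative weights $\lambda=(\lambda_1,\dots,\lambda_n)$ and any $t>0$,
\[
\mathbb{P}\Big(\textstyle\sum_{i=1}^n \lambda_i(Z_i^2-1) \geq 2\|\lambda\|_2\sqrt{t} + 2\|\lambda\|_\infty t\Big) \leq e^{-t}.
\]
Here $\|\lambda\|_\infty = \lambda_1 = \|\breve{\Sigma}_K\|$ and $\|\lambda\|_2 = \|\breve{\Sigma}_K\|_{\mathrm{F}} \leq \sqrt{n}\,\|\breve{\Sigma}_K\|$. I then set $t = 6\log T$, so that the right-hand side equals exactly $T^{-6}$, and it remains only to absorb the deviation threshold into $C'\cdot\log T\cdot\|\breve{\Sigma}_K\|$. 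Using the bound on $\|\lambda\|_2$ together with $\sqrt{6\log T}\leq \sqrt{6}\,\log T$, valid once $T\geq e$, one gets
\[
2\|\lambda\|_2\sqrt{t} + 2\|\lambda\|_\infty t \ \leq\ \big(2\sqrt{6n}+12\big)\cdot\log T\cdot\|\breve{\Sigma}_K\|.
\]
Hence choosing $C' \geq 2\sqrt{6n}+12$ forces the inclusion $\mathcal{A}^c \subseteq \big\{\sum_i \lambda_i(Z_i^2-1) \geq 2\|\lambda\|_2\sqrt{t} + 2\|\lambda\|_\infty t\big\}$, and the Laurent--Massart bound yields $\mathbb{P}(\mathcal{A}^c)\leq e^{-t}=T^{-6}$, as claimed.

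The only point requiring care is this final matching step: the deviation threshold grows like $\sqrt{t}$ in its leading term, whereas the target threshold is linear in $\log T$, so the two are reconciled only by taking $T$ large enough that $\log T$ dominates $\sqrt{\log T}$, which in turn fixes $C'$ as a constant depending on the (fixed) ambient dimension $n$ but not on $T$. Everything else is routine diagonalization and arithmetic. I note that one could equally run the argument through the Hanson--Wright inequality applied to the quadratic form $g^\top \breve{\Sigma}_K g$ with $g\sim N(0,I_n)$, which produces the same $\sqrt{t}$-and-$t$ structure; the Laurent--Massart route is simply the cleanest because the weights are the nonnegative eigenvalues of $\breve{\Sigma}_K$.
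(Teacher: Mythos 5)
Your proof is correct, and it is essentially the paper's argument dressed in a sibling inequality: the paper applies its stated Hanson--Wright lemma (Lemma~\ref{Hansen-Wright}) to the quadratic form $\|v\|_2^2=g^{\top}\breve{\Sigma}_{K}g$ with $g\sim N(0,I)$, chooses $s=C'\log T\cdot\|\breve{\Sigma}_{K}\|$, checks that the minimum in the exponent is attained at the linear branch $s\|\breve{\Sigma}_{K}\|^{-1}$, and reads off $2\exp(-CC'\log T)\leq T^{-6}$ for $C'$ large; you instead diagonalize, invoke Laurent--Massart on $\sum_i\lambda_i(Z_i^2-1)$ with $t=6\log T$, and absorb the $2\|\lambda\|_2\sqrt{t}+2\|\lambda\|_\infty t$ threshold into $C'\log T\cdot\|\breve{\Sigma}_{K}\|$. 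The two routes are interchangeable (as you note yourself), but yours has the mild virtue of making the constant fully explicit, $C'\geq 2\sqrt{6n}+12$, and of exposing that $C'$ must grow like the square root of the ambient dimension through $\|\lambda\|_2\leq\sqrt{n}\|\lambda\|_\infty$; the paper's version hides the same dimension dependence inside the precondition $s^2\|\breve{\Sigma}_{K}\|_{\mathrm{F}}^{-2}\geq s\|\breve{\Sigma}_{K}\|^{-1}$, which amounts to $C'\log T\geq\|\breve{\Sigma}_{K}\|_{\mathrm{F}}^{2}/\|\breve{\Sigma}_{K}\|^{2}$, and also silently drops the factor of $2$ in front of the exponential (harmless, since one just takes $CC'\geq 7$ and $T\geq 2$). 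Your caveat about needing $T\geq e$ so that $\sqrt{\log T}\leq\log T$ is the right point of care and is handled correctly.
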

\begin{proof}
	See \S\ref{pf:tuncation_prob} for a detailed proof.
\end{proof}

Hereafter, we consider the optimization problem in \eqref{eq:new_opt} conditioned on event $\mathcal{A}$ defined in Lemma~\ref{lemma:tuncation_prob}. Define the truncated feature vector by $\tilde{\varphi}(v)=\varphi(v)\cdot\ind_{\mathcal{A}}$. Let $\tilde \Theta_K$ and $\tilde d_K$ be defined as $\Theta_K$ and $d_K$, but with $\varphi(v)$ and $\varphi(v^\prime)$  replaced with $\tilde{\varphi}(v)$ and $\tilde{\varphi}(v^\prime)$, respectively. Consider the new optimization problem 
\begin{equation}\label{eq:trunc_new_opt}
\begin{aligned} \min _{\gamma \in \mathcal{X}_{\Gamma}} \max _{\xi \in \mathcal{X}_{\Xi}} \tilde G(\gamma, \xi)=\left[\gamma^{1}-C(K)\right] \cdot \xi^{1} +\left\langle\gamma^{1} \cdot  \mathbb{E}_{v}[\tilde\varphi(v)]+\tilde\Theta_{K} \gamma^{2}-\tilde d_{K}, \xi^{2}\right\rangle - 1 / 2 \cdot\|\xi\|_{2}^{2}.
\end{aligned}
\end{equation}
For sufficiently large $T$, the objectives in \eqref{eq:new_opt} and \eqref{eq:trunc_new_opt}
are close.
\begin{lemma}\label{lemma:bound_obj_diff_tunc}
	When $T$ is sufficiently large, it holds that 
	\begin{equation}
	|G(\gamma, \xi)-\widetilde{G}(\gamma, \xi)| \leq \frac{1}{T}
	\end{equation}
	for any $\gamma \in \mathcal{X}_{\Gamma}$, $\xi \in \mathcal{X}_{\Xi}$.
\end{lemma}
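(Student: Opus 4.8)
The plan is to compare the two objectives term by term. First I would form the difference $G(\gamma,\xi)-\widetilde{G}(\gamma,\xi)$ and note that the summand $[\gamma^1-C(K)]\cdot\xi^1$ and the regularizer $-1/2\cdot\|\xi\|_2^2$ appear identically in both objectives and therefore cancel. Only the inner products against $\xi^2$ survive, giving
\begin{equation*}
G(\gamma,\xi)-\widetilde{G}(\gamma,\xi) = \left\langle \gamma^1\cdot\bigl(\mathbb{E}_v[\varphi(v)]-\mathbb{E}_v[\tilde\varphi(v)]\bigr) + (\Theta_K-\widetilde{\Theta}_K)\gamma^2 - (d_K-\widetilde{d}_K),\ \xi^2\right\rangle.
\end{equation*}
By Cauchy--Schwarz it then suffices to bound the $\ell_2$-norm of the bracketed vector and multiply by $\|\xi^2\|_2\leq\Xi^2$.

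The key observation for the second step is that each feature difference carries an indicator of the rare truncation-complement event from Lemma~\ref{lemma:tuncation_prob}. Writing $\ind_{\mathcal{A}}$ for the indicator of $\mathcal{A}$, one has $\mathbb{E}_v[\varphi(v)]-\mathbb{E}_v[\tilde\varphi(v)]=\mathbb{E}_v[\varphi(v)\,\ind_{\mathcal{A}^c}]$ and $d_K-\widetilde{d}_K=\mathbb{E}_v[c(v)\varphi(v)\,\ind_{\mathcal{A}^c}]$, while for the matrix term the elementary bound $1-\ind_{\mathcal{A}(v)}\ind_{\mathcal{A}(v')}\leq\ind_{\mathcal{A}(v)^c}+\ind_{\mathcal{A}(v')^c}$ lets me write $\Theta_K-\widetilde{\Theta}_K$ as a sum of expectations, each weighting a feature product by the indicator of a rare event. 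Applying Cauchy--Schwarz to every such piece factors out $\sqrt{\mathbb{P}(\mathcal{A}^c)}$ and leaves a Gaussian moment. Since $\varphi(v)=\svec(vv^\top)$ and $c(v)$ are quadratic in $v$, the surviving moments, such as $\mathbb{E}_v[\|\varphi(v)\|_2^4]$ and $\mathbb{E}_v[c(v)^2\|\varphi(v)\|_2^2]$, are finite and bounded by polynomials in $\|\breve{\Sigma}_K\|$ and the dimension, because $v\sim N(0,\breve{\Sigma}_K)$ and, by stationarity of the chain in Lemma~\ref{lemma:chv}, the successor $v'$ is marginally identically distributed.

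Combining these bounds with $\mathbb{P}(\mathcal{A}^c)\leq T^{-6}$ from Lemma~\ref{lemma:tuncation_prob} shows that each feature-difference term is of order $T^{-3}$ up to a problem-dependent polynomial factor. Invoking the compactness bounds $\gamma^1\leq\Gamma^1$, $\|\gamma^2\|_2\leq\Gamma^2$ and $\|\xi^2\|_2\leq\Xi^2$ from \eqref{projsets}, I then obtain
\begin{equation*}
|G(\gamma,\xi)-\widetilde{G}(\gamma,\xi)|\leq \Lambda'\cdot T^{-3}
\end{equation*}
uniformly over $(\gamma,\xi)\in\mathcal{X}_\Gamma\times\mathcal{X}_\Xi$, where $\Lambda'$ depends only on the problem parameters together with $\Gamma^1,\Gamma^2,\Xi^2$. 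For $T$ large enough that $\Lambda'\leq T^2$, the right-hand side is at most $1/T$, which is the claim.

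The step I expect to be the main obstacle is the matrix cross term $\mathbb{E}_v[\varphi(v)\varphi(v')^\top]$ inside $\Theta_K-\widetilde{\Theta}_K$: unlike the other terms, it depends on the joint law of the consecutive pair $(v,v')$ rather than on a single Gaussian, so the relevant moment $\mathbb{E}_v[\|\varphi(v)\|_2^2\,\|\varphi(v')\|_2^2]$ must be controlled through the correlation structure of the Markov chain, and both truncation indicators must be dispatched via the union bound above. The remaining estimates are routine Gaussian moment computations.
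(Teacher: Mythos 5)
Your proposal is correct and follows essentially the same route as the paper's proof: write the difference $G-\widetilde G$ as a sum of expectations each carrying an indicator of $\mathcal{A}^c$ (or $\mathcal{A}'^c$), apply Cauchy--Schwarz to factor out $\mathbb{P}(\mathcal{A}^c)^{1/2}\leq T^{-3}$, bound the remaining fourth Gaussian moments of $N(0,\breve{\Sigma}_K)$, and absorb the polynomial prefactor using the compactness of $\mathcal{X}_\Gamma\times\mathcal{X}_\Xi$. The only cosmetic difference is that the paper's proof also truncates the cost inside the $\xi^1$ term (producing an extra $\mathbb{E}_v[c(v)\ind_{\mathcal{A}^c}]\cdot\xi^1$ contribution, handled the same way), whereas you take the displayed definition of $\widetilde G$ at face value and cancel that term exactly; both readings lead to the same conclusion.
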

\begin{proof}
	See \S\ref{pf:bound_obj_diff_tunc} for a detailed proof.
\end{proof}

A direct corollary is that  the difference of primal-dual gaps between optimization problems \eqref{eq:new_opt} and \eqref{eq:trunc_new_opt} can be bounded as 
\begin{equation}\label{eq:gap_bound}
|(G_{\hat\gamma\cdot} - G_{\cdot\hat\xi}) -  (\tilde G_{\hat\gamma\cdot} - \tilde G_{\cdot\hat\xi})|\leq \frac{2}{T}.
\end{equation}

The objective function of the truncated optimization problem \eqref{eq:trunc_new_opt} has good properties as characterized by the following lemma.  

\begin{lemma}\label{lemma:lip}
	As a function of $\gamma$ and $\xi$, the norm of $\nabla \tilde G(\gamma,\xi)$  can be bounded by 
	\begin{equation}
	3 (\Gamma^2+\Xi^2)\cdot(C^\prime \cdot \log T+d+k)^2[\sigma^2+(1+\|K\|_F^2)\|\Sigma_K\|]^2 =: L_1,
	\end{equation}
	for a sufficiently large  $C^\prime$. Thus, $\widetilde{G}(\gamma, \xi)$ is Lipschitz for both $\gamma$ and $\xi$ with finite a constant $L_1$. Moreover, we have $\nabla^2_{\gamma\gamma} \tilde G(\gamma,\xi) = 0$ and $\nabla^2_{\xi\xi} \tilde G(\gamma,\xi) = -I$, where $I$ is the identity matrix of proper dimension. 		
\end{lemma}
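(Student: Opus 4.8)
The plan is to compute the joint gradient $\nabla \tilde G(\gamma,\xi)$ blockwise and bound each block, exploiting that the truncation by $\ind_{\mathcal A}$ renders every feature norm deterministically bounded. Differentiating the definition of $\tilde G$ in \eqref{eq:trunc_new_opt} gives the four blocks
\[
\partial_{\gamma^1}\tilde G = \xi^1 + \langle \E_v[\tilde\varphi(v)],\xi^2\rangle, \qquad
\nabla_{\gamma^2}\tilde G = \tilde\Theta_K^\top \xi^2,
\]
\[
\partial_{\xi^1}\tilde G = \gamma^1 - C(K) - \xi^1, \qquad
\nabla_{\xi^2}\tilde G = \gamma^1\,\E_v[\tilde\varphi(v)] + \tilde\Theta_K \gamma^2 - \tilde d_K - \xi^2.
\]
The two Hessian claims are then immediate and I would dispatch them first: $\tilde G$ is affine in $\gamma$ for each fixed $\xi$, so $\nabla^2_{\gamma\gamma}\tilde G = 0$; and the only term nonlinear in $\xi$ is $-\tfrac12\|\xi\|_2^2$, the rest being linear in $\xi$, so $\nabla^2_{\xi\xi}\tilde G = -I$.

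The core estimate is a deterministic bound on the truncated feature norm, and this is the crux the truncation was introduced to supply. Since $\varphi(v)=\svec(vv^\top)$ and $\svec$ is an isometry on symmetric matrices, $\|\varphi(v)\|_2 = \|vv^\top\|_{\mathrm F} = \|v\|_2^2$. On the event $\mathcal A$ of Lemma~\ref{lemma:tuncation_prob} we have $\|v\|_2^2 \le \tr(\breve\Sigma_K) + C'\log T\cdot\|\breve\Sigma_K\|$, and since $\breve\Sigma_K$ has dimension $d+k$, $\tr(\breve\Sigma_K)\le (d+k)\|\breve\Sigma_K\|$. Writing $v=(x,u)$ with $u=-Kx+\sigma z$ yields the factorization $\breve\Sigma_K = \operatorname{rows}(I,-K)\,\Sigma_K\,\operatorname{rows}(I,-K)^\top + \operatorname{diag}(0,\sigma^2 I)$, whence $\|\breve\Sigma_K\|\le (1+\|K\|_{\mathrm F}^2)\|\Sigma_K\|+\sigma^2$. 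Combining these bounds, on $\mathcal A$
\[
\|\tilde\varphi(v)\|_2 = \|v\|_2^2 \le (C'\log T + d + k)\big[\sigma^2 + (1+\|K\|_{\mathrm F}^2)\|\Sigma_K\|\big] =: B,
\]
while $\tilde\varphi(v)=0$ off $\mathcal A$.

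With this in hand the remaining steps are routine bookkeeping. We get $\|\E_v[\tilde\varphi(v)]\|_2\le B$; the operator norm of $\tilde\Theta_K = \E_v[\tilde\varphi(v)(\tilde\varphi(v)-\tilde\varphi(v'))^\top]$ is at most $2B^2$ by the triangle inequality; and $\tilde d_K = \E_v[c(v)\tilde\varphi(v)]$ obeys $\|\tilde d_K\|_2 \le \max(\|Q\|,\|R\|)\,B^2 \le \Gamma^2 B^2$, using $c(v)\le\max(\|Q\|,\|R\|)\|v\|_2^2$ together with $\Gamma^2 \ge \|Q\|_{\mathrm F}+\|R\|_{\mathrm F}$. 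Substituting these and the box constraints $|\gamma^1|\le\Gamma^1$, $\|\gamma^2\|_2\le\Gamma^2$, $|\xi^1|\le\Xi^1$, $\|\xi^2\|_2\le\Xi^2$ into the four blocks, the dominant contributions are the $\tilde\Theta_K$ terms $\|\tilde\Theta_K\|\,\Xi^2\le 2B^2\Xi^2$ and $\|\tilde\Theta_K\|\,\Gamma^2\le 2B^2\Gamma^2$, whereas every remaining term is either $O(B)$ or bounded by $\Gamma^2 B^2$. Summing and absorbing the lower-order terms (legitimate once $C'$ is large enough that the $C'\log T$ factor makes $B$ dominate the fixed constants) yields $\|\nabla\tilde G\|\le 3(\Gamma^2+\Xi^2)B^2 = L_1$, which simultaneously certifies the claimed Lipschitz constant. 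I expect the only delicate point to be this final accounting: isolating the two $\tilde\Theta_K$ terms as leading order and verifying that all $O(B)$ and cost-dependent $O(B^2)$ contributions fit inside the factor $3$ and the $(\Gamma^2+\Xi^2)$ slack. The feature-norm bound $B$ itself is the genuine content, and everything downstream is triangle-inequality arithmetic.
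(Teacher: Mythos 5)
Your proposal is correct and follows essentially the same route as the paper: compute the four gradient blocks, observe the affine-in-$\gamma$ and quadratic-in-$\xi$ structure for the Hessian claims, derive the deterministic truncated-feature bound $\|\tilde\varphi(v)\|_2 \leq (C'\log T + d + k)\bigl[\sigma^2+(1+\|K\|_{\mathrm F}^2)\|\Sigma_K\|\bigr]$ from the event $\mathcal{A}$ together with the bound on $\|\breve\Sigma_K\|$ from Lemma~\ref{lemma:chv}, and then absorb the lower-order terms into the stated constant for $C'$ sufficiently large. The only cosmetic difference is that you make the isometry $\|\varphi(v)\|_2=\|v\|_2^2$ and the operator-norm bound on $\tilde\Theta_K$ explicit, whereas the paper folds these into the displayed triangle-inequality chains.
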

\begin{proof}
	See \S\ref{pf:lip} for a detailed proof.
\end{proof}

Finally, Lemma \ref{lemma:chv} shows that $\{v_t\}_{t\geq0}$ is a geometrically $\beta$-mixing stochastic process with a parameter $\rho \in (\rho(A-BK),1)$ and thus mixes rapidly. Therefore, we have verified the conditions of Lemma~\ref{lemma:saddle_gap}, which gives an upper bound on the primal-dual gap in \eqref{eq:trunc_new_opt}. We have the following result.

\begin{lemma}\label{lemma:primaldual}
	For a sufficiently large $T$, we have
	\begin{align}\label{eq:gapbound2}
	\tilde G_{\hat\gamma\cdot} - \tilde G_{\cdot\hat\xi}
	\leq \Lambda_3\frac{{\log^2 T}+\log(T^5)}{ \log (1 / \rho)\cdot\sqrt{T}} + \Lambda_4 \frac{\log^6 T}{(1-\rho)T},
	\end{align}
	with probability at least $1-T^{-5}$, where $\Lambda_3$ and $\Lambda_4$ are polynomials of $\Gamma^2$, $\Xi^2$, $C(K_0)$, $\|K\|_{\mathrm{F}}$, $\Sigma_K$, and $\sigma_{\min }^{-1}(Q)$.
\end{lemma}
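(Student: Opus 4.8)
The plan is to invoke the generic stochastic saddle-point convergence result of Lemma~\ref{lemma:saddle_gap} for the truncated problem \eqref{eq:trunc_new_opt}, whose hypotheses have been assembled in the preceding lemmas. First I would record the structural facts that make \eqref{eq:trunc_new_opt} fit that template: by Lemma~\ref{lemma:lip}, on the event $\mathcal{A}$ the truncated objective $\tilde G(\gamma,\xi)$ is linear (hence convex) in $\gamma$, is strongly concave in $\xi$ with $\nabla^2_{\xi\xi}\tilde G = -I$, and has gradient bounded in norm by $L_1 = 3(\Gamma^2+\Xi^2)(C'\log T + d + k)^2[\sigma^2 + (1+\|K\|_{\mathrm{F}}^2)\|\Sigma_K\|]^2$. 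The feasible sets $\mathcal{X}_\Gamma,\mathcal{X}_\Xi$ are compact with diameters controlled by $\Gamma^2$ and $\Xi^2$ through \eqref{projsets}, and Algorithm~\ref{algo:gtd} is exactly projected stochastic gradient descent--ascent on these sets with step sizes $\alpha_t = \alpha/\sqrt{t}$ and averaged iterates $(\hat\gamma,\hat\xi)$.

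Next I would verify the two probabilistic ingredients that Lemma~\ref{lemma:saddle_gap} demands of the data stream. The sample gradient at $v_t$ is unbiased for $\nabla\tilde G$ under the stationary law $N(0,\breve\Sigma_K)$, so its fluctuation about $\nabla\tilde G$ is a mean-zero term whose cross-time dependence is governed by mixing. By Lemma~\ref{lemma:chv}, $\{v_t\}_{t\ge 0}$ is geometrically $\beta$-mixing with parameter $\rho\in(\rho(A-BK),1)$; this is what allows Lemma~\ref{lemma:saddle_gap} to decouple temporally correlated samples, via a blocking argument, and convert the correlation into an additive bias scaling like $1/(1-\rho)$ times the squared gradient bound. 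Conditioning on $\mathcal{A}$ keeps $\|v_t\|_2^2 = O(\log T)$ for all $t\le T$, so $L_1 = O(\log^2 T)$ and $L_1^2 = O(\log^4 T)$; together with the extra $\log^2 T$ contributed by the high-probability blocking step, this is the source of the $\log^6 T$ factor in the second term of \eqref{eq:gapbound2}.

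With these conditions checked, Lemma~\ref{lemma:saddle_gap} returns a high-probability bound on the primal-dual gap $\tilde G_{\hat\gamma\cdot} - \tilde G_{\cdot\hat\xi}$ composed of two pieces: a dominant $O(1/\sqrt{T})$ optimization term of the form $\Lambda_3(\log^2 T + \log(T^5))/(\log(1/\rho)\sqrt{T})$ --- the standard averaged-SGD rate for step sizes $\alpha_t=\alpha/\sqrt{t}$, where $\log(T^5)$ tracks the $1-T^{-5}$ confidence level and $\log(1/\rho)$ the effective sample size after mixing --- plus a lower-order mixing-bias term $\Lambda_4\log^6 T/((1-\rho)T)$. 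The polynomials $\Lambda_3,\Lambda_4$ collect the dependence on $\Gamma^2,\Xi^2,C(K_0),\|K\|_{\mathrm{F}},\Sigma_K,\sigma_{\min}^{-1}(Q)$ entering through $L_1$, the set diameters, and the step-size constant $\alpha$. Substituting the explicit $L_1$ from Lemma~\ref{lemma:lip} and the diameters from \eqref{projsets}, and collecting terms, yields \eqref{eq:gapbound2}.

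The main obstacle is the mixing analysis encapsulated in Lemma~\ref{lemma:saddle_gap}: ordinary saddle-point SGD guarantees presuppose i.i.d.\ or martingale-difference gradient noise, whereas here the stochastic gradients are driven by the correlated Markov chain $\{v_t\}$. The delicate points are (i) turning the geometric $\beta$-mixing of Lemma~\ref{lemma:chv} into a quantitative bound on the accumulated gradient bias, which is where the $1/(1-\rho)$ and $\log(1/\rho)$ factors originate, and (ii) guaranteeing that the bounded-gradient hypothesis holds uniformly over $0\le t\le T$. The latter is precisely why one passes to the truncated problem \eqref{eq:trunc_new_opt} on the event $\mathcal{A}$, and why Lemma~\ref{lemma:bound_obj_diff_tunc} (and the corollary \eqref{eq:gap_bound}) is needed afterward to transfer the gap bound back to the original problem \eqref{eq:new_opt}.
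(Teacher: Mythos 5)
Your proposal is correct and follows essentially the same route as the paper: both apply Lemma~\ref{lemma:saddle_gap} to the truncated problem \eqref{eq:trunc_new_opt} with $\delta = T^{-5}$, instantiating $L_1$ and $L_2$ from Lemma~\ref{lemma:lip}, the diameter $D$ from \eqref{projsets}, and the mixing constant $C_\zeta$ from Lemmas~\ref{lemma:chv} and~\ref{lemma:beta_mixing}. Your added commentary on the origin of the $\log^6 T$ and $1/(1-\rho)$ factors is consistent with, and somewhat more explicit than, the paper's terse substitution of constants.
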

\begin{proof}
	See \S\ref{pf:primaldual} for a detailed proof.
\end{proof}

Recall that for the event $\mathcal{A}_t=\left\{\|v_t\|_{2}^{2}-\tr\left(\breve{\Sigma}_{K}\right)>C^\prime \cdot \log T \cdot\left\|\breve{\Sigma}_{K}\right\|\right\}$, we have $P(\mathcal{A}) \leq T^{-6}$ for $C_1$ large enough. Then the event $\bigcap_{t=1}^{T}\mathcal{A}_t$ holds with probability at least $1-T^{-5}$. Combining \eqref{eq:gapbound2} and \eqref{eq:gap_bound}, it holds with probability at least $1-2T^{-5}>1-T^{-4}$ that the primal-dual gap of the original optimization problem \ref{eq:new_opt} can be bounded as
\begin{align}\label{eq:gapbound3}
G_{\hat\gamma\cdot} -  G_{\cdot\hat\xi}
\leq \Lambda_3\frac{{\log^2 T}+\log(T^5)}{ \log (1 / \rho)\cdot\sqrt{T}} + \Lambda_4 \frac{\log^6 T}{(1-\rho)T}+ \frac{2}{T}.
\end{align}
The formula above is dominated by the first term on the right-hand side. Combining \eqref{eq:gapbound3} and Lemma~\eqref{gaplink}, we establish Theorem \ref{pe}.

\section{Technical Proofs of Lemmas in \S\ref{pf:pe}}

\subsection{Proof of Lemma \ref{value_func}}
Since the dynamic is linear, $V_{K}(x)$ has a quadratic form in $x$ specified as
\label{pf:value_func}
\begin{align}
\nonumber  V_{K}(x) &=\sum_{t=0}^{\infty}\left\{\mathbb{E}\left[c\left(x_{t}, u_{t}\right) | x_{0}=x\right]-C(K)\right\} \\ &=\sum_{t=0}^{\infty}\left\{\mathbb{E}\left[x_{t}^{\top}\left(Q+K^{\top} R K\right) x_{t}\right]+\sigma^{2} \cdot \tr(R)-C(K)\right\}. 
\end{align} 
By definition, we have $E_{x\sim \mathcal{D}_K}[V_{K}(x)] = 0$, so $V_{k}(x)=x^{\top} P_{K} x-E_{x\sim \mathcal{D}_K}\left[x^{\top} P_{K} x\right]$, for some matrix $P_{K}\in \R^{d\times d} $. We also have
\begin{equation}
V_{K}(x)=\mathbb{E}_{u \sim \pi_{K}}[c(x, u)]-C(K)+\mathbb{E}_{x^\prime}\left[V_{K}\left(x^{\prime}\right) \right],
\end{equation}
where $x^\prime$ is the next state generated by $(x,u)$. Therefore, we find that $P_K$ is a solution to the equation 
\begin{equation}
x^{\top} P_{K} x=x\left(Q+K^{\top} R K\right) x+x^{\top}(A-B K)^{\top} P_{K}(A-B K) x
\end{equation}
and the functional form of $V_{K}(x)$ follows from computing
\begin{align}
\mathbb{E}_{x \sim \mathcal{D}_{K}}\left[x^{\top} P_{K} x \right]  	\nonumber&=\tr\left[(Q+K^{\top} R K)\Sigma_{K}\right]+\tr\left[(A-B K)^{\top} P_{K}(A-B K)\Sigma_{K}\right]\\
\nonumber&= \tr\left[P_{K}\Phi_{\sigma}\right]+\tr\left[P_{K}(A-B K)\Sigma_{K}(A-B K)^{\top} \right]\\
&= \tr\left(P_{K} \Sigma_{K}\right).
\end{align}

Direct computation yields the functional form of $Q_{K}(x, u)$:
\begin{align} 
\nonumber Q_{K}(x, u) &=c(x, u)-C(K)+\mathbb{E}_{x^{\prime}}\left[V_{K}\left(x^{\prime}\right) | x\right] \\ 
\nonumber&=x^{\top} Q x+u^{\top} R u +(A x+B u)^{\top} P_{K}(A x+B u)+\tr\left(P_{K} \Phi\right)-C(K)-\tr\left(P_{K} \Sigma_{K}\right) \\
&=v\top \Delta_K v-\sigma^{2} \cdot \tr(R+P_{K} B B^{\top})-\tr\left(P_{K} \Sigma_{K}\right),
\end{align}
where the last equation follows from \eqref{eq:cost_K2},
$\Phi_{\sigma}=\Phi+\sigma^{2} \cdot B B^{\top}$, 
and the matrix $\Delta_K$ is given by 
\begin{equation}\label{eq:ThetaK}
\Delta_{K}=\left( \begin{array}{cc}{\Delta_{K}^{11}} & {\Delta_{K}^{12}} \\ {\Delta_{K}^{21}} & {\Delta_{K}^{22}}\end{array}\right)=\left( \begin{array}{cc}{Q+A^{\top} P_{K} A} & {A^{\top} P_{K} B} \\ {B^{\top} P_{K} A} & {R+B^{\top} P_{K} B}\end{array}\right).
\end{equation}
We define $\delta_{K}^{*} = \operatorname{svec}(\Delta_K)$. Then $\Delta_K$ can be recovered by $\Delta_K = \operatorname{smat}(\delta_{K}^{*})$. We can check that $E_K=\Delta_{K}^{22}K - \Delta_{K}^{21}$, thus estimating $\delta_{K}^{*}$ is equivalent to estimating $E_K$.

\subsection {Proof of Lemma \ref{lemma:compact_sets}}
\label{pf:compact_sets}

To prove $\gamma_{K}^{*} \in \mathcal{X}_{\Gamma}$, we just need to show $\|\delta_K^*\|_2=\|\Delta_K\|_F\leq\Gamma^2$. Note that we have
\begin{equation}
\Delta_{K}=\left( \begin{array}{cc}{Q+A^{\top} P_{K} A} & {A^{\top} P_{K} B} \\ {B^{\top} P_{K} A} & {R+B^{\top} P_{K} B}\end{array}\right) =\diag(Q,R)+\left(A \quad B\right) ^\top P_K \left(A \quad B\right).
\end{equation}
We have $\|\Delta_K\|_\mathrm{F}\leq\Gamma^2\leq \|Q\|_\mathrm{F}+\|R\|_\mathrm{F}+(\|A\|_\mathrm{F}^2+\|B\|_\mathrm{F}^2)\cdot\|P_K\|_\mathrm{F}$. 
From Lemma~\ref{bound_mats}, 
we have $\left\|P_{K}\right\| \leq C(K) / \sigma_{\min }(\Phi)$. 
Therefore, it holds that
\[
\left\|P_{K}\right\|_F\leq \sqrt{d}/ \sigma_{\min }(\Phi) \cdot C(K) \leq \sqrt{d}  / \sigma_{\min }(\Phi)\cdot C(K_0).
\]
Hence, we conclude 
\begin{align}
\|\Delta_K\|\leq\Gamma^2 &\leq \|Q\|_\mathrm{F}+\|R\|_\mathrm{F}+(\|A\|_\mathrm{F}^2+\|B\|_\mathrm{F}^2)\cdot\|P_K\|_\mathrm{F}\notag\\
&\leq  \|Q\|_\mathrm{F}+\|R\|_\mathrm{F}+(\|A\|_\mathrm{F}^2+\|B\|_\mathrm{F}^2)\cdot\sqrt{d}/ \sigma_{\min }(\Phi)\cdot C(K_0) ,
\end{align}
and thus we have $\gamma_{K}^{*} \in \mathcal{X}_{\Gamma}$.

To prove the second statement, observe that $\xi(\gamma)$ has components 
\begin{equation*}
\begin{aligned}
\xi(\gamma)^1 &= \gamma^{1}-C(K), \\
\xi(\gamma)^2 &= \gamma^{1} \cdot \mathbb{E}_{v}[\varphi(v)]+\Theta_{K} \gamma^{2}-d_{K}.
\end{aligned}
\end{equation*}
We bound 
$|\gamma^{1}-C(K)|$, 
$\|\gamma^{1} \cdot \mathbb{E}_{(x, u)}[\varphi (v)]\|_2$, 
$\|\Theta_{K} \gamma^{2}\|_2$ and $\|d_{K}\|_2$ separately. 
From the fact that $0 \leq \gamma^{1} \leq \Gamma^{1}=C\left(K_{0}\right)$, we have 
$|\gamma^{1}-C(K)|\leq C\left(K_{0}\right)$ and $0 \leq|\xi^{1}| \leq \Xi^{1}$.
From Lemma~\ref{lemma:chv}, we have 
\begin{align}
\|\gamma^{1} \cdot \mathbb{E}_{v}[\varphi(x, u)]\|_2 \leq C(K_0)\|\Breve{\Sigma}_{K}\|_\mathrm{F}\leq C(K_0)\left[k\sigma^2 +(d+\|K\|_{\mathrm{F}}^{2})\cdot\left\|\Sigma_{K}\right\|\right] .
\end{align}
From Lemma~\ref{lemma:pe_mat}, we have $\|\Theta_{K} \gamma^{2}\|_2\leq\|\Theta_{K}\|\|\gamma^{2}\|_2\leq4\left(1+\|K\|_{\mathrm{F}}^{2}\right)^{2} \cdot\left\|\Sigma_{K}\right\|^{2} \cdot  \Gamma^2$.
To bound $\|d_{K}\|_2$, note that for any positive definite matrix $\Sigma^\prime$, 
we can rewrite $d_K^\top \operatorname{svec}(\Sigma^\prime)$ by
\begin{align}
d_K^\top \operatorname{svec}(\Sigma^\prime) &= \mathbb{E}_{v}\{\langle\phi(v), \operatorname{smat}[\operatorname{diag}(Q, R)]\rangle \cdot\langle\phi(v), \operatorname{smat}(\Gamma)\rangle\notag\\
&= 2\left\langle\breve{\Sigma}_{K} \operatorname{diag}(Q, R) \breve{\Sigma}_{K}, \Gamma\right\rangle \cdot\left\langle\breve{\Sigma}_{K}, \operatorname{diag}(Q, R)\right\rangle \cdot\left\langle\breve{\Sigma}_{K}, \Gamma\right\rangle,
\end{align}
where the second equation follows from Lemma \ref{lemma:quadform} and 
$\breve{\Sigma}_{K}$ is defined in Lemma~\ref{lemma:chv}.
Thus, we conclude $\left\|d_{K}\right\|_{2} \leq 3\left(\|Q\|_{\mathrm{F}}+\|R\|_{\mathrm{F}}\right) \cdot\left\|\breve{\Sigma}_{K}\right\|^{2}$.
Combining the bounds above and \eqref{Ebound}, we conclude that for some constant $C$, 
\begin{align}
\left\|\xi^{2}\right\|_{2} &\leq C(K_0)\left[k\sigma^2 +(d+\|K\|_{\mathrm{F}}^{2})\cdot\left\|\Sigma_{K}\right\|\right]+4\left(1+\|K\|_{\mathrm{F}}^{2}\right)^{2} \cdot\left\|\Sigma_{K}\right\|^{2} \cdot  \Gamma^2 \notag\\
&\quad+ 12\left(\|Q\|_{\mathrm{F}}+\|R\|_{\mathrm{F}}\right) \cdot\left(d+\|K\|_{\mathrm{F}}^{2}\right)^{2} \cdot\left\|\Sigma_{K}\right\|^{2}\notag\\
&\leq C \cdot  \left(1+\|K\|_{\mathrm{F}}^{2}\right)^{2}\cdot \Gamma^2 \cdot \sigma_{\min}^{-2} (Q) \cdot [C(K_0)]^2 .
\end{align}

\subsection {Proof of Lemma \ref{lemma:esitimate_bound}}
\label{pf:esitimate_bound}
Note that the optimal value of the dual problem satisfies 
\begin{align} 
G_{\hat\gamma\cdot} =\max _{\xi \in \mathcal{X}_{\Xi}} G(\hat\gamma, \xi)  &= \|\widehat{\gamma}^{1}-C(K)\|^{2}+\left\|\widehat{\gamma}^{1} \cdot \mathbb{E}_{(x, u)}[\varphi(x, u)]+\Theta_{K} \hat\gamma^{2}-d_{K}\right\|_{2}^{2} \notag\\
&= \|\Omega_K \hat\gamma-b_K\|_2^2 = \|\Omega_K (\hat\gamma-\gamma_{K}^{*})\|_2^2,
\end{align} 
where we define 
\begin{equation}\label{eq:Ometa_gamma}
\Omega_K = \left( \begin{array}{cc}{1} & {0} \\ {\mathbb{E}_{v}[\varphi(v)]} & {\Theta_{K}}\end{array}\right), \quad \gamma_{K}^{*} = \left(C(K), \delta_{K}^{* \top}\right)^{\top}.
\end{equation}
The optimal value of the primal value satisfies
\begin{align} 
G_{\cdot\hat\xi} =\min _{\gamma \in \mathcal{X}_{\Gamma}} G(\gamma, \hat\xi)  \leq  \min _{\gamma \in \mathcal{X}_{\Gamma}} G_{\hat\gamma\cdot} = \min _{\gamma \in \mathcal{X}_{\Gamma}} \|\Omega_K \hat\gamma-b_K\|_2^2 = 0.
\end{align} 
Therefore $\|\Omega_K (\hat\gamma-\gamma_{K}^{*})\|_2^2 \leq G_{\hat\gamma\cdot} - G_{\cdot\hat\xi}$.
Moreover, by Lemma \ref{lemma:pe_mat}, we have $\|\Omega_K\|\geq \tau_{K}^{*} > 0 $ for some constant $\tau_{K}^{*}$ that only depends on $\rho(A-B K)$, $\sigma$, and $\sigma_{\min }(\Phi)$.
Hence, we have
\begin{equation}
(\tau_{K}^{*})^2 \left( |\gamma^1-C(K)|^2 + \|\hat\Delta_K - \Delta_K\|^2_F\right)\leq \tau_{K}^{*} \|\hat\gamma-(\tau_{K}^{*})^2\|_2^2 \leq \|\Omega_K (\hat\gamma-\gamma_{K}^{*})\|_2^2 \leq G_{\hat\gamma\cdot} - G_{\cdot\hat\xi}.
\end{equation}

\subsection {Proof of Lemma \ref{lemma:tuncation_prob}}
\label{pf:tuncation_prob}
Set $s = C^\prime \cdot \log T \cdot\left\|\Breve{\Sigma}_{K}\right\|$ with $C^\prime$ sufficiently 
large so that $s^{2} \cdot\|\Breve{\Sigma}_{K}\|_{\mathrm{F}}^{-2}\geq s \cdot\|\Breve{\Sigma}_{K}\|^{-1}$.
Applying Lemma \ref{Hansen-Wright} to $v\sim N\left(0, \Breve{\Sigma}_{K}\right)$, we get
\begin{align}
\mathbb{P}\left[\|v\|_{2}^{2}-\tr(\Breve{\Sigma}_{K}) |>s\right] 
&\leq 2 \cdot \exp [-C \cdot \min(s^{2} \cdot\|\Breve{\Sigma}_{K}\|_{\mathrm{F}}^{-2}, s \cdot\|\Breve{\Sigma}_{K}\|^{-1})]\notag\\
&=2 \cdot \exp [-C  \cdot\|\Breve{\Sigma}_{K}\|^{-1}\cdot s]\notag\\
&=2 \cdot \exp [-C  \cdot C^\prime \cdot \log T ].
\end{align}
When $C^\prime$ is sufficiently large, we have $\mathbb{P}\left(\mathcal{A}^{c}\right) \leq T^{-6}$.

\subsection {Proof of Lemma \ref{lemma:bound_obj_diff_tunc}}
\label{pf:bound_obj_diff_tunc}

By direct computation, we have
\begin{align}
G(\gamma, \xi)-\widetilde{G}(\gamma, \xi) &=\notag \left[\gamma^{1}-\mathbb{E}_{v}[c(v)]\right] \cdot \xi^{1} - \left[\gamma^{1}-\mathbb{E}_{v}[c(v)\cdot\ind_{\mathcal{A}}]\right] \cdot \xi^{1} \\
&\notag\quad+\left\langle\gamma^{1} \cdot  \mathbb{E}_{v}[\varphi(v)]+\Theta_{K} \gamma^{2}- d_{K}, \xi^{2}\right\rangle - \left\langle\gamma^{1} \cdot  \mathbb{E}_{v}[\tilde\varphi(v)]+\tilde\Theta_{K} \gamma^{2}-\tilde d_{K}, \xi^{2}\right\rangle \\
&=\notag \mathbb{E}_{v}\left[c(v)\cdot\ind_{\mathcal{A}^{c}}\right]\cdot\xi^{1} - \left\langle\mathbb{E}_{v}\left[\varphi(v)\varphi(v^\prime)^\top\ind_{\mathcal{A}}\ind_{\mathcal{A^\prime}^{c}}\right] \gamma^{2}, \xi^{2}\right\rangle\\
&\notag\quad+ \left\langle\mathbb{E}_{v}\left[\gamma^{1} \cdot  \varphi(v)\ind_{\mathcal{A}^{c}}+\varphi(v)[\varphi(v)-\varphi(v^\prime)]^\top\ind_{\mathcal{A}^{c}} \gamma^{2}- c(v)\varphi(v)\ind_{\mathcal{A}^{c}}\right], \xi^{2}\right\rangle\\
&=\notag \mathbb{E}_{v}\left[c(v)\cdot\ind_{\mathcal{A}^{c}}\right]\cdot\xi^{1} - \left\langle\mathbb{E}_{v}\left[\varphi(v)\varphi(v^\prime)^\top\ind_{\mathcal{A}}\ind_{\mathcal{A^\prime}^{c}}\right] \gamma^{2}, \xi^{2}\right\rangle\\
&\quad+ \left\langle\mathbb{E}_{v}\left[ \varphi(v)[\gamma^{1}-c(v)+\varphi(v)^\top\gamma^2]\ind_{\mathcal{A}^{c}} -\varphi(v)\varphi(v^\prime)^\top\gamma^{2}\ind_{\mathcal{A}^{c}} \right], \xi^{2}\right\rangle
\end{align}
and
\#
|G(\gamma, \xi)-\widetilde{G}(\gamma, \xi)|&\leq\left|\mathbb{E}_{v}\left[c(v)\cdot\ind_{\mathcal{A}^{c}}\right]\right|\cdot C\left(K_{0}\right)+\left\|\mathbb{E}_{v}\left[\varphi(v)\varphi(v^\prime)^\top\ind_{\mathcal{A}}\ind_{\mathcal{A^\prime}^{c}}\right] \gamma^{2}\right\|\cdot\Xi^{2} \notag\\
&\quad+ \left\|\mathbb{E}_{v}\left[\gamma^{1} \cdot  \varphi(v)\ind_{\mathcal{A}^{c}}+\varphi(v)[\varphi(v)-\varphi(v^\prime)]^\top\ind_{\mathcal{A}^{c}} \gamma^{2}- c(v)\varphi(v)\ind_{\mathcal{A}^{c}}\right]\right\|\cdot\Xi^{2}.
\#
We can bound each term on the right-hand side separately with
\begin{equation}
\left(\left|\mathbb{E}_{v}\left[c(v)\cdot\ind_{\mathcal{A}^{c}}\right]\right| \leq \mathbb{E}\left[c^{2}(v)\right]\right)^{\frac{1}{2}}\cdot \mathbb{P}\left(\mathcal{A}^{c}\right)^{\frac{1}{2}},
\end{equation}
\begin{equation}
\left\|\mathbb{E}_{v}\left[\varphi(v)\varphi(v^\prime)^\top\ind_{\mathcal{A}}\ind_{\mathcal{A^\prime}^{c}}\right] \gamma^{2}\right\| \leq \left(\mathbb{E}_{v}\left[\|\varphi(v)\varphi(v^\prime)^\top\|^2_2\right]\right)^{\frac{1}{2}}\cdot\mathbb{P}\left(\mathcal{A}^{c}\right)^{\frac{1}{2}},
\end{equation}
where
\begin{equation}\label{eq:momentbound1}
\mathbb{E}_{v}\left[\|\varphi(v)\varphi(v^\prime)^\top\gamma^2\|^2_2\right]\leq \left(\mathbb{E}\left[|\varphi\left(v^{\prime}\right)^{\top} \gamma^2|^{4}\right] \cdot \mathbb{E}\left[\|\varphi(v)\|_{2}^{4}\right]\right)^{\frac{1}{2}}.
\end{equation}
Furthermore, 
\begin{align}
&\left\|\mathbb{E}_{v}\left[ \varphi(v)[\gamma^{1}-c(v)+\varphi(v)^\top\gamma^2]\ind_{\mathcal{A}^{c}} -\varphi(v)\varphi(v^\prime)^\top\gamma^{2}\ind_{\mathcal{A}^{c}} \right]\right\|\notag\\
&\qquad \leq \left(\mathbb{E}_{v}\left[\left\| \varphi(v)[\gamma^{1}-c(v)+\varphi(v)^\top\gamma^2] -\varphi(v)\varphi(v^\prime)^\top\gamma^{2} \right\|^2\right] \right)^{\frac{1}{2}}\cdot\mathbb{P}\left(\mathcal{A}^{c}\right)^{\frac{1}{2}},
\end{align}
where
\#\label{eq:momentbound2}
&\mathbb{E}_{v}\left[\left\| \varphi(v)[\gamma^{1}-c(v)+\varphi(v)^\top\gamma^2] -\varphi(v)\varphi(v^\prime)^\top\gamma^{2} \right\|^2\right]\notag\\
& \leq 2\mathbb{E}_{v}\left[\left| \gamma^{1}-c(v)+\varphi(v)^\top\gamma^2 \right|^2 \left\|\varphi(v)\right\|^2+\left\|\varphi(v)\varphi(v^\prime)^\top\gamma^{2} \right\|^2\right]\notag\\
& \leq 2\left(\mathbb{E}_{v} \left[|\gamma^{1}-c(v)+\varphi(v)^\top\gamma^2 |^4\right]\cdot \mathbb{E}_{v}\left[\left\|\varphi(v)\right\|^4\right]\right)^{\frac{1}{2}}+2\left(\mathbb{E}_{v}\left[|\varphi\left(v^{\prime}\right)^{\top} \gamma^2|^{4}\right] \cdot \mathbb{E}_{v}\left[\|\varphi(v)\|_{2}^{4}\right]\right)^{\frac{1}{2}}.
\#
Note that \eqref{eq:momentbound1} and \eqref{eq:momentbound2} 
can be bounded by the fourth moments of $N\left(0, \breve{\Sigma}_{K}\right)$.
Since $P(\mathcal{A}^{c}) \leq T^{-6}$, when $T$ is sufficiently large, 
it holds that $|G(\gamma, \xi)-\widetilde{G}(\gamma, \xi)| \leq 1 / T$.

\subsection {Proof of Lemma \ref{lemma:lip}}
\label{pf:lip}
By direct computation, we have 
\begin{align} 
\nabla_{\gamma^{1}} G\left(\gamma, \xi \right)&=\xi^{1}+\mathbb{E}_{v}[\widetilde{\varphi}(v)^{\top}] \xi^{2},\\
\nabla_{\gamma^{2}} G\left(\gamma, \xi \right)&=\mathbb{E}_{v}\left[\widetilde{\varphi}(v)^{\top} \xi^{2} \cdot[\widetilde{\varphi}(v)-\widetilde{\varphi}\left(v^{\prime}\right)]\right],\\
\nabla_{\xi^{1}} G\left(\gamma, \xi \right)&=\gamma^{1}-\mathbb{E}_{v}[\widetilde{c}(v)]-\xi^{1},\\
\nabla_{\xi^{2}} G\left(\gamma, \xi \right)&=\mathbb{E}_{v}\left[\gamma^1\widetilde{\varphi}(v)+\widetilde{\varphi}(v)(\widetilde{\varphi}(v)-\widetilde{\varphi}(v^\prime))^\top\gamma^2-\widetilde{c}(v)\widetilde{\varphi}(v)\right] -\xi^{2}.
\end{align} 
By definition, we have the bound
\#\label{eq:truncbound}
\|\tilde{\varphi}(v)\|_2 
&\leq C^\prime \cdot \log T \cdot\|\Breve{\Sigma}_{K}\|+\tr(\Breve{\Sigma}_{K})\notag\\
&\leq (C^\prime \cdot \log T+d+k)\|\Breve{\Sigma}_{K}\| \notag\\
&\leq (C^\prime \cdot \log T+d+k)[\sigma^2+(1+\|K\|_F^2)\|\Sigma_K\|],
\#
where the last inequality follows from \eqref{eq:Breve_Sigma_K_bound}.
The same bound holds for $\|\tilde{\varphi}(v^\prime)\|_2$. 

Combining \eqref{eq:truncbound} and definitions of $\mathcal{X}_{\Gamma}$ and $\mathcal{X}_{\Xi}$, by direct computation, we have
\begin{align}
\|\nabla_{\gamma} \widetilde{G}(\gamma, \xi)\|_{2}
&\leq C(K_0)+ \|\widetilde{\varphi}(v)\|_2 \cdot\|\xi^{2}\|_2+\|\widetilde{\varphi}(v)\|_2 \cdot\|\xi^{2}\|_2\cdot\|\tilde{\varphi}(v)+\tilde{\varphi}(v^{\prime})\|_2\notag\\
&\leq C(K_0)+ \Xi^2 \cdot(C^\prime \cdot \log T+d+k)[\sigma^2+(1+\|K\|_F^2)\|\Sigma_K\|]  \notag\\
&\quad + 2\cdot\Xi^2 \cdot(C^\prime \cdot \log T+d+k)^2[\sigma^2+(1+\|K\|_F^2)\|\Sigma_K\|]^2.
\end{align}\
Similarly, we have
\begin{align}
\|\nabla_{\xi} \widetilde{G}(\gamma, \xi)\|_{2}
&\leq 2C(K_0)+ C(K_0) (C^\prime \cdot \log T+d+k)[\sigma^2+(1+\|K\|_F^2)\|\Sigma_K\|]\notag\\
&\quad+2 \cdot \Gamma^2 \cdot(C^\prime \cdot \log T+d+k)^2[\sigma^2+(1+\|K\|_F^2)\|\Sigma_K\|]^2\cdot \notag\\
&\quad+  C(K_0) (C^\prime \cdot \log T+d+k)[\sigma^2+(1+\|K\|_F^2)\|\Sigma_K\|]+\Xi^2.
\end{align}
Using that fact that $\sqrt{x+y}<x + y $ when $x>0$ and $y>0$, we have
\begin{align}
\|\nabla \widetilde{G}(\gamma, \xi)\|_{2}
&\leq	\|\nabla_{\gamma} \widetilde{G}(\gamma, \xi)\|_{2}+\|\nabla_{\xi} \widetilde{G}(\gamma, \xi)\|_{2}\notag \\
& \leq 3 (\Gamma^2+\Xi^2)\cdot(C^\prime \cdot \log T+d+k)^2[\sigma^2+(1+\|K\|_F^2)\|\Sigma_K\|]^2,
\end{align}
where $C_1$ is sufficiently large.

\subsection {Proof of Lemma \ref{lemma:primaldual}}
\label{pf:primaldual}

We can further specify $L_1$, $L_2$, $D$ and $C_\zeta$.
By Lemma~\ref{lemma:lip}, we can set 
\begin{equation}
L_1 = 3 (\Gamma^2+\Xi^2)\cdot(C^\prime \cdot \log T+d+k)^2[\sigma^2+(1+\|K\|_F^2)\|\Sigma_K\|]^2, \quad L_2 = 1.
\end{equation}
By \eqref{projsets}, we can set $D$ to be
\begin{align}
D= (2C(K_0)^2 + (\Gamma^{2})^2 + (\Xi^{2})^2)^{1/2}.
\end{align}
By Lemma \ref{lemma:chv} and \ref{lemma:beta_mixing}, we can set 
\begin{align}
C_{\zeta}=C_{\rho, \breve{K}} \cdot\left[\tr\left(\Breve{\Sigma}_{K}\right)+(d+k) \cdot(1-\rho)^{-2}\right]^{1 / 2}.
\end{align}
Set $\delta = T^{-5}$ and note that $\log x<x+1$ for $\forall x >0$.
We conclude that with probability at least $1-T^{-5}$ the primal-dual gap conditioned on $\bigcap_{t=1}^{T}\mathcal{A}_t$ is bounded by 
\begin{align}\label{eq:gapbound1}
\tilde G_{\hat\gamma\cdot} - \tilde G_{\cdot\hat\xi} \leq \Lambda_3\frac{{\log^2 T}+\log(T^5)}{ \log (1 / \rho)\cdot\sqrt{T}} + \Lambda_4 \frac{\log^6 T}{(1-\rho)T},
\end{align}
when $T$ is sufficiently large, where $\Lambda_3$ and $\Lambda_4$ are polynomials of $\Gamma^2$, $\Xi^2$, $C(K_0)$, $\|K\|_{\mathrm{F}}$, $\Sigma_K$, and $\sigma_{\min }^{-1}(Q)$ .

\section {Proof of Supporting Lemmas}
\label{sec:supp_pf}

In this section, we lay out the proofs of supporting lemmas.

\begin{lemma}\label{lemma:cost_grad_K_single}
	Consider the LQR problem specified by
	\#\label{lqr_single}
	x_{t+1}=A  x_{t} + B u_{t} + w_t, \quad c\left(x_{t}, u_{t}\right)=x_{t}^{\top} Q x_{t}+u_{t}^{\top} R u_{t},
	\#
	where $A$, $B$, $Q$ and $R$ are matrices of proper dimensions, and noise $w_t \sim N(0,\Phi)$, with matrices $Q, R, \Phi \succ 0$. Under a policy $\pi_{K}$ that satisfies $\rho(A-B K)<1$, an action $u_{t}$ is written as $-K x_{t}+\sigma \cdot z_{t}$, where $z_{t} \sim N\left(0, I_{d}\right)$ is a Gaussian noise independent of $x_t$ used to encourage exploration. By direct computation, the state dynamic is given by 
	\#
	x_{t+1}=( A- B  K)  x_{t}+  \varepsilon_{t}, \quad  \varepsilon_{t}\sim N\left(0,  \Phi_{\sigma}\right),
	\# 
	where $\Phi_{\sigma}:=\Phi+\sigma^{2} \cdot B B^{\top}$. We denote by $\Sigma_{ K}$ the unique positive definite solution to the Lyapunov equation
	\#
	\Sigma_{ K}=\Phi_{\sigma}+( A- B  K) \Sigma_{ K}( A- B  K)^{\top}.
	\#
	Then \eqref{lqr_single} has a stationary state distribution $N(0;\Sigma_K)$.
	We further denote by $P_{K}$ the unique positive definite solution to the Lyapunov equation
	\#
	P_{K}=\left(Q+K^{\top} R K\right)+(A-B K)^{\top} P_{K}(A-B K).
	\#
	Then the corresponding time-average cost and its gradient under policy $\pi_{K}$ are given, respectively, by
	\begin{align}\label{eq:cost_K2_single}
	C( K)&=\tr\left[\left(Q+K^{\top} R K\right) \Sigma_{K}\right]+\sigma^{2} \cdot \tr(R)=\tr\left(P_{K} \Phi_{\sigma}\right)+\sigma^{2} \cdot \tr(R),\\
	\nabla_{K}  C(K)&=2\left[\left(R+B^{\top} P_{K} B\right) K-B^{\top} P_{K} A\right] \Sigma_{K}=2 E_{K} \Sigma_{K},\label{eq:cost_K2_single_gradient}
	\end{align}
	where we define $E_{K} =\left(R+B^{\top} P_{K} B\right) K-B^{\top} P_{K} A$.
\end{lemma}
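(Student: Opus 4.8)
The plan is to treat the four assertions in turn, since the first three are essentially bookkeeping with Gaussian integrals and Lyapunov equations, while the gradient formula is the substantive step. Throughout I would abbreviate the closed-loop matrix by $A-BK$ and freely use that $\rho(A-BK)<1$.

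First I would substitute the policy $u_t=-Kx_t+\sigma z_t$ into the dynamics to obtain $x_{t+1}=(A-BK)x_t+(\epsilon_t+\sigma B z_t)$; since $\epsilon_t\sim N(0,\Phi)$ and $z_t\sim N(0,I_d)$ are independent, the aggregate noise $\varepsilon_t$ is centered Gaussian with covariance $\Phi+\sigma^2 BB^\top=\Phi_\sigma$, which is the stated closed-loop system. Because $\rho(A-BK)<1$, the series $\sum_{t\ge 0}(A-BK)^t\Phi_\sigma[(A-BK)^\top]^t$ converges; I would check that it solves the Lyapunov equation, that it is the unique positive definite solution (positive definiteness coming from $\Phi_\sigma\succ 0$), and that pushing $N(0,\Sigma_K)$ through one step of the dynamics reproduces $N(0,\Sigma_K)$, so it is the stationary law.

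Next, stability of the chain gives convergence of the per-step expected cost to its stationary value, so the Cesàro time average equals the stationary expectation, $C(K)=\E_{x\sim N(0,\Sigma_K)}\E_{u\sim\pi_K(\cdot\mid x)}[x^\top Q x+u^\top R u]$. Integrating out the exploration noise yields $\E_u[u^\top R u\mid x]=x^\top K^\top R K x+\sigma^2\tr(R)$ (the cross term vanishes by independence of $z_t$ and $x_t$), hence $C(K)=\tr[(Q+K^\top R K)\Sigma_K]+\sigma^2\tr(R)$, the first form. For the $P_K$-form I would substitute $Q+K^\top R K=P_K-(A-BK)^\top P_K(A-BK)$ from the Bellman equation and then use the Lyapunov equation for $\Sigma_K$ to rewrite $\tr[(A-BK)^\top P_K(A-BK)\Sigma_K]=\tr[P_K\Sigma_K]-\tr[P_K\Phi_\sigma]$; the two $\tr[P_K\Sigma_K]$ terms cancel, leaving $C(K)=\tr(P_K\Phi_\sigma)+\sigma^2\tr(R)$.

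The gradient is the crux. Starting from $C(K)=\tr(P_K\Phi_\sigma)+\sigma^2\tr(R)$ with $\Phi_\sigma$ independent of $K$, I would differentiate the Bellman equation implicitly. With $d(A-BK)=-B\,dK$, the differential obeys the Lyapunov recursion $dP_K=S+(A-BK)^\top\,dP_K\,(A-BK)$, where $S=dK^\top R K+K^\top R\,dK-dK^\top B^\top P_K(A-BK)-(A-BK)^\top P_K B\,dK$, so $dP_K=\sum_{t\ge 0}[(A-BK)^\top]^t S (A-BK)^t$. The decisive simplification is the duality between the two Lyapunov equations: contracting with $\Phi_\sigma$ and resumming by cyclicity gives $\tr(dP_K\,\Phi_\sigma)=\tr(S\,\Sigma_K)$, since $\Sigma_K=\sum_{t\ge 0}(A-BK)^t\Phi_\sigma[(A-BK)^\top]^t$. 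Reading off the coefficient of $dK$ in $\tr(S\Sigma_K)$, and using symmetry of $R$ and $\Sigma_K$ to pair each $dK^\top$ term with its $dK$ partner, gives $\nabla_K C(K)=2[RK-B^\top P_K(A-BK)]\Sigma_K$, which rearranges to $2[(R+B^\top P_K B)K-B^\top P_K A]\Sigma_K=2E_K\Sigma_K$. I expect this implicit-differentiation-plus-duality step to be the main obstacle, since it requires carefully tracking the $dK^\top$ versus $dK$ contributions and relies on the shared spectral-radius condition that makes both matrix series converge.
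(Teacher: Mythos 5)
Your proposal is correct, and on the part the paper actually proves it follows essentially the same route. For the identity $\tr[(Q+K^{\top}RK)\Sigma_K]=\tr(P_K\Phi_\sigma)$ the paper introduces the pair of adjoint operators $\mathcal{T}_K$ and $\mathcal{T}_K^{\top}$ from \eqref{eq:operator} and uses $\tr[X_1\cdot\mathcal{T}_K(X_2)]=\tr[\mathcal{T}_K^{\top}(X_1)\cdot X_2]$ together with $\Sigma_K=\mathcal{T}_K(\Phi_\sigma)$ and $P_K=\mathcal{T}_K^{\top}(Q+K^{\top}RK)$; your one-step substitution of the Bellman and Lyapunov equations followed by cancellation of the $\tr(P_K\Sigma_K)$ terms is the same duality executed without the infinite series, and both are valid under $\rho(A-BK)<1$. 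The one genuine difference is the gradient formula \eqref{eq:cost_K2_single_gradient}: the paper's proof of this lemma stops after establishing \eqref{eq:cost_K2_single} and never derives $\nabla_K C(K)=2E_K\Sigma_K$ (it is implicitly inherited from the policy-gradient literature, e.g.\ the LQR landscape analysis of \citet{fazel2018global}). Your implicit differentiation of the Bellman equation, giving $dP_K=\mathcal{T}_K^{\top}(S)$ with $S=dK^{\top}RK+K^{\top}R\,dK-dK^{\top}B^{\top}P_K(A-BK)-(A-BK)^{\top}P_KB\,dK$, followed by the adjoint identity $\tr(dP_K\,\Phi_\sigma)=\tr(S\,\Sigma_K)$ and the pairing of the $dK$ and $dK^{\top}$ terms via symmetry of $R$, $P_K$, and $\Sigma_K$, is a correct and self-contained derivation of that formula; it makes the proof complete where the paper's version leaves a gap to be filled by citation.
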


\begin{proof}\label{pf:cost_grad_K}
	For all $t \geq 0$, we have 
	\begin{align} 
	\mathbb{E}\left[c\left(x_{t}, u_{t}\right) | x_{t}\right] &=x_{t}^{\top} Q x_{t}+\mathbb{E}_{z_{t} \sim N\left(0, I_{d}\right)}\left[\left(-K x_{t}+\sigma \cdot z_{t}\right)^{\top} R\left(-K x_{t}+\sigma \cdot z_{t}\right)\right] \notag\\ &=x_{t}^{\top}\left(Q+K^{\top} R K\right) x_{t}+\sigma^{2} \cdot \tr(R) .
	\end{align}
	Then the time-average cost is given by
	\begin{align} 
	C(K)
	&=\mathbb{E}_{x \sim \mathcal{D}_{K}}\left[x^{\top}\left(Q+K^{\top} R K\right) x\right]+\sigma^{2} \cdot \tr(R)\notag\\
	&=\tr\left[\left(Q+K^{\top} R K\right) \Sigma_{K}\right]+\sigma^{2} \cdot \tr(R),
	\end{align}
	where $\mathcal{D}_{K}$ is the stationary distribution of the Markov 
	chain$\left\{x_{t}\right\}_{t\geq0}$. 
	
	To see why the second equation in 
	\eqref{eq:cost_K2_single} holds, consider operators $\mathcal{T}_{K}$ and $\mathcal{T}_{K}^{\top}$ defined by
	\begin{equation}\label{eq:operator}
	\mathcal{T}_{K}(X)=\sum_{t \geq 0}(A-B K)^{t} X\left[(A-B K)^{t}\right]^{\top}, \quad \mathcal{T}_{K}^{\top}(X)=\sum_{t \geq 0}\left[(A-B K)^{t}\right]^{\top} X(A-B K)^{t},
	\end{equation}
	where $X$ is a positive definite matrix of proper dimension. By direct computation, we have $\tr\left[X_{1} \cdot \mathcal{T}_{K}\left(X_{2}\right)\right]$ = $\tr\left[\mathcal{T}_{K}^{\top}\left(X_{1}\right) \cdot X_{2}\right]$ for positive definite matrices $X_1$ and $X_2$. 
	Then the proof is concluded by observing that
	$\Sigma_{K}=\mathcal{T}_{K}\left(\Phi_{\sigma}\right)$ and 
	$P_{K}=\mathcal{T}_{K}^{\top}(Q+K^{\top} R K )$.
\end{proof}

\begin{lemma}\label{lemma:cost_diff}
	Following the notation in Lemma~\ref{lemma:cost_grad_K_single}, we consider the LQR problem specified in \eqref{lqr_single}. Let $K$ and $K^\prime$ be stable policies that satisfy $\rho(A-B K)<1$ and $\rho\left(A-B K^{\prime}\right)<1$. Let $\left\{x_{t}^{\prime}\right\}_{t \geq 0}$ be the state sequence induced by $x^\prime_{t+1}=(A-BK^\prime)x^\prime_t$ with $x_{0}^{\prime}=x\in\mathbb{R}^d$. Then we have
	\begin{equation}
	\label{eq:advantage_sum} 
	x^{\top} P_{K^{\prime}} x-x^{\top} P_{K} x = 
	\sum_{t \geq 0} A_{K, K^{\prime}}\left(x_{t}^{\prime}\right), 
	\end{equation}
	where we define
	\begin{align}
	A_{K, K^{\prime}}(x) \coloneqq 2 x^{\top}\left(K^{\prime}-K\right)^{\top} E_{K} x+x^{\top}\left(K^{\prime}-K\right)^{\top}\left(R+B^{\top} P_{K} B\right)\left(K^{\prime}-K\right) x\notag.
	\end{align}
	Moreover, it holds that 
	\begin{equation}\label{eq:lower_bound_advantage}
	A_{K, K^{\prime}}(x) \geq-\tr\left[x x^{\top} E_{K}^{\top}\left(R+B^{\top} P_{K} B\right)^{-1} E_{K}\right].
	\end{equation}
\end{lemma}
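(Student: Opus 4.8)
The plan is to establish the two claims separately: the telescoping identity \eqref{eq:advantage_sum} first, and then the pointwise lower bound \eqref{eq:lower_bound_advantage} by completing the square. Throughout I would write $\Delta := K^\prime - K$ and $M := R + B^\top P_K B$, and recall that $E_K = (R + B^\top P_K B)K - B^\top P_K A$ and that $P_K$, $P_{K^\prime}$ solve the respective Bellman equations from Lemma~\ref{lemma:cost_grad_K_single}.

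The core of the identity is a purely algebraic fact: writing $x^\prime = (A - BK^\prime)x$ for the next state under $K^\prime$, I would verify that
\[
x^\top(Q + (K^\prime)^\top R K^\prime)x + (x^\prime)^\top P_K x^\prime - x^\top P_K x = 2 x^\top \Delta^\top E_K x + x^\top \Delta^\top \bigl(R + B^\top P_K B\bigr)\Delta x,
\]
so that the right-hand side equals $A_{K,K^\prime}(x)$. This is proved by expanding $(A-BK^\prime)^\top P_K(A-BK^\prime) = (A-BK)^\top P_K(A-BK) - (A-BK)^\top P_K B\Delta - \Delta^\top B^\top P_K(A-BK) + \Delta^\top B^\top P_K B\Delta$, then substituting the Bellman equation $P_K = (Q + K^\top RK) + (A-BK)^\top P_K(A-BK)$ to eliminate $(A-BK)^\top P_K(A-BK)$, and collecting terms. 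The $Q$ and $K^\top RK$ contributions cancel, the quadratic-in-$\Delta$ terms assemble into $\Delta^\top M \Delta$, and the terms linear in $\Delta$ collapse to $\Delta^\top E_K + E_K^\top \Delta$ precisely because $RK - B^\top P_K(A-BK) = (R + B^\top P_K B)K - B^\top P_K A = E_K$.

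With this identity in hand I would run the telescoping argument on the trajectory $\{x^\prime_t\}_{t\ge0}$ generated by $x^\prime_{t+1} = (A-BK^\prime)x^\prime_t$ with $x^\prime_0 = x$. Evaluating the identity at $x^\prime_t$ gives $A_{K,K^\prime}(x^\prime_t) = (x^\prime_t)^\top(Q + (K^\prime)^\top RK^\prime)x^\prime_t + (x^\prime_{t+1})^\top P_K x^\prime_{t+1} - (x^\prime_t)^\top P_K x^\prime_t$. Summing over $t\ge0$, the $P_K$ boundary terms telescope to $-x^\top P_K x$, where the vanishing of $(x^\prime_T)^\top P_K x^\prime_T$ uses $\rho(A-BK^\prime)<1$ so that $x^\prime_T \to 0$. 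A second, independent telescoping against $P_{K^\prime}$ via its Bellman equation identifies $\sum_{t\ge0}(x^\prime_t)^\top(Q + (K^\prime)^\top RK^\prime)x^\prime_t = x^\top P_{K^\prime} x$. Combining the two yields $\sum_{t\ge0} A_{K,K^\prime}(x^\prime_t) = x^\top P_{K^\prime}x - x^\top P_K x$, which is \eqref{eq:advantage_sum}.

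For the lower bound I would treat $A_{K,K^\prime}(x)$ as a quadratic in the vector $y := \Delta x$. Since $R \succ 0$ and $B^\top P_K B \succeq 0$, we have $M = R + B^\top P_K B \succ 0$, and the identity reads $A_{K,K^\prime}(x) = y^\top M y + 2 y^\top (E_K x)$. Completing the square gives $A_{K,K^\prime}(x) = (y + M^{-1}E_Kx)^\top M (y + M^{-1}E_Kx) - x^\top E_K^\top M^{-1}E_K x \ge -x^\top E_K^\top M^{-1}E_K x$, and rewriting the scalar as a trace via cyclicity produces \eqref{eq:lower_bound_advantage}. The main obstacle is the bookkeeping in the algebraic verification of the first step, namely recognizing that the cross terms linear in $\Delta$ condense exactly into $E_K$ through the Bellman equation; once that identity is secured, the two telescopings and the completion of the square are routine, with the only analytic input being the decay $x^\prime_T \to 0$ guaranteed by stability.
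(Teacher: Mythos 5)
Your proposal is correct and follows essentially the same route as the paper: the per-step algebraic identity obtained from the Bellman equation for $P_K$, the telescoping sum along the trajectory driven by $K^\prime$ (with the series/Bellman characterization of $P_{K^\prime}$ absorbing the stage costs), and the completion of the square in $M=R+B^{\top}P_K B$ for the lower bound. No gaps.
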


\begin{proof}\label{pf:cost_diff}
	The inequality \eqref{eq:lower_bound_advantage} follows by direct computation:
	\begin{align}
	A_{K, K^{\prime}}(x) &=\tr\Bigg\{x x^{\top}\left[K^{\prime}-K+\left(R+B^{\top} P_{K} B\right)^{-1} E_{K}\right]^{\top}\left(R+B^{\top} P_{K} B\right) \notag\\
	&\quad\left[K^{\prime}-K+\left(R+B^{\top} P_{K} B\right)^{-1} E_{K}\right]\Bigg\} -\tr\left[x x^{\top} E_{K}^{\top}\left(R+B^{\top} P_{K} B\right)^{-1} E_{K}\right] \notag\\
	&\geq-\tr\left[x x^{\top} E_{K}^{\top}\left(R+B^{\top} P_{K} B\right)^{-1} E_{K}\right].
	\end{align}
	To prove \eqref{eq:advantage_sum}, note that $P_{K^\prime}$ satisfies the equation 
	\[
	P_{K^\prime}=\left(Q+K^{\prime\top} R K^{\prime\top}\right)+(A-B K^{\prime\top})^{\top} P_{K^{\prime}}(A-B K^{\prime\top}).
	\] 
	By direct computation, we have
	\begin{align}
	x^{\top} P_{K^{\prime}} x-x^{\top} P_{K} x &= \sum_{t \geq 0} x^{\top}\left[\left(A-B K^{\prime}\right)^{t}\right]^{\top}\left(Q+K^{\prime \top} R K^{\prime}\right)\left[\left(A-B K^{\prime}\right)^{t}\right] x - x^{\top} P_{K} x\notag \\
	&=\sum_{t \geq 0} x_{t}^{\prime \top}\left(Q+K^{\prime\top} R K^{\prime}\right) x_{t}^{\prime} - x^{\top} P_{K} x \notag \\
	&=\sum_{t \geq 0} \left[x_{t}^{\prime \top}\left(Q+K^{\prime\top} R K^{\prime}\right) x_{t}^{\prime} +x_{t}^{\prime \top} P_{K} x_{t}^{\prime}-x_{t}^{\prime \top} P_{K} x_{t}^{\prime}\right] - x_{0}^{\prime\top} P_{K} x_{0}^{\prime} \notag \\
	&=\sum_{t \geq 0} \left[x_{t}^{\prime \top}\left(Q+K^{\prime\top} R K^{\prime}\right) x_{t}^{\prime} +x_{t+1}^{\prime \top} P_{K} x_{t+1}^{\prime}-x_{t}^{\prime \top} P_{K} x_{t}^{\prime}\right] .
	\end{align}
	To see how the last line is related to the value $\sum_{t \geq 0} A_{K, K^{\prime}}\left(x_{t}^{\prime}\right)$, we have
	\begin{align}
	x^{\top} Q x &+ \left(-K^{\prime} x\right)^{\top} R\left(-K^{\prime} x\right)+\left[\left(A-B K^{\prime}\right) x\right]^{\top} P_{K}\left[\left(A-B K^{\prime}\right) x\right]-x^{\top} P_{K} x \notag \\
	&=x^{\top}\left[Q+\left(K^{\prime}-K+K\right)^{\top} R\left(K^{\prime}-K+K\right)\right] x \notag \\&\quad+ x^{\top}\left[A-B K-B\left(K^{\prime}-K\right)\right]^{\top} P_{K}\left[A-B K-B\left(K^{\prime}-K\right)\right] x-x^{\top} P_{K} x \notag \\
	&= 2 x^{\top}\left(K^{\prime}-K\right)^{\top}\left[\left(R+B^{\top} P_{K} B\right) K-B^{\top} P_{K} A\right] x \notag\\
	&\quad+x^{\top}\left(K^{\prime}-K\right)^{\top}\left(R+B^{\top} P_{K} B\right)\left(K^{\prime}-K\right) x.  
	\end{align}
	Recalling that $E_K = \left(R+B^{\top} P_{K} B\right) K-B^{\top} P_{K} A$, we conclude \eqref{eq:advantage_sum}. 
\end{proof}

\begin{lemma}[Perturbation of $C(K)$]\label{lemma:perturbation}
	Suppose $K^\prime$ is a perturbation of $K$ and satisfies
	\#\label{eq:policyboundcondition}
	\left\|K^{\prime}-K\right\| \leq \sigma_{\min }(\Phi) / 4 \cdot\left\|\Sigma_{K}\right\|^{-1}\|B\|^{-1} \cdot(\|A-B K\|+1)^{-1}.
	\#
	Then it holds that
	\#\label{eq:costperturbationbound}
	\left\|C({K^{\prime}})-C({K})\right\| &\leq 6 \left(\|\Phi\| +\sigma^{2} \cdot\|B\|^{2}\right)\cdot \sigma_{\min }^{-1}(\Phi) \cdot\left\|\Sigma_{K}\right\| \cdot\|K\| \cdot\|R\| \notag\\
	&\qquad \qquad \cdot(\|K\| \cdot\|B\| \cdot\|A-B K\|+\|K\| \cdot\|B\|+1) \cdot\left\|K^{\prime}-K\right\|.
	\#
\end{lemma}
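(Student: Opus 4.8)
The plan is to pass through the single-agent cost-difference identity of Lemma~\ref{lemma:cost_diff} and then control each resulting factor using the smallness condition \eqref{eq:policyboundcondition}.

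First I would note that the additive term $\sigma^2\tr(R)$ in the cost formula of Lemma~\ref{lemma:cost_grad_K_single} does not depend on the policy and therefore cancels in $C(K')-C(K)$. Starting from the dual representation $C(K)=\tr(P_K\Phi_\sigma)+\sigma^2\tr(R)$, I get $C(K')-C(K)=\tr\bigl((P_{K'}-P_K)\Phi_\sigma\bigr)$. Taking the expectation of the advantage identity in Lemma~\ref{lemma:cost_diff} over $x\sim N(0,\Phi_\sigma)$ and summing the geometric series $\sum_{t\ge 0}(A-BK')^t\Phi_\sigma\bigl[(A-BK')^t\bigr]^\top=\Sigma_{K'}$, this reduces to
\[
C(K')-C(K)=\tr\bigl(M\,\Sigma_{K'}\bigr),\qquad M:=(K'-K)^\top E_K+E_K^\top(K'-K)+(K'-K)^\top\bigl(R+B^\top P_K B\bigr)(K'-K).
\]
The matrix $M$ is linear plus quadratic in the perturbation $K'-K$, which already exhibits the claimed scaling in $\|K'-K\|$, so it remains only to bound the two factors $\|M\|$ and $\|\Sigma_{K'}\|$.

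For $M$ I would use the identity $E_K=RK-B^\top P_K(A-BK)$ together with the operator-norm bounds on $P_K$ and $R+B^\top P_K B$ supplied by Lemma~\ref{bound_mats}; because the smallness condition makes $\|K'-K\|$ small enough that the quadratic term of $M$ is dominated by the linear one, inserting these bounds produces a factor of the form $\|K\|\,\|R\|\bigl(\|K\|\|B\|\|A-BK\|+\|K\|\|B\|+1\bigr)$ multiplying $\|K'-K\|$. For the covariance factor I would write $\Sigma_{K'}=\mathcal{T}_{K'}(\Phi_\sigma)$ with the operator $\mathcal{T}_{K'}$ from \eqref{eq:operator}; since $\Phi_\sigma\succeq\sigma_{\min}(\Phi)\,I$ one has $\mathcal{T}_K(I)\preceq\sigma_{\min}^{-1}(\Phi)\Sigma_K$, hence $\|\mathcal{T}_K\|\le\sigma_{\min}^{-1}(\Phi)\|\Sigma_K\|$, and a Neumann-type perturbation argument shows $\|\mathcal{T}_{K'}\|\le 2\|\mathcal{T}_K\|$ under \eqref{eq:policyboundcondition}. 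Consequently $\|\Sigma_{K'}\|\le\|\mathcal{T}_{K'}\|\,\|\Phi_\sigma\|\le 2\,\sigma_{\min}^{-1}(\Phi)\,\|\Sigma_K\|\,(\|\Phi\|+\sigma^2\|B\|^2)$, which is exactly the leading factor appearing in \eqref{eq:costperturbationbound}.

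Combining the two bounds through $|\tr(M\Sigma_{K'})|\le\|\Sigma_{K'}\|\cdot\|M\|_*$ and collecting all terms yields \eqref{eq:costperturbationbound}, with the universal constant $6$ absorbing the accumulated numerical factors. The main obstacle is the Lyapunov perturbation step: I must show that \eqref{eq:policyboundcondition} forces $A-BK'$ to remain a stable matrix and keeps $\|\Sigma_{K'}\|$ comparable to $\|\Sigma_K\|$. This is the delicate point because the map $K\mapsto\Sigma_K$ is only locally Lipschitz; the precise quantity $\sigma_{\min}(\Phi)\,\|\Sigma_K\|^{-1}\|B\|^{-1}(\|A-BK\|+1)^{-1}/4$ in \eqref{eq:policyboundcondition} is exactly the radius within which the perturbation of the closed-loop operator acts as a contraction, so that the series defining $\mathcal{T}_{K'}$ converges and the comparison $\|\mathcal{T}_{K'}\|\le 2\|\mathcal{T}_K\|$ is justified. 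Everything outside this step is routine bookkeeping with the bounds of Lemma~\ref{bound_mats}.
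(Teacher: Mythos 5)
Your route is genuinely different from the paper's. The paper writes $C(K')-C(K)=\tr[(P_{K'}-P_K)\Phi_\sigma]$ and then imports the perturbation bound on $P_{K'}-P_K$ (Lemma 24 of \cite{fazel2018global}) together with $\|\mathcal{T}_K\|\le\sigma_{\min}^{-1}(\Phi)\|\Sigma_K\|$ (their Lemma 17); you instead pair the advantage matrix $M$ with $\Sigma_{K'}$. Your identity $C(K')-C(K)=\tr(M\,\Sigma_{K'})$ is correct (it is the same identity the paper uses in \eqref{eq:cost_diff_bound}), and your use of the radius in \eqref{eq:policyboundcondition} to keep $A-BK'$ stable and get $\|\Sigma_{K'}\|\le 2\sigma_{\min}^{-1}(\Phi)\|\Sigma_K\|\,\|\Phi_\sigma\|$ is exactly what that condition is designed for.

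The gap is the step where you assert that bounding $\|M\|$ ``produces a factor of the form $\|K\|\,\|R\|(\|K\|\|B\|\|A-BK\|+\|K\|\|B\|+1)$.'' From $M=(K'-K)^\top E_K+E_K^\top(K'-K)+(K'-K)^\top(R+B^\top P_KB)(K'-K)$ and $E_K=RK-B^\top P_K(A-BK)$, what you actually get is $\|M\|\le 2\|K'-K\|\bigl(\|R\|\|K\|+\|B\|\|P_K\|\|A-BK\|\bigr)+\|K'-K\|^2\|R+B^\top P_KB\|$. The term $\|B\|\|P_K\|\|A-BK\|$ cannot be converted into $\|R\|\|K\|\cdot\|K\|\|B\|\|A-BK\|$: that would require $\|P_K\|\lesssim\|R\|\|K\|^2$, which is false in general, since $P_K=\mathcal{T}_K^\top(Q+K^\top RK)\succeq Q$ accumulates the state cost along the whole trajectory, and Lemma~\ref{bound_mats} only controls $\|P_K\|$ by $C(K)/\sigma_{\min}(\Phi)$, a quantity of a different nature. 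So your argument yields a valid perturbation bound, but with $\|E_K\|$ (hence $\|P_K\|$) where \eqref{eq:costperturbationbound} has $\|K\|\|R\|(\|K\|\|B\|\|A-BK\|+\|K\|\|B\|+1)$; it does not establish the inequality as stated. A secondary issue is that $|\tr(M\Sigma_{K'})|\le\|\Sigma_{K'}\|\,\|M\|_*$ leaves you with the nuclear norm of $M$, costing an extra rank factor relative to $\|M\|$. This mismatch is precisely why the paper routes the argument through the perturbation of $P_K$, where the $\|K\|\|R\|$ factors arise from $\|K'^\top RK'-K^\top RK\|$ and the $\mathcal{T}$-operator perturbation rather than from $E_K$.
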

\begin{proof}
	This lemma is obtained by combining Lemmas 17 and 24 in \cite{fazel2018global}. We sketch the proof below. First, we have the inequality 
	\#\label{eq:bound_C}
	\left|C\left(K\right)-C\left(K^{\prime}\right)\right|&=\left|\tr\left[\left(P_{K}-P_{K^{\prime}}\right) \cdot \Phi_{\sigma}\right]\right| \leq\left\|\Phi_{\sigma}\right\| \cdot\left\|P_{K}-P_{K^{\prime}}\right\|\notag\\
	&\leq\left[\|\Phi\|+\sigma^{2} \cdot\|B\|^{2}\right] \cdot\left\|P_{K}-P_{K^{\prime}}\right\|.
	\#
	Under condition \eqref{eq:policyboundcondition}, by Lemma 24 in \cite{fazel2018global}, we can bound $\left\|P_{K}-P_{K^{\prime}}\right\|$ as
	\#\label{eq:bound_P}
	\left\|P_{K^{\prime}}-P_{K}\right\| \leq 6\left\|\mathcal{T}_{K}\right\| \cdot\|K\| \cdot\|R\| \cdot\|B\| \cdot\|A-B K\|+\|K\| \cdot\|B\|+1 ) \cdot\left\|K^{\prime}-K\right\|,
	\#
	where the operator $\mathcal{T}_{K}$ is defined in \eqref{eq:operator}.
	By Lemma 17 in \cite{fazel2018global}, we can further bound $\left\|\mathcal{T}_{K}\right\|$ as 
	\#\label{eq:bound_T}
	\left\|\mathcal{T}_{K}\right\| \leq \sigma_{\min }^{-1}(\Phi) \cdot\left\|\Sigma_{K}\right\|.
	\#
	Combining \eqref{eq:bound_C}, \eqref{eq:bound_P} and \eqref{eq:bound_T} completes the proof.
\end{proof}

\begin{lemma}
	\label{bound_mats}
	Let $\pi_{K}$ be a stable policy such that $\rho(A-BK)<1$. Then it holds that 
	\begin{equation}
	\left\|\Sigma_{K}\right\| \leq C(K) / \sigma_{\min }(Q), \quad\left\|P_{K}\right\| \leq C(K) / \sigma_{\min }(\Phi).
	\end{equation}
\end{lemma}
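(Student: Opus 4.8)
The plan is to read both inequalities directly off the two equivalent expressions for the ergodic cost $C(K)$ supplied by Lemma~\ref{lemma:cost_grad_K_single}, combined with two elementary facts about positive semidefinite matrices: for $M \succeq 0$ one has $\|M\| \le \tr(M)$, and for $M, N \succeq 0$ one has $\tr(MN) \ge \sigma_{\min}(M)\cdot\tr(N)$ (which follows from $M - \sigma_{\min}(M) I \succeq 0$ and $\tr[(M-\sigma_{\min}(M)I)N]\ge 0$). No iteration of the Lyapunov or Bellman equations is needed; the closed forms for $C(K)$ already encode everything.

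For the bound on $\|\Sigma_K\|$, I would start from the first representation $C(K)=\tr[(Q+K^\top R K)\Sigma_K]+\sigma^2\cdot\tr(R)$. Since $R \succ 0$ gives $K^\top R K \succeq 0$ and $\sigma^2\cdot\tr(R)\ge 0$, both residual contributions are nonnegative, so $C(K) \ge \tr(Q\Sigma_K) \ge \sigma_{\min}(Q)\cdot\tr(\Sigma_K)$, using $Q \succeq \sigma_{\min}(Q) I$ together with $\Sigma_K \succeq 0$. Because $\Sigma_K$ is positive definite we have $\tr(\Sigma_K) \ge \|\Sigma_K\|$, and the chain closes to give $\|\Sigma_K\| \le C(K)/\sigma_{\min}(Q)$.

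For the bound on $\|P_K\|$, I would instead invoke the second representation $C(K)=\tr(P_K\Phi_\sigma)+\sigma^2\cdot\tr(R)$, where $\Phi_\sigma=\Phi+\sigma^2\cdot BB^\top$. Discarding the nonnegative term $\sigma^2\cdot\tr(R)$ and noting that $\Phi_\sigma \succeq \Phi \succeq \sigma_{\min}(\Phi) I$ (as $\sigma^2\cdot BB^\top \succeq 0$), the same two trace inequalities yield $C(K) \ge \tr(P_K\Phi_\sigma) \ge \sigma_{\min}(\Phi)\cdot\tr(P_K) \ge \sigma_{\min}(\Phi)\cdot\|P_K\|$, and hence $\|P_K\| \le C(K)/\sigma_{\min}(\Phi)$.

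There is no genuine obstacle here: once the two cost identities are in hand, each estimate is a one-line consequence. The only thing to be careful about is pairing each bound with the correct form of $C(K)$ — the $\Sigma_K$-form for the $\Sigma_K$ bound and the $P_K$-form for the $P_K$ bound — and checking the positive semidefiniteness that justifies dropping the $K^\top R K$, $\sigma^2 BB^\top$, and $\sigma^2\cdot\tr(R)$ terms.
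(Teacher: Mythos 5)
Your proposal is correct and follows essentially the same route as the paper: both read the bounds off the two closed-form expressions for $C(K)$ from Lemma~\ref{lemma:cost_grad_K_single}, drop the nonnegative terms, and apply $\tr(MN)\ge\sigma_{\min}(M)\cdot\tr(N)$ together with $\|M\|\le\tr(M)$ for positive semidefinite matrices. Your explicit remark that $\Phi_\sigma\succeq\Phi$ (so that $\sigma_{\min}(\Phi_\sigma)\ge\sigma_{\min}(\Phi)$) is a small but welcome clarification that the paper leaves implicit.
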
	
\begin{proof}
	This follows from Lemma~\ref{lemma:cost_grad_K} and direct computation
	\begin{align}
	C(K) \geq& \tr\left[\left(Q+K^{\top} R K\right) \Sigma_{K}\right] \geq \sigma_{\min }(Q) \cdot \tr\left(\Sigma_{K}\right) \geq \sigma_{\min }(Q) \cdot\left\|\Sigma_{K}\right\|,\\
	C(K) \geq& \tr\left(P_{K} \Phi_{\sigma}\right) \geq \sigma_{\min }\left(\Phi_{\sigma}\right) \cdot \tr\left(P_{K}\right) \geq  \sigma_{\min }\left(\Phi_{\sigma}\right) \left\|P_{K}\right\|. 
	\end{align}
\end{proof}

\begin{lemma}[Characterization of $v$]\label{lemma:chv}
	Let $v=\left[x^{\top}, u^{\top}\right]^{\top}$ be defined as in Lemma \ref{value_func}. 
	Then $\{v_t\}_{t\geq0}$ is a linear system with transition equation 
	$v^{\prime}=\breve{K} v+\breve{w}$, where 
	\[
	\breve{K}=\left( \begin{array}{cc}{A} & {B} \\ {-K A} & {-K B}\end{array}\right), \qquad
	\breve{\varepsilon}=\left( \begin{array}{c}{w} \\ {-K w+\sigma \cdot z}\end{array}\right).
	\]
	Moreover, $v$ has a stationary distribution $N\left(0, \Breve{\Sigma}_{K}\right)$, where 
	\begin{align}
	\Breve{\Sigma}_{K}=\left( \begin{array}{cc}{\Sigma_{K}} & {-\Sigma_{K} K^{\top}} \\ {-K \Sigma_{K}} & {K \Sigma_{K} K^{\top}+\sigma^{2} \cdot I_{k}}\end{array}\right)
	\end{align}
	with
	\begin{align}
	\|\Breve{\Sigma}_{K}\|_{\mathrm{F}} & \leq k\sigma^2 +(d+\|K\|_{\mathrm{F}}^{2})\cdot \|{\Sigma}_{K}\|,\\
	\|\Breve{\Sigma}_{K}\| & \leq \sigma^2 +(1+\|K\|_{\mathrm{F}}^{2})\cdot \|{\Sigma}_{K}\|.
	\end{align}
	Furthermore,  $\{v_t\}_{t\geq0}$ is a geometrically $\beta$-mixing stochastic process with the parameter $\rho \in (\rho(A-BK),1)$.
\end{lemma}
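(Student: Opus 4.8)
The plan is to verify each assertion by direct computation, isolating the mixing claim as the only step requiring real care. First I would derive the recursion by substitution: since $u_t=-Kx_t+\sigma\cdot z_t$ with $z_t\sim N(0,I_k)$ and $x_{t+1}=Ax_t+Bu_t+\epsilon_t$, writing $u_{t+1}=-Kx_{t+1}+\sigma\cdot z_{t+1}=-K(Ax_t+Bu_t+\epsilon_t)+\sigma\cdot z_{t+1}$ and stacking with the equation for $x_{t+1}$ gives $v_{t+1}=\breve K v_t+\breve\varepsilon_t$ with exactly the stated $\breve K$ and $\breve\varepsilon$. For the stationary law, Lemma~\ref{lemma:cost_grad_K_single} gives $x\sim N(0,\Sigma_K)$ at stationarity; as $v=(x,\,-Kx+\sigma z)$ is a linear image of the jointly Gaussian pair $(x,z)$ with $z$ independent of $x$, the vector $v$ is centered Gaussian, and the blockwise covariances $\E[xx^\top]=\Sigma_K$, $\E[xu^\top]=-\Sigma_K K^\top$, and $\E[uu^\top]=K\Sigma_K K^\top+\sigma^2 I_k$ reproduce $\breve\Sigma_K$.

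A single observation organizes the rest: with $M=\operatorname{rows}(I_d,-K)$ one has $\breve K=M\operatorname{cols}(A,B)$, so since $PQ$ and $QP$ share the same nonzero spectrum, the nonzero eigenvalues of $\breve K$ coincide with those of $\operatorname{cols}(A,B)\,M=A-BK$; hence $\rho(\breve K)=\rho(A-BK)<1$, which simultaneously justifies existence of the stationary distribution and fixes the mixing rate. The same $M$ yields $\breve\Sigma_K=M\Sigma_K M^\top+\sigma^2\operatorname{diag}(0,I_k)$. For the operator norm, $\|\breve\Sigma_K\|\le\|M\|^2\|\Sigma_K\|+\sigma^2$ together with $\|M\|^2=\|I_d+K^\top K\|=1+\|K\|^2\le 1+\|K\|_{\mathrm F}^2$ gives the claimed bound. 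For the Frobenius bound, since $M\Sigma_K M^\top\succeq 0$ I would use $\|Y\|_{\mathrm F}\le\tr(Y)$ for positive semidefinite $Y$, so that $\|M\Sigma_K M^\top\|_{\mathrm F}\le\tr(M\Sigma_K M^\top)=\tr(\Sigma_K M^\top M)\le\|\Sigma_K\|\cdot\tr(M^\top M)=(d+\|K\|_{\mathrm F}^2)\|\Sigma_K\|$; adding $\sigma^2\|I_k\|_{\mathrm F}=\sigma^2\sqrt k\le k\sigma^2$ completes it.

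The remaining and most delicate claim is geometric $\beta$-mixing. Because $\{v_t\}_{t\ge0}$ is a stable linear Gaussian recursion, the conditional law of $v_t$ given $v_0$ is $N(\breve K^t v_0,\Sigma_t)$ with $\Sigma_t\to\breve\Sigma_K$, and both $\breve K^t v_0$ and $\breve\Sigma_K-\Sigma_t$ decay at a rate governed by $\rho(\breve K)=\rho(A-BK)$. The $\beta$-mixing coefficient is the stationary expectation of the total-variation distance between this conditional law and $N(0,\breve\Sigma_K)$; invoking Gelfand's formula to obtain $\|\breve K^t\|\le C_\rho\,\rho^t$ for any fixed $\rho\in(\rho(A-BK),1)$, together with a Gaussian total-variation (or KL/Pinsker) estimate, shows this expectation is $O(\rho^t)$. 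I would either carry out the total-variation bound explicitly or cite the standard fact that a stable linear Gaussian system is geometrically $\beta$-mixing, as used in the single-agent analysis of \cite{yang2019global}. The main obstacle is precisely this step: controlling the total-variation distance between two Gaussians whose means and covariances converge only geometrically, and checking that the resulting mixing constant depends solely on the allowed parameters; everything else is routine bookkeeping.
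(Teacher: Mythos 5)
Your proposal is correct and follows essentially the same route as the paper's proof: direct substitution to obtain $v' = \breve{K}v + \breve{\varepsilon}$, the factorization $\breve{K} = \operatorname{rows}(I_d,-K)\operatorname{cols}(A,B)$ together with the equality of the nonzero spectra of $PQ$ and $QP$ to get $\rho(\breve{K}) = \rho(A-BK) < 1$, the decomposition $\breve{\Sigma}_K = \operatorname{diag}(0,\sigma^2 I_k) + M\Sigma_K M^\top$ for the norm bounds, and an appeal to the standard geometric $\beta$-mixing result for stable linear Gaussian systems (Proposition 3.1 of \cite{tu2017least}, restated as Lemma~\ref{lemma:beta_mixing}) for the final claim. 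The only cosmetic difference is that you identify the stationary law by computing the covariance of $v=(x,-Kx+\sigma z)$ directly under $x\sim N(0,\Sigma_K)$, whereas the paper verifies that $\breve{\Sigma}_K$ solves the Lyapunov equation $\breve{\Sigma}_K = \breve{K}\breve{\Sigma}_K\breve{K}^\top + \breve{\Phi}_K$; both are valid.
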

\begin{proof}\label{pf:chv}
	Let $v^\prime=\left[x^{\prime\top}, u^{\prime\top}\right]^{\top}$ be the next state and action.	The transition is given by 
	\begin{align}
	x^{\prime} &= A x+B u+w,\\
	u^{\prime} &= -K x^{\prime}+\sigma \cdot z = -K A x-K B u-K w+\sigma \cdot z.
	\end{align}
	Then $v^\prime= \Breve K v + \Breve w$ and $v$ has a Gaussian distribution with the covariance matrix 
	\begin{align}\Breve \Phi_K = \left( \begin{array}{cccc}{\Phi} & {-\Phi K^{\top}} \\ {-K \Phi} & {K \Phi K^{\top}+\sigma^{2} \cdot I_{k}}\end{array} \right). 
	\end{align}
	Moreover, note that
	\[
	\Breve K=\left( \begin{array}{cc}{A} & {B} \\ {-K A} & {-K B}\end{array}\right)=\left( \begin{array}{c}{I_{d}} \\ {-K}\end{array}\right) \left( \begin{array}{cc}{A} & {B}\end{array}\right)
	\]
	with the spectral norm bounded as $\rho(\breve K)=\rho(A- BK)<1$.

	We can find the stationary distribution of $v$ by solving the Lyapunov equation. 
	By direct computation, we can check that $\Breve{\Sigma}_{K}$ is the unique solution to 	the Lyapunov equation $\Breve{\Sigma}_{K}=\Breve K \Breve{\Sigma}_{K} \Breve K^{\top}+\Breve{\Phi}_{K}$. Therefore, $v$ has a stationary distribution $N(0,\widetilde{\Sigma}_{K})$.
	
	To bound $\|\Breve{\Sigma}_{K}\|$, note that we have
	\[
	\Breve{\Sigma}_{K}=\left( \begin{array}{cc}{0} & {0} \\ {0} & {\sigma^{2} I_{k}}\end{array}\right)+\left( {I_{d}} \quad{-K}\right)^{\top} \Sigma_{K} \left( {I_{d}}\quad  {-K}\right).
	\]
	By direct computation, we have
	\begin{align}\label{eq:Breve_Sigma_K_bound_F}
	\|\Breve{\Sigma}_{K}\|_\mathrm{F} &\leq k\sigma^2 +(d+\|K\|_{\mathrm{F}}^{2})\cdot \|{\Sigma}_{K}\| , \\
	\label{eq:Breve_Sigma_K_bound}
	\|\Breve{\Sigma}_{K}\| &\leq \sigma^2 +(1+\|K\|_{\mathrm{F}}^{2})\cdot \|{\Sigma}_{K}\|.
	\end{align}
	Furthermore, since $\rho(\breve K)<1$, Lemma \ref{lemma:beta_mixing} from \citet{tu2017least} implies $\{v_t\}_{t\geq0}$ is a geometrically $\beta$-mixing stochastic process with parameter $\rho \in (\rho(\breve K),1)$.
\end{proof}

\section{Auxiliary Lemmas}

\begin{lemma}[Hansen-Wright Inequality \citep{rudelson2013hanson}]\label{Hansen-Wright}
	Let $X=\left(X_{1}, \ldots, X_{n}\right)\sim N(0,I_n) \in \mathbb{R}^{n} $ and
	$A \in \mathbb{R}^{n \times n}$  a fixed matrix. Then it holds that
	\begin{equation}
	\mathrm{P}\left\{\left|X^{\top} A X-\mathbb{E} X^{\top} A X\right|>s\right\} \leq 2 \exp \left[-C \min \left(s^2\|A\|_{\mathrm{F}}^{-2}\|, s\|A\|^{-1}\right)\right],
	\end{equation}
	where $C$ is an absolute constant.
\end{lemma}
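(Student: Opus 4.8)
The plan is to exploit that the hypothesis is exactly Gaussian, not merely sub-Gaussian, which lets me bypass the decoupling and symmetrization machinery behind the general Hanson--Wright inequality and reduce the whole statement to a one-dimensional Chernoff estimate. First I would reduce to the case of symmetric $A$. Since $X^{\top} A X = X^{\top} A_s X$ with $A_s := (A + A^{\top})/2$, and since $\|A_s\|_{\mathrm{F}} \le \|A\|_{\mathrm{F}}$ and $\|A_s\| \le \|A\|$, any tail bound proved in terms of the norms of $A_s$ is automatically at least as strong as the claimed bound in terms of the norms of $A$; hence I may assume $A = A^{\top}$.

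Next I would diagonalize using rotational invariance. Write $A = U \Lambda U^{\top}$ with $U$ orthogonal and $\Lambda = \operatorname{diag}(\lambda_1, \dots, \lambda_n)$. Because $X \sim N(0, I_n)$ is rotationally invariant, $Y := U^{\top} X \sim N(0, I_n)$, so that
\[
X^{\top} A X - \mathbb{E}\bigl[X^{\top} A X\bigr] = \sum_{i=1}^{n} \lambda_i \bigl(Y_i^2 - 1\bigr) =: Z,
\]
an independent sum of centered $\chi^2_1$ variables with $\sum_i \lambda_i^2 = \|A\|_{\mathrm{F}}^2$ and $\max_i |\lambda_i| = \|A\|$. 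The claim is then a Bernstein-type two-sided tail bound for $Z$ with these two scale parameters.

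For the core estimate I would use the exact moment generating function $\mathbb{E}[e^{t(Y_i^2 - 1)}] = e^{-t}(1 - 2t)^{-1/2}$ for $t < 1/2$, together with the elementary bound $-u - \tfrac{1}{2}\log(1 - 2u) \le u^2/(1 - 2|u|)$ valid for $|u| < 1/2$ (checked by comparing power series). Setting $u = t\lambda_i$ and summing over $i$ by independence gives the sub-gamma bound
\[
\log \mathbb{E}\bigl[e^{tZ}\bigr] \le \frac{t^2 \|A\|_{\mathrm{F}}^2}{1 - 2|t|\,\|A\|}, \qquad |t| < \frac{1}{2\|A\|}.
\]
A Chernoff bound $\mathrm{P}(Z > s) \le \exp(-ts + \log \mathbb{E}[e^{tZ}])$ optimized over $t$ then yields the two-regime exponent: the unconstrained optimum $t \approx s/\|A\|_{\mathrm{F}}^2$ produces the Gaussian term $\exp(-c\,s^2/\|A\|_{\mathrm{F}}^2)$, while for large $s$ the constraint $t < 1/(2\|A\|)$ binds and the boundary choice produces the exponential term $\exp(-c\,s/\|A\|)$; combining gives $\exp[-C \min(s^2/\|A\|_{\mathrm{F}}^2, \, s/\|A\|)]$. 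Applying the same argument to $-Z$ (equivalently, to $-A$, which has identical norms) bounds the lower tail and supplies the factor of $2$.

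I expect the only real obstacle to be the bookkeeping in this last step: controlling $\log \mathbb{E}[e^{tZ}]$ uniformly up to its pole at $|t| = 1/(2\|A\|)$ and then performing the constrained optimization so that the two regimes glue into a single clean $\min$ with one absolute constant $C$. The reduction to symmetric $A$ and the diagonalization are routine, and the power-series comparison is elementary; it is only the case split in the Chernoff optimization that demands care.
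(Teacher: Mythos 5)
Your proposal is correct, but it is not the route the paper takes: the paper offers no proof at all for this lemma and simply defers to \citet{rudelson2013hanson}, whose argument is designed for general independent sub-Gaussian coordinates and therefore has to go through decoupling of the off-diagonal chaos, comparison to a Gaussian chaos, and a separate treatment of the diagonal. You instead exploit the exact Gaussian hypothesis: symmetrizing $A$ (legitimate, since $X^{\top}AX=X^{\top}A_sX$ and both norms can only shrink, so the exponent only improves), rotating to reduce to $Z=\sum_i\lambda_i(Y_i^2-1)$ with $\sum_i\lambda_i^2=\|A\|_{\mathrm F}^2$ and $\max_i|\lambda_i|=\|A\|$, and then running a sub-gamma Chernoff bound off the explicit $\chi^2_1$ moment generating function. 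Each step checks out: the power-series bound $-u-\tfrac12\log(1-2u)\le u^2/(1-2|u|)$ follows from $\sum_{k\ge2}(2u)^k/(2k)\le\tfrac14\sum_{k\ge2}(2|u|)^k$, the resulting bound $\log\E[e^{tZ}]\le t^2\|A\|_{\mathrm F}^2/(1-2|t|\|A\|)$ is the standard sub-gamma profile, and the constrained optimization does glue into $\exp[-C\min(s^2\|A\|_{\mathrm F}^{-2},s\|A\|^{-1})]$ with an absolute constant (one can take $C=1/8$, e.g.\ via the Bernstein-type bound $\P(Z>s)\le\exp[-s^2/(4\|A\|_{\mathrm F}^2+4s\|A\|)]$), with the factor $2$ from the union bound over $\pm Z$. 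What your approach buys is a short, self-contained, fully elementary proof with explicit constants, which is all the lemma as stated requires since $X\sim N(0,I_n)$; what it gives up is the generality of the cited result, which covers arbitrary independent sub-Gaussian entries where rotational invariance is unavailable and the decoupling machinery becomes genuinely necessary.
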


\begin{lemma}[Lemma B.2 in \cite{yang2019global}]\label{lemma:pe_mat}
	Suppose $\rho(A-B K)<1$. Let $N(0,\Breve{\Sigma}_{K})$ be the stationary distribution 
	of $v$ specified in Lemma~\ref{lemma:chv}. Then $\Theta_{K}=\mathbb{E}_{v}\left\{\varphi(v)\left[\varphi(v)-\varphi\left(v^{\prime}\right)\right]^{\top}\right\}$ 
	is invertible and can be written as 
	\begin{equation}
	\Theta_{K}=\left(\Breve{\Sigma}_{K} \otimes_{s} \Breve{\Sigma}_{K}\right)-\left(\Breve{\Sigma}_{K} \Breve K^{\top}\right) \otimes_{s}\left(\Breve{\Sigma}_{K} \Breve K^{\top}\right)=\left(\Breve{\Sigma}_{K} \otimes_{s} \Breve{\Sigma}_{K}\right)\left(I-\Breve K^{\top} \otimes_{s} \Breve K^{\top}\right),
	\end{equation}
	where we use $A \otimes_{s} B$ to denote the symmetric Kronecker product of matrices $A$ and $B$. Furthermore, we have $\|\Theta_{K}\|\leq4\left(1+\|K\|_{\mathrm{F}}^{2}\right)^{2} \cdot\left\|\Sigma_{K}\right\|^{2}$, 
	and the matrix $\Omega_K$, defined in \eqref{eq:Ometa_gamma}, 
	has the minimum singular value lower bounded by a constant 
	$\tau_{K}^{*}>0$ that only depends on $\rho(A-B K)$, $\sigma$, and $\sigma_{\min }(\Phi)$.
\end{lemma}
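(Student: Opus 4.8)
The plan is to obtain $\Theta_K$ in closed form and then read off invertibility, the operator-norm bound, and the singular-value bound on $\Omega_K$. First I would invoke Lemma~\ref{lemma:chv}, which gives $v' = \breve K v + \breve\varepsilon$ with $\breve\varepsilon$ independent of $v$, $v \sim N(0,\breve\Sigma_K)$, and $\mathbb{E}[\breve\varepsilon\breve\varepsilon^\top] = \breve\Phi_K$. Since $\varphi(v) = \svec(vv^\top)$, Isserlis' theorem for the Gaussian fourth moment yields $\mathbb{E}_v[\varphi(v)\varphi(v)^\top] = 2(\breve\Sigma_K\otimes_{s}\breve\Sigma_K) + \svec(\breve\Sigma_K)\svec(\breve\Sigma_K)^\top$, where I use the defining identity $(M\otimes_{s}M)\svec(X) = \svec(MXM^\top)$ of the symmetric Kronecker product. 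For the cross term I would take the inner conditional expectation $\mathbb{E}[v'v'^\top\mid v] = \breve K vv^\top\breve K^\top + \breve\Phi_K$, so that $\mathbb{E}_v[\varphi(v)\varphi(v')^\top] = \mathbb{E}_v[\varphi(v)\varphi(v)^\top](\breve K^\top\otimes_{s}\breve K^\top) + \svec(\breve\Sigma_K)\svec(\breve\Phi_K)^\top$.

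The crucial simplification is the Lyapunov identity $\breve\Phi_K = \breve\Sigma_K - \breve K\breve\Sigma_K\breve K^\top$ from Lemma~\ref{lemma:chv}, equivalently $\svec(\breve\Phi_K) = (I - \breve K\otimes_{s}\breve K)\svec(\breve\Sigma_K)$. Substituting this into the cross term makes the two rank-one corrections $\svec(\breve\Sigma_K)\svec(\breve\Sigma_K)^\top$ cancel exactly, leaving (up to the scalar fixed by the paper's $\svec$ weighting convention) the factored form $\Theta_K = (\breve\Sigma_K\otimes_{s}\breve\Sigma_K)(I - \breve K^\top\otimes_{s}\breve K^\top)$. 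The equivalent difference form then follows from the mixed-product rule $(\breve\Sigma_K\otimes_{s}\breve\Sigma_K)(\breve K^\top\otimes_{s}\breve K^\top) = (\breve\Sigma_K\breve K^\top)\otimes_{s}(\breve\Sigma_K\breve K^\top)$. Invertibility is then immediate: $\breve\Sigma_K\otimes_{s}\breve\Sigma_K\succ 0$ because $\breve\Sigma_K\succ 0$, while the eigenvalues of $\breve K^\top\otimes_{s}\breve K^\top$ are the products $\lambda_i\lambda_j$ of eigenvalues of $\breve K$, all of modulus at most $\rho(\breve K)^2 = \rho(A-BK)^2 < 1$, so $I - \breve K^\top\otimes_{s}\breve K^\top$ is nonsingular and hence so is $\Theta_K$.

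For the norm bound I would work from the difference form, using $\|M\otimes_{s}M\|\le\|M\|^2$ together with $\|\breve\Sigma_K\breve K^\top\|\le\|\breve\Sigma_K\|$, which follows from $\breve K\breve\Sigma_K\breve K^\top\preceq\breve\Sigma_K$; combined with the estimate on $\|\breve\Sigma_K\|$ in \eqref{eq:Breve_Sigma_K_bound} this gives the stated constant $4(1+\|K\|_{\mathrm{F}}^2)^2\|\Sigma_K\|^2$ after collecting factors. For the lower bound on $\sigma_{\min}(\Omega_K)$ I would exploit the block lower-triangular structure in \eqref{eq:Ometa_gamma}: since the diagonal blocks $1$ and $\Theta_K$ are invertible, $\sigma_{\min}(\Omega_K) = 1/\|\Omega_K^{-1}\|$ is controlled by $\|\Theta_K^{-1}\|$ and $\|\Theta_K^{-1}\mathbb{E}_v[\varphi(v)]\|$. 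I would bound $\|\Theta_K^{-1}\|\le\|(\breve\Sigma_K\otimes_{s}\breve\Sigma_K)^{-1}\|\cdot\|(I - \breve K^\top\otimes_{s}\breve K^\top)^{-1}\|$, with $\|(\breve\Sigma_K\otimes_{s}\breve\Sigma_K)^{-1}\| = \sigma_{\min}(\breve\Sigma_K)^{-2}$ and $\breve\Sigma_K\succeq\breve\Phi_K$, and control $\|(I - \breve K^\top\otimes_{s}\breve K^\top)^{-1}\|$ through the Neumann series $\sum_{t\ge0}\|\breve K^t\|^2$, which converges because $\rho(\breve K) < 1$.

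The hard part will be showing that the resulting constant $\tau_K^*$ depends only on $\rho(A-BK)$, $\sigma$, and $\sigma_{\min}(\Phi)$: both the convergence rate of $\sum_{t}\|\breve K^t\|^2$ and the lower bound on $\sigma_{\min}(\breve\Phi_K)$ (coming from $v^\top\breve\Phi_K v \ge \sigma_{\min}(\Phi)\|x - K^\top u\|^2 + \sigma^2\|u\|^2$) must be collapsed to those three quantities despite the non-normality of $\breve K$, which is where the careful spectral bookkeeping concentrates. The other delicate step is the exact cancellation of the rank-one fourth-moment terms, since it hinges on matching the $\svec$ weighting convention with the symmetric Kronecker identities used above.
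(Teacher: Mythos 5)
The paper does not actually prove this lemma: its ``proof'' is a pointer to Lemma B.2 of \cite{yang2019global}, so your derivation is the only real argument on the table, and the core of it is sound and is the standard computation. Conditioning on $v$ in $v'=\breve{K}v+\breve{\varepsilon}$, applying the Gaussian fourth-moment identity to $\mathbb{E}[\varphi(v)\varphi(v)^\top]$, and cancelling the rank-one terms via the Lyapunov identity $\breve{K}\breve{\Sigma}_K\breve{K}^\top=\breve{\Sigma}_K-\breve{\Phi}_K$ does yield the factored form; note that under the stated $\operatorname{svec}$ convention your computation produces $\Theta_K = 2(\breve{\Sigma}_K\otimes_s\breve{\Sigma}_K)(I-\breve{K}^\top\otimes_s\breve{K}^\top)$, and the factor of $2$ absent from the displayed formula is a normalization discrepancy you correctly flag; it affects neither invertibility nor any downstream use. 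The invertibility argument via $\rho(\breve{K})=\rho(A-BK)<1$ and the eigenvalues $\lambda_i\lambda_j$ of $\breve{K}^\top\otimes_s\breve{K}^\top$, and the operator-norm bound via $\|\breve{\Sigma}_K\breve{K}^\top\|\le\|\breve{\Sigma}_K\|$, are both correct.

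The genuine gap is the final claim. You only sketch the lower bound on $\sigma_{\min}(\Omega_K)$, and the sketch, taken at face value, does not deliver a constant depending only on $\rho(A-BK)$, $\sigma$ and $\sigma_{\min}(\Phi)$: the Neumann-series bound $\|(I-\breve{K}^\top\otimes_s\breve{K}^\top)^{-1}\|\le\sum_{t\ge0}\|\breve{K}^t\|^2$ is controlled by the transient (non-normal) growth of $(A-BK)^t$, not by its spectral radius alone, and the estimate $v^\top\breve{\Phi}_K v\ge\sigma_{\min}(\Phi)\|x-K^\top u\|_2^2+\sigma^2\|u\|_2^2$ degrades like $(1+\|K\|^2)^{-1}$ once converted into a bound on $\sigma_{\min}(\breve{\Phi}_K)$. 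Both obstructions introduce dependence on $K$ beyond the three named quantities. You acknowledge this is ``the hard part,'' but acknowledging it is not the same as closing it; as written, the assertion about $\tau_K^*$ remains unproved, and it is exactly the part of the lemma that the paper leans on (through Lemma~\ref{lemma:esitimate_bound}) to convert the primal-dual gap into an estimation error in Theorem~\ref{pe}. To finish you would either have to reproduce the quantitative argument of Lemma B.2 in \cite{yang2019global}, or accept a $\tau_K^*$ that additionally depends on $\|K\|$ and on $\sup_{t}\|(A-BK)^t\|\rho^{-t}$ and propagate that dependence through the policy-evaluation guarantee.
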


\begin{lemma}[Lemma 5.4 in \cite{yang2019global}]\label{lemma:saddle_gap}
	Consider the minimax stochastic optimization problem with convex-concave objective function $H(x,y)$ defined by
	\begin{equation}
	\min _{x \in \mathcal{X}_x} \max _{y \in \mathcal{X}_y} H(x, y)=\mathbb{E}_{\zeta \sim \pi_{\zeta}}[\Psi(x, y ; \zeta)].
	\end{equation}
	Assume $\mathcal{X}_x$ and $\mathcal{X}_y$ are convex, $\|x-x^\prime\|_2\leq D$ for all $x, x^\prime \in\mathcal{X}_x$ and $\|y-y^\prime\|_2\leq D$ for all $y, y^\prime \in \mathcal{X}_y$, when $D>0$ is a constant. 
	Moreover, assume the stationary distribution $\pi_{\zeta}$ of $\zeta$ corresponds to a Markov chain that 
	has a mixing coefficients satisfying $\beta(k) \leq C_{\zeta} \cdot \rho^{k}$ for some constant $C_{\zeta}$, where 
	$\beta(k)$ is the $k$-th mixing coefficient. 
	In addition, we assume for all $\zeta\sim\pi_\zeta$, 
	the objective function $\Psi(x, y ; \zeta)$ is $L_1$-Lipschitz in both $x$ and $y$ almost surely, 
	$\nabla_{x} \Psi(x, y ;\zeta)$ is $L_2$-Lipschitz in y for all $x \in \mathcal{X}_x$, 
	and $\nabla_{y} \Psi(x, y ;\zeta)$ is $L_2$-Lipschitz in x for all $y \in \mathcal{X}_y$ 
	for some constant $L_1$ and $L_2$. Without loss of generality, we consider the case where the constants 
	$D$, $L_1$, and $L_2$ are all greater than $1$.

	Let $\mathcal{P}_{\mathcal{X}_{x}}$ and $\mathcal{P}_{\mathcal{X}_{y}}$ be the projection operators. 
	Consider the Gradient-based TD (GTD) algorithm with iterates
	\begin{align}
	x_{t}&=\mathcal{P}_{\mathcal{X}_{x}}\left[x_{t-1}-\alpha_{t} \nabla_{x} \Psi\left(x_{t-1}, y_{t-1} ; \zeta_{t-1}\right)\right],\\
	y_{t}&=\mathcal{P}_{\mathcal{X}_{y}}\left[y_{t-1}+\alpha_{t} \cdot \nabla_{y} \Psi\left(x_{t-1}, y_{t-1} ; \zeta_{t-1}\right)\right],
	\end{align}
	where step sizes $\alpha_{t} = \alpha/\sqrt{t}$, for $t \in [T]$, that returns
	\begin{align}
	\widehat{x}=\frac{\sum_{t=1}^T \alpha_{t} \cdot x_{t}}{\sum_{t=1}^T \alpha_{t}}, \quad \widehat{y}=\frac{\sum_{t=1}^T \alpha_{t} \cdot y_{t}}{\sum_{t=1}^T \alpha_{t}}
	\end{align}
	as the final output. Then there exists an absolute constant 
	$M > 0$ such that for any $ \delta \in (0,1)$, with probability at least $1-\delta$, 
	the primal-dual gap can be bounded as  
	\begin{align}
	\max _{y \in \mathcal{X}_y} \tilde G(\widehat{x}, y)-\min_{x \in \mathcal{X}_x} \tilde G (x, \widehat{y}) \leq \frac{M \cdot\left(D^{2}+L_{1}^{2}+L_{1} L_{2} D\right)}{\log (1 / \rho)} \cdot \frac{\log ^{2} T+\log (1 / \delta)}{\sqrt{T}}+\frac{M \cdot C_{\zeta} L_{1} D}{T}.
	\end{align}
\end{lemma}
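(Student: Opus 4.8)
The plan is to treat Lemma~\ref{lemma:saddle_gap} as a high-probability convergence guarantee for stochastic gradient descent--ascent on a convex--concave saddle point problem, the only nonstandard feature being that the samples $\{\zeta_t\}$ form a geometrically $\beta$-mixing Markov chain rather than an i.i.d. sequence. First I would combine the two projected update steps into a single inequality in the joint variable $z_t=(x_t,y_t)$. Using non-expansiveness of the Euclidean projections onto the convex sets $\mathcal{X}_x,\mathcal{X}_y$, for any comparator $z=(x,y)$ one obtains the one-step bound
\[
2\alpha_t\big[\langle \nabla_x\Psi_{t-1}, x_{t-1}-x\rangle-\langle \nabla_y\Psi_{t-1}, y_{t-1}-y\rangle\big]\le \|z_{t-1}-z\|_2^2-\|z_t-z\|_2^2+\alpha_t^2\big(\|\nabla_x\Psi_{t-1}\|_2^2+\|\nabla_y\Psi_{t-1}\|_2^2\big),
\]
where $\Psi_{t-1}:=\Psi(x_{t-1},y_{t-1};\zeta_{t-1})$. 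Invoking convexity in $x$ and concavity in $y$ of $\Psi(\cdot,\cdot;\zeta)$ converts the inner products into the gap $\Psi(x_{t-1},y;\zeta_{t-1})-\Psi(x,y_{t-1};\zeta_{t-1})$.

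Next I would sum over $t=1,\dots,T$ with weights $\alpha_t$, telescope the distance terms (bounded by $D^2$ since the feasible sets have diameter $D$), and bound the squared-gradient terms using the $L_1$-Lipschitz assumption. Dividing by $\sum_{t=1}^T\alpha_t$ and applying Jensen's inequality together with the convex--concavity of $H$ passes from the $\Psi$-gaps to the primal--dual gap $\max_y H(\widehat x,y)-\min_x H(x,\widehat y)$ of the averaged iterates. With $\alpha_t=\alpha/\sqrt t$ one has $\sum_{t\le T}\alpha_t\asymp\sqrt T$ and $\sum_{t\le T}\alpha_t^2\asymp\log T$, which already produces the $D^2$ and $L_1^2$ contributions scaled by $\log T/\sqrt T$.

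The crux is replacing each empirical summand $\Psi(x_{t-1},y;\zeta_{t-1})-\Psi(x,y_{t-1};\zeta_{t-1})$ by its population counterpart $H(x_{t-1},y)-H(x,y_{t-1})$; the residual $\sum_t\alpha_t\big[\Psi(\cdot;\zeta_{t-1})-H(\cdot)\big]$ is a weighted sum of mean-zero terms that are, however, \emph{not} independent. Here I would use a blocking (coupling) argument: partition the $T$ indices into consecutive blocks and, via the mixing bound $\beta(k)\le C_\zeta\rho^k$, couple the chain within each block to an independent copy at a total-variation cost geometric in the block length. On the coupled, near-independent blocks I would apply a Bernstein/Azuma-type concentration inequality. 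The $L_1$-Lipschitz property controls the magnitude of each summand, while the $L_2$-Lipschitz property of the gradients controls the drift of $(x_{t-1},y_{t-1})$ entering $\Psi$, so that the comparator arguments can be ``frozen'' across a block with controllable error. Optimizing the block length trades coupling error against concentration variance and produces the $1/\log(1/\rho)$ factor in the leading term and the $C_\zeta L_1 D/T$ residual.

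The main obstacle is precisely this mixing step: unlike the i.i.d. case, one cannot directly apply a martingale concentration bound to $\sum_t\alpha_t[\Psi(\cdot;\zeta_{t-1})-H(\cdot)]$, because the comparator points $(x_{t-1},y_{t-1})$ are themselves functions of the past samples, so the summands are neither independent nor an obvious martingale-difference sequence. Disentangling this dependence through the blocking decomposition, the geometric mixing rate, and the two Lipschitz conditions is the technical heart of the argument, and is where the rate $(\log^2 T+\log(1/\delta))/\sqrt T$ and the absolute constant $M$ emerge. Combining the deterministic telescoping bound with the high-probability control of the mixing residual then yields the stated guarantee with probability at least $1-\delta$.
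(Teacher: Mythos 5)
You should first note that the paper does not actually prove this lemma: it is imported verbatim as Lemma 5.4 of \citet{yang2019global}, and the in-paper ``proof'' is a one-line pointer to that reference. So there is no internal argument to compare against, and your sketch has to stand on its own. At the level of a roadmap it is the right one --- projected stochastic gradient descent--ascent, non-expansiveness of the projections, weighted telescoping with $\alpha_t=\alpha/\sqrt{t}$, Jensen's inequality to pass to the averaged iterates, and a blocking/coupling argument against the geometric $\beta$-mixing of $\{\zeta_t\}$ --- and this is essentially the ergodic saddle-point template that the cited proof follows.

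Two concrete gaps remain. First, your conversion of $\langle\nabla_x\Psi_{t-1},x_{t-1}-x\rangle-\langle\nabla_y\Psi_{t-1},y_{t-1}-y\rangle$ into the gap $\Psi(x_{t-1},y;\zeta_{t-1})-\Psi(x,y_{t-1};\zeta_{t-1})$ uses convexity--concavity of $\Psi(\cdot,\cdot;\zeta)$ \emph{per sample}, which is not among the stated hypotheses: only the population objective $H=\mathbb{E}_{\zeta}[\Psi]$ is assumed convex--concave, while $\Psi$ itself is only assumed Lipschitz. The standard repair is to decompose $\nabla\Psi(\cdot;\zeta_{t-1})=\nabla H(\cdot)+\bigl(\nabla\Psi(\cdot;\zeta_{t-1})-\nabla H(\cdot)\bigr)$ first, apply convexity--concavity to $H$, and push the difference into the mixing residual; your ordering happens to work in the specific GTD instance (where $\tilde G$ is linear in $\gamma$ and quadratic--concave in $\xi$ for every sample), but not under the lemma's hypotheses as written. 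Second, the blocking/concentration step --- which you correctly identify as the technical heart and the source of the $(\log^2T+\log(1/\delta))/\sqrt{T}$ rate, the $1/\log(1/\rho)$ factor, and the $C_\zeta L_1 D/T$ residual --- is only gestured at. To close it you must make precise how the $L_1$- and $L_2$-Lipschitz conditions bound the drift of $(x_t,y_t)$ over a block of length $k\asymp\log T/\log(1/\rho)$, so that the comparator points can be frozen across a block, and then run an Azuma/Bernstein bound on the decoupled blocks; without this the high-probability statement is not established. As a sketch the proposal is sound and consistent with the intended argument, but it is not yet a proof.
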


\begin{lemma}[Proposition 3.1 in \cite{tu2017least}]\label{lemma:beta_mixing}
	Let $X_{t+1} = AX_t+w_{t}$ be a linear dynamic system, where noise $w_{t}$ has a 
	Gaussian distribution, and $A\in \mathbb{R}^{n\times n}$ has spectral norm $\rho(A)<1$. 
	Denote the stationary distribution of $\{X_{t}\}_{t\geq0}$ by $N(0, \Sigma_A)$. 
	For any integer $k\geq0$, the $k$-th $\beta$-mixing coefficient is defined as
	\begin{equation}
	\beta(k)=\sup _{t \geq 0} \mathbb{E}_{x \sim \mathcal{D}_{t}}\left[\left\|\mathbb{P}_{X_{k}}\left(\cdot | X_{0}=x\right)-\mathbb{P}_{N\left(0, \Sigma_A\right)}(\cdot)\right\|_{\mathrm{TV}}\right],
	\end{equation}
	where the expectation is taken with respect to the marginal distribution $\mathcal{D}_{t}$ of $X_t$.
	Then it holds that for any $k\geq$ and $\rho \in (\rho(A),1)$, 
	\begin{equation}
	\beta(k) \leq C_{\rho, A} \cdot\left[\tr\left(\Sigma_{A}\right)+n \cdot(1-\rho)^{-2}\right]^{1 / 2} \cdot \rho^{k},
	\end{equation}
	where $C_{\rho, A}$ is a constant that depends on $\rho$ and $A$ only. 
	Therefore, $\{X_{t}\}_{t\geq0}$ is geometrically $\beta$-mixing.
\end{lemma}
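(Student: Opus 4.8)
The plan is to exploit that both the conditional law of $X_k$ given $X_0=x$ and the target stationary law $N(0,\Sigma_A)$ are Gaussian, and to bound their total variation distance via Pinsker's inequality together with an explicit Kullback--Leibler computation whose decay is driven entirely by powers of $A$.

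First I would write the conditional law explicitly. Iterating the recursion from $X_0=x$ gives $X_k = A^k x + \sum_{j=0}^{k-1} A^j \epsilon_{k-1-j}$, so $\mathbb{P}_{X_k}(\cdot \mid X_0 = x) = N(A^k x, \Sigma_k)$ with $\Sigma_k = \sum_{j=0}^{k-1} A^j \Phi (A^j)^\top$, where $\Phi$ is the noise covariance. The key algebraic identity is $\Sigma_A - \Sigma_k = \sum_{j \ge k} A^j \Phi (A^j)^\top = A^k \Sigma_A (A^k)^\top =: \Delta_k$, which shows $0 \preceq \Sigma_k \preceq \Sigma_A$ and turns every discrepancy between the two laws into a function of $A^k$.

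Next I would apply Pinsker's inequality, $\|N(A^k x, \Sigma_k) - N(0,\Sigma_A)\|_{\mathrm{TV}} \le \sqrt{\tfrac12 \mathrm{KL}}$, and expand the Gaussian KL divergence into a mean term $(A^k x)^\top \Sigma_A^{-1}(A^k x)$ and covariance terms $\tr(\Sigma_A^{-1}\Sigma_k) - n + \log(\det\Sigma_A/\det\Sigma_k)$. Substituting $\Sigma_k = \Sigma_A - \Delta_k$ and writing $M = \Sigma_A^{-1/2}\Delta_k\Sigma_A^{-1/2} \preceq I$, the covariance terms collapse to $\sum_i[-\lambda_i(M) - \log(1-\lambda_i(M))] = O(\tr(M^2)) = O(\|\Delta_k\|^2)$, while the mean term is at most $\|A^k\|^2\,\|\Sigma_A^{-1}\|\,\|x\|^2$. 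Invoking the spectral-radius bound $\|A^k\| \le C_{\rho,A}\,\rho^k$, valid for any $\rho \in (\rho(A),1)$ by Gelfand's formula, both contributions are $O(\rho^{2k})$, and the square root from Pinsker yields the advertised $\rho^k$ rate.

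Finally I would take expectation over $x \sim \mathcal{D}_t$ and the supremum over $t$. By Jensen's inequality the square root passes outside, so it suffices to bound $\sup_t \mathbb{E}_{x\sim\mathcal{D}_t}\|x\|^2$; since $\mathrm{Cov}(X_t) = A^t\,\mathrm{Cov}(X_0)(A^t)^\top + \Sigma_t \preceq \Sigma_A + \|A^t\|^2\,\mathrm{Cov}(X_0)$ and $\Sigma_A = \sum_j A^j\Phi(A^j)^\top$, this uniform second-moment bound contributes the factor $\tr(\Sigma_A) + n(1-\rho)^{-2}$ in the statement. The main obstacle is the combination of uniformity in $t$ with nondegeneracy of $\Sigma_k$: the log-det term is well defined only because $\Sigma_k$ stays positive definite and bounded below (via $\Sigma_k \succeq \Phi \succ 0$ for $k \ge 1$), and absorbing the transient covariance $A^t\,\mathrm{Cov}(X_0)(A^t)^\top$ uniformly over $t$ is exactly where the geometric summability $\sum_j \|A^j\|^2 \lesssim (1-\rho)^{-2}$ enters.
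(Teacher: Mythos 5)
The paper does not actually prove this lemma: it is imported verbatim as Proposition~3.1 of \cite{tu2017least}, and the ``proof'' in the appendix is only a pointer to that reference. Judged on its own, your argument is correct and is essentially the standard (and, up to packaging, the cited reference's) route: the conditional law is $N(A^k x,\Sigma_k)$ with $\Sigma_k=\sum_{j=0}^{k-1}A^j\Phi(A^j)^\top$, the identity $\Sigma_A-\Sigma_k=A^k\Sigma_A(A^k)^\top$ reduces everything to powers of $A$, Pinsker plus the closed-form Gaussian KL gives the total-variation bound, and Gelfand's estimate $\|A^k\|\le C_{\rho,A}\,\rho^k$ delivers the geometric rate; the Jensen step and the uniform second-moment bound over $t$ are also handled correctly. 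Two points are worth making explicit if you write this up. First, the KL is infinite at $k=0$ (where $\Sigma_0=0$), so that case must be disposed of by $\beta(0)\le 1$ and a large enough constant; and the bound $\sum_i[-\lambda_i(M)-\log(1-\lambda_i(M))]=O(\operatorname{tr}(M^2))$ requires $\lambda_{\max}(M)$ bounded away from $1$ uniformly in $k$, which you correctly obtain from $\Sigma_k\succeq\Phi\succ0$ for $k\ge1$ --- but this makes the hidden constant depend on $\sigma_{\min}(\Phi)$ (equivalently $\|\Sigma_A^{-1}\|$) and not on $\rho$ and $A$ alone as the lemma's phrasing suggests; this is harmless here since the lemma is only invoked with the fixed nondegenerate covariance $\breve\Phi_K$ of Lemma~\ref{lemma:chv}, where the chain is moreover started at stationarity so $\sup_t\mathbb{E}\|X_t\|_2^2=\operatorname{tr}(\Sigma_A)$. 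Second, the precise additive term $n(1-\rho)^{-2}$ is bookkeeping for the transient covariance and the Gelfand constant; your $\sum_j\|A^j\|^2\lesssim(1-\rho)^{-2}$ recovers the right order, which is all that is used downstream.
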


\begin{lemma}[\cite{nagar1959bias}]\label{lemma:quadform}
	Let $x\sim N(0,I_n)$, and 
	let $M$, $N$ be two symmetric matrices in $\mathbb{R}^{n\times n }$. It holds that,
	\begin{equation}
	\mathbb{E}\left[x^{\top} M x \cdot x^{\top} N x\right]=2 \tr\left(M N\right)+\tr\left(M\right) \cdot \tr\left(N\right).
	\end{equation}
\end{lemma}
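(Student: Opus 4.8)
The plan is to prove this by direct expansion in coordinates, reducing everything to the fourth moments of a standard Gaussian vector. Writing $x^\top M x = \sum_{i,j} M_{ij} x_i x_j$ and $x^\top N x = \sum_{k,l} N_{kl} x_k x_l$, the product expands to
\[
\mathbb{E}\left[x^\top M x \cdot x^\top N x\right] = \sum_{i,j,k,l} M_{ij} N_{kl}\, \mathbb{E}[x_i x_j x_k x_l],
\]
so the whole computation hinges on evaluating the scalar quantity $\mathbb{E}[x_i x_j x_k x_l]$.

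The key step is to invoke Isserlis' (Wick's) theorem for the zero-mean Gaussian $x \sim N(0,I_n)$, which gives $\mathbb{E}[x_i x_j x_k x_l] = \mathbb{E}[x_i x_j]\mathbb{E}[x_k x_l] + \mathbb{E}[x_i x_k]\mathbb{E}[x_j x_l] + \mathbb{E}[x_i x_l]\mathbb{E}[x_j x_k]$. Since $\mathbb{E}[x_i x_j] = \delta_{ij}$ for the identity covariance, this collapses to $\delta_{ij}\delta_{kl} + \delta_{ik}\delta_{jl} + \delta_{il}\delta_{jk}$. Substituting the three pairings back and summing over the free indices, I would identify each term as a trace: the first pairing yields $\sum_i M_{ii}\sum_k N_{kk} = \tr(M)\tr(N)$, while the second and third pairings each yield $\sum_{i,j} M_{ij}N_{ij}$ and $\sum_{i,j} M_{ij}N_{ji}$ respectively. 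Using the symmetry of $M$ and $N$, both of these cross terms equal $\tr(MN)$, so they combine into $2\tr(MN)$, producing the claimed identity $2\tr(MN) + \tr(M)\tr(N)$.

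An equivalent route, should one prefer to avoid quoting Isserlis' theorem directly, is to exploit rotational invariance of $N(0,I_n)$: for any orthogonal $O$, the law of $Ox$ is unchanged, so one may replace $M$ by $OMO^\top$ and $N$ by $ONO^\top$ while choosing $O$ to diagonalize $M$. The problem then reduces to evaluating $\mathbb{E}[y_i^2 y_k y_l]$ for a standard Gaussian, which is nonzero only when $k=l$, giving $3$ if $i=k$ and $1$ otherwise; assembling these and undoing the rotation via $\tr\!\big((OMO^\top)(ONO^\top)\big) = \tr(MN)$ recovers the same formula.

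The calculation is entirely routine, so there is no genuine obstacle; the only point requiring care is the bookkeeping of the three Gaussian pairings and the explicit appeal to the symmetry of $M$ and $N$ to merge the two cross terms into $2\tr(MN)$ rather than, say, $\tr(MN)+\tr(MN^\top)$. I would therefore present the index-based Isserlis argument as the primary proof, since it is the shortest and makes the symmetry usage transparent.
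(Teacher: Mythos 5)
Your proof is correct. The paper itself gives no argument for this lemma---it only cites \cite{nagar1959bias}---so there is nothing to compare against beyond noting that your Isserlis/Wick computation is the standard self-contained derivation: the three Gaussian pairings give $\delta_{ij}\delta_{kl}+\delta_{ik}\delta_{jl}+\delta_{il}\delta_{jk}$, the first contributes $\tr(M)\tr(N)$, and the two cross terms each reduce to $\tr(MN)$ once the symmetry of $M$ and $N$ is used, exactly as you say. Your remark that without symmetry one would instead obtain $\tr(MN)+\tr(MN^{\top})$ is the right point of care, and the rotational-invariance alternative you sketch is also sound.
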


\newpage
\section{ Notation Table}
\label{sec:notation_table}

\begin{table}[h]
  \captionsetup{justification=centering}

\caption{General Notation}
\begin{tabularx}{\textwidth}{rl}
\hline
  Notation &   Meaning \hfill \\
\hline
\hline
 $\mathcal{N}$ &   Set of agents of the multiagent system.\\
  $\mathcal{X}, \mathcal{U}, \mathcal{W}$ &  State, action and noise spaces. $\mathcal{X} = \mathbb{R}^{d}$, $\mathcal{U} = \mathbb{R}^{k}$, $\mathcal{W} = \mathbb{R}^{d}$.\\
  $A,B,Q,R$ &   System matrices of the multiagent system. \\
   $\mathbf{x}_{t+1}, \mathbf{u}_{t+1}, \mathbf{w}_{t+1}$ &   Aggregated state, action and noise vectors of the   multiagent system. \\
 &  at time $t$.\\
   $\mathcal{N}^i$ &   Set of agents of the $i$-th subpopulation.\\
   $x_{t}^{i}, u_{t}^{i}, w_{t}^{i}$&  State, action and noise vectors of the  $i$-th agent.\\
   
    $\mathcal{X}^{i}, \mathcal{U}^{i}, \mathcal{W}^{i}$&   State, action and noise spaces of the $i$-th subpopulation. $\mathcal{X}^{i}=\mathbb{R}^{d_{i}}$, \\
  &$\mathcal{U}^{i}=\mathbb{R}^{k_{i}}$, and $\mathcal{W}^{i}=\mathbb{R}^{d_{i}}$.\\
  
$\bar{x}_{t}^{l}, \bar{u}_{t}^{l}, \bar{w}_{t}^{l}$&Mean-field state, action and noise of  the $i$-th subpolulation. \\ 
&$\bar{x}_{t}^{l}:=\frac{1}{\left|\mathcal{N}^{l}\right|} \sum_{i \in \mathcal{N}^{l}} x_{t}^{i}, \quad \bar{u}_{t}^{l}:=\frac{1}{\left|\mathcal{N}^{l}\right|} \sum_{i \in \mathcal{N}^{l}} u_{t}^{i}, \quad  \bar{w}_{t}^{l}:=\frac{1}{\left|\mathcal{N}^{l}\right|} \sum_{i \in \mathcal{N}^{l}} w_{t}^{i}$, \\
$\tilde{x}_{t}^{i}, \tilde{u}_{t}^{i}, \tilde{w}_{t}^{i}$ &  State, action and noise of agent $i$  in auxiliary system $\mathcal{S}_{l}$.\\
& $\tilde{x}_{t}^{i} = {x}_{t}^{i} - \bar{x}_{t}^{l},\quad \tilde{u}_{t}^{i} = {u}_{t}^{i} - \bar{u}_{t}^{l}, \quad \tilde{w}_{t}^{i} = {w}_{t}^{i} - \bar{w}_{t}^{l}$. \\
 $ \bar{\mathbf{x}}_{t}, \bar{\mathbf{u}}_{t},\bar{\mathbf{u}}_{t}$& Aggregated mean-field state, action and noise vector.  \\
 &$\bar{\mathbf{x}}_{t}:=\operatorname{vec}\left(\bar{x}_{t}^{1}, \ldots, \bar{x}_{t}^{L}\right),\  \bar{\mathbf{u}}_{t}:=\operatorname{vec}\left(\bar{u}_{t}^{1}, \ldots, \bar{u}_{t}^{L}\right), \  \bar{\mathbf{w}}_{t}:=\operatorname{vec}\left(\bar{w}_{t}^{1}, \ldots, \bar{w}_{t}^{L}\right)$.\\ 
 $\tilde{\mathbf{x}}^l_t, \tilde{\mathbf{u}}^l_t, \tilde{\mathbf{w}}^l_t $ & Tuples $\tilde{\mathbf{x}}^l_t = (\tilde{x}_{t}^{i})_{i \in \mathcal{N}^l}, \tilde{\mathbf{u}}^l_t = (\tilde{u}_{t}^{i})_{i \in
		\mathcal{N}^l}$ and $\tilde{\mathbf{w}}^l_t = (\tilde{w}_{t}^{i})_{i \in
		\mathcal{N}^l}$. \\
		
$M^{i,j}$& The $(i,j)$-block of a matrix $M$.\\
$a^l, b^l$ & Diagonal blocks. $a^l = A^{i,i}=A^{j,j}, \ b^l = B^{i,i}=B^{j,j},\  \forall i,j \in \mathcal{N}^l$.\\
$\breve{a}^{l,k}, \breve{b}^{l,k}$ & Off-diagonal blocks.\\
& $\bar{a}^{l,k} = A^{i,n}=A^{j,m}, \ \bar{b}^{l,k} = B^{i,n}=B^{j,m},\  \forall i,j \in \mathcal{N}^l, n,m \in \mathcal{N}^k$.\\ 
$q^l, r^l$ & Diagonal blocks. $q^l = Q^{i,i}=Q^{j,j}, \ r^l = R^{i,i}=R^{j,j},\  \forall i,j \in \mathcal{N}^l$.\\
$\breve{q}^{l,k}, \breve{r}^{l,k}$ & Off-diagonal blocks. \\
&$\breve{q}^{l,k} = Q^{i,n}=Q^{j,m}, \ \breve{r}^{l,k} = R^{i,n}=R^{j,m},\  \forall i,j \in \mathcal{N}^l, n,m \in \mathcal{N}^k$.\\ 

$A_{l}, B_l$&Dynamics of the $l$-th auxiliary system. $A_{l}=a^{l}-\breve{a}^{l, l}$, $B_{l}=b_{t}^{l}-\breve{b}^{l, l}$.\\
$Q_{l}, R_{l}$& Cost matrices of the mean-field agent.
 $Q_{l}:=q^{l}-\breve{q}^{l, l}, \quad R_{l}:=r^{l}-\breve{r}^{l, l}$.\\
 
$\breve{A}_{l}$, $\breve{B}_{l}$ & $\breve{A}_{l}:=\operatorname{cols}\left(\left|\mathcal{N}^{1}\right| \breve{a}^{l, 1}, \ldots,\left|\mathcal{N}^{L}\right| \breve{a}^{l, L}\right)$, $\breve{B}_{l}:=\operatorname{cols}\left(\left|\mathcal{N}^{1}\right| \breve{b}^{l, 1}, \ldots,\left|\mathcal{N}^{L}\right| \breve{b}^{l, L}\right)$.\\
$\bar{A}, \bar{B}$& Dynamics of the mean-field agent.
\\
&$\bar{A}  :=\operatorname{diag}\left(A_{1}, \ldots, A_{L}\right)+\operatorname{rows}\left(\breve{A}_{1}, \ldots, \breve{A}_{L}\right)$, \\
&$\bar{B}:=\operatorname{diag}\left(B_{1}, \ldots, B_{L}\right)+\operatorname{rows}\left(\breve{B}_{1}, \ldots, \breve{B}_{L}\right)$.\\
$\breve{Q}^{l, k}, \breve{R}^{l, k}$&$\breve{Q}^{l, k}:=\left|\mathcal{N}^{l}\right|\left|\mathcal{N}^{k}\right| \breve{q}^{l, k} \quad \breve{R}^{l, k}:=\left|\mathcal{N}^{l}\right|\left|\mathcal{N}^{k}\right| \breve{r}^{l, k}$.\\
$\breve{Q}, \breve{R}$&Matrices whose $(i,j)$-th block are $\breve{Q}^{l, k}, \breve{R}^{l, k}$ respectively.\\

$\bar{Q}, \bar{R}$& Cost matrices of the mean-field agent.\\
&$\bar{Q}:=\breve{Q} + \operatorname{diag}\left(\left|\mathcal{N}^{1}\right| \cdot Q_{1}, \ldots,\left|\mathcal{N}^{L}\right| \cdot Q_{L}\right)$, \\
&$\bar{R}:=\breve{R} +\operatorname{diag}\left(\left|\mathcal{N}^{1}\right| \cdot R_{1}, \ldots,\left|\mathcal{N}^{L}\right| \cdot R_{L}\right)$.\\
\hline

\end{tabularx}

\end{table}

\newpage

\bibliography{rl_ref,paper} 

\end{document}